\documentclass[lettersize,journal]{IEEEtran}
\usepackage[caption=false,font=footnotesize]{subfig}

\usepackage[
    colorlinks=true,
    linkcolor=blue,
    citecolor=blue,
    urlcolor=blue
]{hyperref}

\usepackage{amsmath,amsfonts}
\usepackage{algorithmic}
\usepackage{algorithm}
\usepackage{array}
\usepackage{textcomp}
\usepackage{url}
\usepackage{verbatim}
\usepackage{graphicx}
\usepackage{cite}
\usepackage{changepage}
\usepackage{booktabs}
\usepackage{multirow}
\usepackage{amssymb} 
\usepackage{cases}
\usepackage{tikz}
\usetikzlibrary{angles,quotes,calc}

\newtheorem{theorem}{Theorem}

\newtheorem{lemma}{Lemma}

\newtheorem{remark}{Remark}

\newtheorem{definition}{Definition}

\newtheorem{problem}{Problem}
\newtheorem{proposition}{Proposition}
\newtheorem{example}{Example}
\usepackage[colorlinks=true, linkcolor=blue, citecolor=blue, urlcolor=blue]{hyperref}
\usepackage[capitalise]{cleveref}

\usepackage[capitalise]{cleveref}

\crefname{figure}{Fig.}{Figs.}
\Crefname{figure}{Fig.}{Figs.}
\crefname{subfigure}{Fig.}{Fig.}
\Crefname{subfigure}{Fig.}{Fig.}

\crefrangelabelformat{subfigure}{%
  #3#1#4--#5(\crefstripprefix{#1}{#2}#6}

\begin{document}

\title{\textbf{Finite-Time Curvature-Constrained Vector Field for  Saturation-Free Motion Planning of Nonholonomic Robots}}

\author{
Zhouru Xiao$^*$, Sha Luo$^*$, Yang Lu, H\'{e}ctor Garc\'{ı}a de Marina, Zhenyang Xu, Chaosong Gong,  Yaonan Wang and Weijia Yao


\thanks{Zhouru Xiao, Zhenyang Xu, Chaosong Gong, Yaonan Wang and  Weijia Yao are with the School of Artificial Intelligence and Robotics, Hunan University. China. Sha Luo is with the College of Information Science and Engineering, Hunan Normal University, China. Yang Lu is with the College of Systems Engineering and the State Key Laboratory of Digital Intelligent Modeling and Simulation, National University of Defense Technology, China. H\'{e}ctor Garc\'{ı}a de Marina is with the Department of Computer Engineering, Automation \& Robotics, and the Institute of Mathematics (IMAG), University of Granada, 18012 Granada, Spain. (e-mail: xzr798@hnu.edu.cn; luosha01085@hunnu.edu.cn; luyang18@mail.sdu.edu.cn; hgdemarina@ugr.es; 13884349948@hnu.edu.cn; gongchaosong@hnu.edu.cn; yaonan@hnu.edu.cn; wjyao@hnu.edu.cn). }%
\thanks{*These authors contributed to the work equally and should be regarded as co-first authors.}
}



\maketitle
\begin{abstract}
Accurately guiding a robot to a target configuration is a fundamental task in many engineering systems, which can be particularly challenging for nonholonomic mobile robots. Vector fields (VFs) offer a natural framework for motion planning under nonholonomic constraints. In addition to specifying the desired direction of motion throughout the workspace, a VF can be incorporated directly into the control law, thereby establishing a close connection between motion planning and feedback control. However, most existing VF-based algorithms have difficulty generating reference trajectories that explicitly satisfy curvature constraints. Actuator limits are therefore often handled through input saturation, which, unless incorporated into the controller design, may invalidate the theoretical stability guarantees and degrade closed-loop performance. Moreover, these algorithms typically guarantee only asymptotic convergence and do not provide an explicit bound on the settling time. To overcome these limitations, we propose a generalized motion planning and control framework composed of a finite-time curvature-constrained vector field (FT-C2VF) and a saturation-free control law. The proposed framework steers the robot either to the target configuration within finite time or through the target configuration periodically, depending on the specified motion objective. First, we construct the FT-C2VF using complementary gains to achieve finite-time convergence while ensuring that the curvature of the integral curve is continuous, bounded, and monotonically decreasing in the radial ratio. Second, we develop an almost globally $C^1$-smooth saturation-free control law that tracks the FT-C2VF without requiring Jacobian information, and guarantees that the control inputs remain within the prescribed actuator limits. Third, using dynamical systems theory, we establish the almost-global finite-time stability of the target equilibrium for the resulting closed-loop system. Finally, numerical simulations demonstrate the superior performance of the proposed method over representative VF-based approaches. Outdoor experiments on an Ackermann-steered vehicle further verify the effectiveness and robustness of our algorithm.
\end{abstract}

\begin{IEEEkeywords}
Finite time, curvature constraint, nonholonomic constraint, saturation-free control, vector field, motion planning.
\end{IEEEkeywords}

\section{Introduction}
\IEEEPARstart{S}{everal} robot motion planning tasks, including the autonomous navigation of unmanned ground vehicles \cite{11181562}, the aerial cruising of fixed-wing aircraft \cite{10534850}, and environmental monitoring by underwater vehicles \cite{0Path}, require feasible reference trajectories and corresponding control inputs to guide these robots to target configurations. However, such robots are subject to \textit{nonholonomic constraints}, which typically manifest as an inability to move instantaneously in arbitrary directions \cite{11415403,10113481,10960653}, thereby limiting their maneuverability within the configuration space.

Furthermore, \textit{actuator limits} impose physical bounds on the kinematics of robots. For example, the maximum angular velocity of differential-drive vehicles is constrained by the rotational speed limits of their motors \cite{10938343}, whereas the minimum turning radius of the Ackermann-steered vehicle is determined by the maximum front-wheel steering angle \cite{9451188}. Similarly, the minimum turning radius of fixed-wing aircraft is restricted by the maximum achievable roll angle and flight speed \cite{10980039,6341809}, while the steering capabilities of underwater vehicles are limited by the maximum available yaw moment \cite{9129770,article}. These physical limits establish an upper bound on the trajectory curvature, commonly referred to as the \textit{curvature constraint}. Nonholonomic constraints couple the robot’s position and orientation kinematics, while curvature constraints further restrict the set of admissible trajectories, thereby increasing the complexity of motion-planning design.

Among various motion planning methods, sampling-based and search-based approaches have been extensively studied \cite{844730,doi:10.1177/0278364911406761,10802168,10758212,6094900}. However, these approaches typically impose significant computational overhead, and the generated trajectories often require post-processing to satisfy kinematic constraints. Geometric approaches, including Dubins curves and Reeds-Shepp curves \cite{johnson1974application,reeds1990optimal}, can produce reference trajectories with bounded curvature, but the resulting curvature is discontinuous. Moreover, such methods generally do not integrate control input design for trajectory tracking and are usually implemented in an open-loop manner, rendering them sensitive to disturbances and necessitating frequent replanning. Although some optimization-based methods offer feedback capability and can explicitly incorporate multiple constraints \cite{9105082,9293348}, they are susceptible to local minima and often incur high online computational costs. Taken together, the simultaneous presence of nonholonomic and curvature constraints exposes the limitations of these approaches, making it difficult to achieve a favorable trade-off between closed-loop robustness and computational efficiency.

In contrast, VF-based methods impose minimal computational overhead and provide closed-loop feedback through the construction of reference direction fields, making them particularly effective for handling nonholonomic constraints. An instance of a non-gradient vector field is the dipole-like VF \cite{6160922}, whose normalized form exhibits favorable finite-time convergence properties. Furthermore, this field is naturally suited for tasks requiring simultaneous position and heading planning, as illustrated in Fig.~\ref{fig:intro1}. Building upon this foundation, subsequent studies \cite{6907210,7484276} extended this framework to unicycle models and multi-robot coordination. Additionally, in \cite{he2025novel}, the dipole-like VF was generalized to three-dimensional motion planning, enabling nonholonomic robots to reach a target position with a specified heading from almost all initial states (excluding a singular set of measure zero). Nonetheless, the geometric construction of  the dipole-like VF may produce unnecessarily circuitous reference trajectories, resulting in excessively long paths. Moreover, since the target position is a singular point of the dipole-like VF, the field direction becomes ill-defined near the goal, complicating controller design and implementation.

An alternative approach that circumvents the aforementioned limitations is the guiding vector field (GVF) \cite{7942030}. By superimposing the normal and tangent orthogonal vectors of a manifold, GVFs generate control inputs for robot kinematic models  such as single-integrator and double-integrator models \cite{5504176, doi:10.2514/1.34896}. Although initially developed for path-following tasks, GVFs can be adapted for motion planning by constructing a desired manifold (e.g., a circle \cite{11300826}) tangent to the target configuration and designing appropriate tracking controllers (see Fig.~\ref{fig:intro2}). Furthermore, recent research \cite{9312173} has demonstrated that singularities within GVFs can be completely eliminated through dimensional lifting. However, in contrast to the finite-time convergence properties offered by dipole-like VFs, most GVF formulations guarantee only asymptotic convergence. This implies that exact convergence of the generated trajectories to the desired manifold is achieved only as time approaches infinity \cite{9312173,11127300,9969449,8619780,7942030,10402025}, precluding an explicit settling-time guarantee and potentially limiting the temporal predictability of the motion-planning process.

\begin{figure}[!t]
    \centering
    \subfloat[]{
        \includegraphics[width=0.43\linewidth]{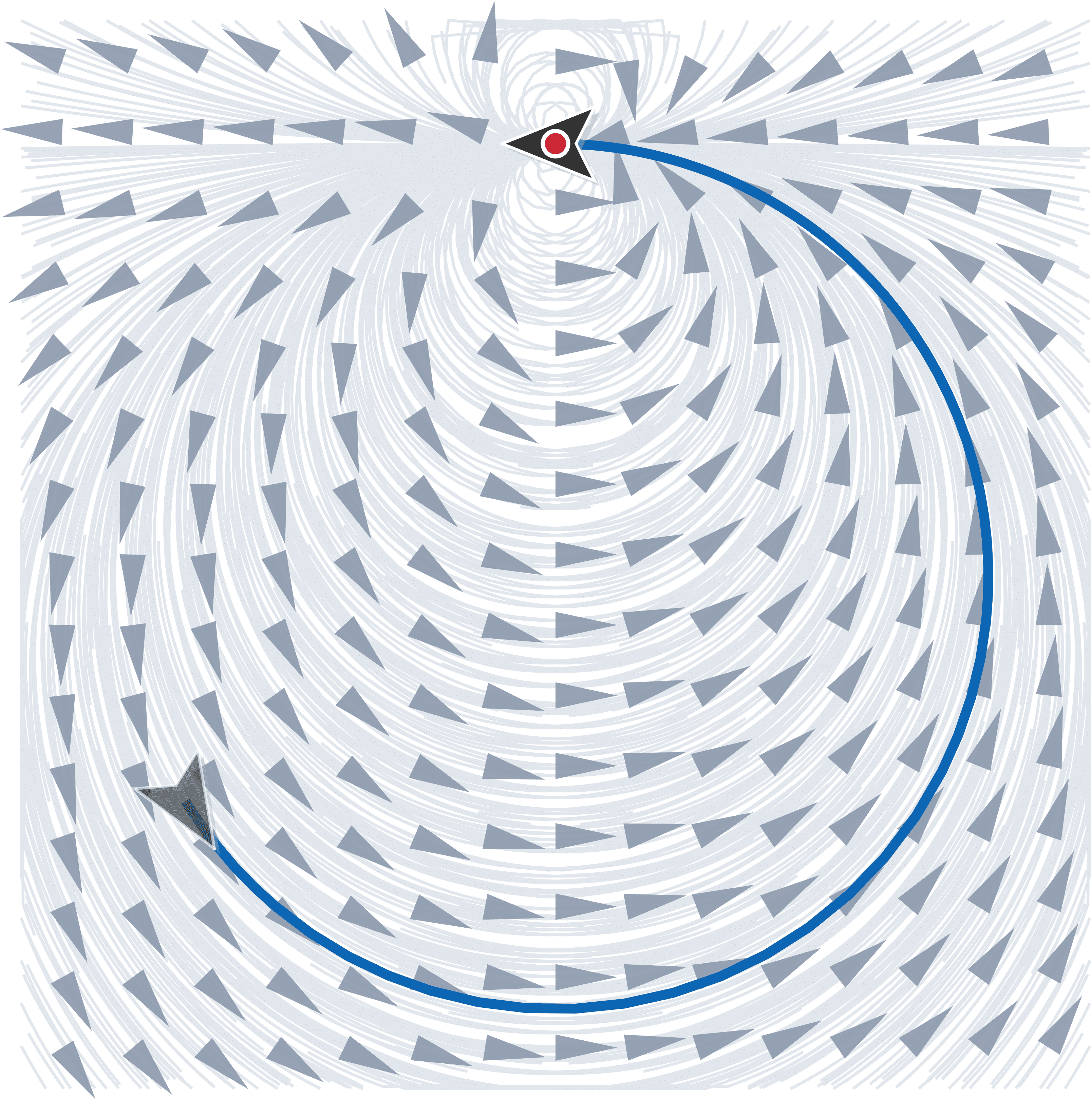}
        \label{fig:intro1}
    }
    \subfloat[]{
        \includegraphics[width=0.43\linewidth]{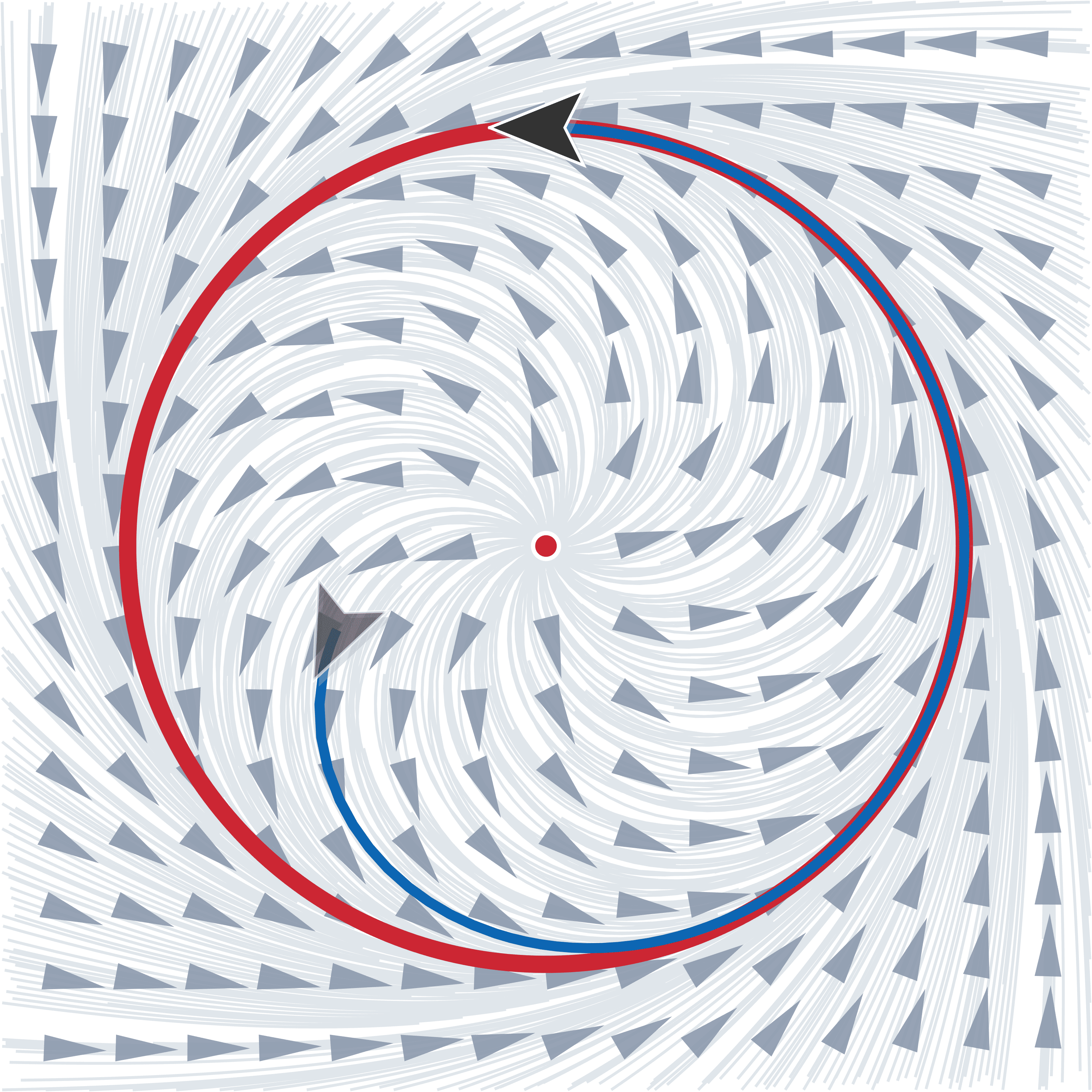}
        \label{fig:intro2}
    }
    \caption{Schematic of robot motion planning based on the dipole-like VF in (a) and the GVF in (b). Red points denote the singular points of the corresponding vector fields.}
    \label{fig:intro}
\end{figure}

Although VF-based methods are advantageous in handling nonholonomic constraints, the design of vector fields with explicit curvature guarantees remains relatively underexplored and presents substantial theoretical and practical challenges. For instance, in \cite{10886648}, a dipole-like VF is partitioned into two regions with separate control laws. However, this partitioning introduces curvature discontinuities at the boundaries, causing discontinuous control efforts. In practice, ensuring both a bounded trajectory curvature and continuous control inputs requires the integral curves of the vector field itself to have continuous and bounded curvature \cite{655130}. While the GVFs developed in \cite{2017Curvature,9476741} satisfy continuity and prescribed curvature bounds, the associated control laws fail to account for actuator limitations, causing tracking deviations that may still violate curvature constraints. Therefore, to guarantee the existence of feasible control laws under curvature constraints, the vector field and the control law must be \textit{co-designed}. However, existing co-design approaches still exhibit important limitations. For example, the method in \cite{11300826} constructs a bounded and continuous vector field and introduces a dynamic-gain feedback term to ensure a monotonic decrease in the orientation error. Nevertheless, it guarantees only asymptotic convergence, which can result in long convergence times in high-precision motion-planning tasks. Therefore, it remains an open problem to design a guiding vector field and an associated control law that enable a nonholonomic robot to reach a desired configuration in finite time while maintaining continuous and bounded trajectory curvature. Addressing this problem entails at least two key challenges.

The first challenge is the construction of a finite-time curvature-constrained vector field (FT-C2VF). Achieving finite-time convergence typically requires nonlinear terms that are non-Lipschitz near the desired manifold. Although these terms accelerate convergence, they may compromise the smoothness of the vector field and render its associated Jacobian singular (i.e., undefined or unbounded) in the vicinity of the manifold. Because the curvature of the vector field depends on the Jacobian matrix, these singularities make it difficult to guarantee continuous and bounded curvature.

The second challenge concerns the design of the control law based on the designed vector field, which is critical for vector field tracking and the maintenance of closed-loop stability. Many existing control laws for vector field tracking incorporate feedforward terms that correspond to directional changes of the vector field \cite{7942030,8619780,11300826,7484276,he2025novel,11127300,10540263}. The introduction of finite-time properties may induce abrupt changes in the control inputs due to Jacobian singularities. For example, frequent jumps in the angular velocity near the desired manifold have been reported in \cite{11246344}. Additionally, curvature constraints impose upper bounds on the angular velocities that depend on the linear velocities. The physical limitations of the actuators further restrict the angular velocity, which limits the ability of the robot to align with the FT-C2VF and potentially causes input saturation. Crucially, existing methods often employ input saturation to enforce curvature constraints \cite{11300826,9312173}, which may compromise theoretical stability guarantees and degrade closed-loop performance within the saturated region. Consequently, the design of a control law that guarantees almost global stability remains a critical challenge.

In this article, we address the two aforementioned challenges, namely the construction of finite-time curvature-constrained vector field (FT-C2VF) and the design of control laws that ensure almost global closed-loop stability. Specifically, based on a circle manifold, we refine the original guiding vector field proposed in \cite{7942030} by introducing a carefully designed pair of complementary gains to regulate its directional changes, guaranteeing that the integral curves of the vector field converge to the desired manifold in finite time, and that their curvature remains bounded and continuous. Building upon this FT-C2VF, we jointly design a saturation-free control law that eliminates the need for feedforward terms, thereby preventing input saturation and achieving almost global closed-loop stability. Furthermore, we propose a unified motion planning framework for heterogeneous nonholonomic robots, guiding them to either converge to the target configuration in finite time or periodically pass through it.

Here, we summarize the major contributions of our article.

First, we construct a novel guiding vector field based on a circle manifold by introducing a pair of complementary gains. By appropriately selecting the radius and the associated parameter, we show that this vector field addresses the first challenge. Namely, the integral curves of the vector field, serving as the reference trajectory for the robot, converge to the circle manifold in finite time and exhibit a continuous, bounded, and monotonically decreasing curvature (see Propositions \ref{propos:001} and \ref{propos:003}). Furthermore, an analytical expression for the curvature is provided to facilitate the co-design of the vector field and the control law (see Section \ref{sec:3}).

Second, in contrast to conventional controllers that rely on the Jacobian information (i.e., the change rate of  the vector field orientation) \cite{9312173,11300826,11246344,11127300,7942030,7484276,9969449}, we propose an almost globally $C^1$-smooth saturation-free control law without using Jacobian information. The proposed control law guarantees bounded trajectory curvature while preventing control input saturation (see Theorem \ref{thm:admissible_control}). Specifically, we first design a state-dependent feedback gain to generate a bounded commanded curvature, which directly dictates the actual curvature of the trajectory. In addition, a parallel-resistance structure is introduced to establish a dynamic upper bound on the linear velocity, allowing the controller to adaptively accommodate variations in trajectory curvature (see Section \ref{sec:4}).

Third, we rigorously establish that the closed-loop system admits an almost globally finite-time stable equilibrium under appropriate parameter design (see Theorem \ref{thm:finit_time}). This property holds for all initial configurations, excluding two sets of measure zero. The heading error between the robot and the vector field first vanishes in finite time, after which the robot reaches the circle manifold and converges exactly, rather than asymptotically, to the target configuration (see Theorem \ref{thm:velocity_convergence}).

Finally, we carry out numerical simulations to validate the theoretical results (see Sections \ref{sec:5.A} and \ref{sec:5.B}). In addition, Monte Carlo comparisons with existing VF-based motion-planning methods \cite{11300826,7484276,7942030,11246344} demonstrate the superior performance of the proposed algorithm in terms of saturation time and the total variation of angular velocity (see Section \ref{sec:5.C}). Moreover, we conduct hardware experiments using an Ackermann-steered vehicle, achieving fast, stable, and high-precision convergence to the target configuration (see Section \ref{sec:6}). 


The remainder of this article is organized as follows. Section~\ref{sec:2} introduces the curvature-constrained nonholonomic kinematic model and formalizes the finite-time generalized motion planning problem as a co-design problem involving a vector field and a control law. In Section~\ref{sec:3}, we present the structure of the FT-C2VF and rigorously prove its finite-time convergence while guaranteeing continuous and bounded curvature. To track the FT-C2VF, Section~\ref{sec:4} details the saturation-free control law and establishes almost-global finite-time convergence for the closed-loop system. In Section~\ref{sec:5}, we validate the theoretical results through numerical simulations and provide comparisons with existing VF-based methods. Section~\ref{sec:6} further validates the framework via hardware experiments on an Ackermann-steered vehicle. Finally, Section~\ref{sec:7} concludes the article and outlines future research. Before proceeding, some standard notations and basic concepts used throughout this article are introduced.

\textit{Notations:}  Given a positive integer $n$, we use boldface letters to denote vectors $\boldsymbol{v} \in \mathbb{R}^n$. The transpose and Euclidean norm of $\boldsymbol{v}$ are denoted by $\boldsymbol{v}^\top$ and $\|\boldsymbol{v}\|$, respectively. The symbol $:=$ denotes “is defined as”.  The distance between a point $\boldsymbol{p}_0 \in \mathbb{R}^n$ and a nonempty set $\mathcal{A} \subseteq \mathbb{R}^n$ is defined as $\mathrm{dist}(\boldsymbol{p}_0, \mathcal{A}) := \inf \{\|\boldsymbol{p}-\boldsymbol{p}_0\|:\boldsymbol{p} \in \mathcal{A}\}$. If $\boldsymbol{f}$ is a differentiable function of time $t$, then its time derivative is denoted by $\dot{\boldsymbol{f}}$, while the derivative of a univariate scalar function $f$ is denoted by $f'$. The notation $f(x)=O(g(x))$
as $x\to x_0$ means that there exist constants $C>0$ and $\delta>0$
such that $|f(x)|\le C|g(x)|$ for all $x$ satisfying
$0<|x-x_0|<\delta$. Let
$\mathbb{S}^1 := \mathbb R/(2\pi\mathbb{Z})$ denote the unit circle viewed as the quotient group. Whenever convenient, elements of $\mathbb{S}^1$ are represented by their principal values in $(-\pi,\pi]$. For a nonzero vector $\boldsymbol{v} = [v_x, v_y]^\top \in \mathbb{R}^2$, its orientation is defined as $\angle \boldsymbol{v} := \mathrm{atan2}(v_y, v_x) \in \mathbb{S}^1$, where $\mathrm{atan2}$ denotes the two-argument arctangent function.

\textit{Basic concepts:} A point $\boldsymbol{\xi}$ satisfying $\boldsymbol{\chi}(\boldsymbol{\xi}) = \mathbf{0}$ is a singular point of the vector field $\boldsymbol{\chi}:\mathbb{R}^n \to \mathbb{R}^n$ \cite[p.219]{lee2015smooth}, which corresponds to an equilibrium point of the corresponding autonomous ordinary differential equation $\dot{\boldsymbol{\xi}} = \boldsymbol{\chi}(\boldsymbol{\xi})$. A trajectory $\boldsymbol{\xi}: \left[0,+\infty \right) \to \mathbb{R}^n$  \textit{asymptotically converges} to a nonempty set $\mathcal{B}$ if for any $\epsilon > 0$, there exists $T > 0$ such that $\mathrm{dist}(\boldsymbol{\xi}(t),\mathcal{B}) < \epsilon$ for all $t > T$ \cite[Def.~4.1]{khalil2002nonlinear}. Given an initial condition $\boldsymbol{\xi}(0) = \boldsymbol{\xi}_0 \in \mathcal{N}$ within a domain $\mathcal{N} \subseteq \mathbb{R}^n$, the trajectory \textit{finite-time converges} to $\mathcal{B}$  if there exists a settling-time function $T:\mathcal{N} \to \left[0,+\infty \right)$ such that $\mathrm{dist}(\boldsymbol{\xi}(t),\mathcal{B}) = 0$ for all $t \ge T(\boldsymbol{\xi}_0)$ \cite[Def.~2.2]{doi:10.1137/S0363012997321358}.

\section{Problem Formulation}\label{sec:2}
\subsection{Kinematic Modeling and Curvature Constraints}
The configuration of a two-dimensional nonholonomic mobile robot is defined as $\boldsymbol{q} = [\boldsymbol{\xi}^\top, \theta]^\top \in \mathcal{C} = \mathbb{R}^2 \times \mathbb{S}^1$, where $\boldsymbol{\xi} = [x, y]^\top$ is the position, $\theta$ is the heading angle, and $\mathcal{C}$ denotes the configuration space. We define the heading and normal vectors as $\boldsymbol{d}(\theta) = [\cos\theta, \sin\theta]^\top$ and $\boldsymbol{n}(\theta) = [-\sin\theta, \cos\theta]^\top$, respectively. Assuming pure rolling without lateral slip, the robot is subject to the Pfaffian constraint $\boldsymbol{n}(\theta)^\top \dot{\boldsymbol{\xi}} = 0$, i.e., the lateral velocity is zero \cite[Ch.~13.3]{lynch2017modern}. Therefore, the nonholonomic kinematics can be expressed as
\begin{equation}\label{eq:kinematic_system}
    \dot{\boldsymbol{q}} = \begin{bmatrix} \boldsymbol{d}(\theta) \\ 0 \end{bmatrix} v + \begin{bmatrix} \mathbf{0}_{2 \times 1} \\ 1 \end{bmatrix} \omega,
\end{equation}
where the control input $\boldsymbol{u} = [v, \omega]^\top$ consists of the forward speed $v$ and the angular velocity $\omega$, with $v \in [v_{-}, v_{+}]$, $|\omega| \le \bar{\omega}$, $0 \le v_{-} \le v_{+}$, and $\bar{\omega} > 0$.

The trajectory curvature is mathematically defined as $\kappa = \omega /v$, where $\kappa \to \infty$ as $v \to 0$ for $\omega\neq 0$\cite[Ch.~11.5]{siciliano2009robotics}. To account for the aforementioned actuator limitations, we prescribe $\bar{\kappa} > 0$ as the maximum allowable curvature. Accordingly, the admissible control input set of the robot is defined as
\begin{equation}\label{eq:control_input_set}
\mathcal{U} = \left\{ [v,\omega]^\top \in \mathbb{R}^2:v \in [v_{-}, v_{+}],\ |\omega| \le \bar{\omega}, |\omega| \le \bar{\kappa}|v| \right\}.
\end{equation}
Notably, the admissible set $\mathcal{U}$ is highly versatile, providing a unified representation of the kinematic requirements of heterogeneous nonholonomic robots in the following two aspects.

\textit{1) Maximum Angular Velocity $\bar{\omega}$}: For differential drive robots, the maximum angular velocity $\bar{\omega}$ depends on the rotational speed difference between the two driving wheels. In this context, $\bar{\kappa}$ serves as a user-defined parameter satisfying $\bar{\kappa} v_{-} < \bar{\omega}$; otherwise,  $\mathcal{U}$ degenerates to $[v_{-}, v_{+}] \times [-\bar{\omega}, \bar{\omega}]$. In contrast, Ackermann-steered vehicles and fixed-wing aircraft are subject to a physical minimum turning radius $\rho$, which imposes a curvature bound $\bar{\kappa} = 1/\rho$. In such cases, $\bar{\omega} \ge \bar{\kappa}v_{-}$ serves as a predefined upper bound for the angular velocity; otherwise, the set $\mathcal{U}$ similarly degenerates to $[v_{-}, v_{+}] \times [-\bar{\omega}, \bar{\omega}]$.

\textit{2) Minimum Forward Speed  $v_-$:} Ground robots, such as differential drive robots and Ackermann-steered vehicles, can come to a complete stop, rendering $v_{-} = 0$ permissible. Conversely, fixed-wing aircraft must maintain forward flight to generate sufficient lift, requiring $v_{-} > 0$.

Thus, various nonholonomic robots with curvature constraints can be modeled by specifying $v_{-}, v_{+}, \bar{\omega}$ and $\bar{\kappa}$.

\subsection{Finite-Time Generalized Motion Planning}
Most VF-based motion-planning methods, such as those in \cite{11300826,10540263}, only guarantee asymptotic convergence to the target configuration without providing an explicit upper bound on the convergence time. To address heterogeneous kinematic constraints uniformly and achieve finite-time convergence, we formally define the finite-time generalized motion planning (FT-GMP) problem as follows.

\begin{problem}[FT-GMP problem]\label{pro:001}
    Given an initial configuration $\boldsymbol{q}_0 \in \mathcal{C}$ and a target configuration $\boldsymbol{q}_d \in \mathcal{C}$ with a desired terminal speed $v_d \in [v_-,v_+]$, the objective is to design control inputs $\boldsymbol{u} = [v, \omega]^\top \in \mathcal{U}$ such that the trajectory $\boldsymbol{q}(t)$ of system \eqref{eq:kinematic_system} satisfies the following two conditions.
    \begin{itemize}
    \item[1)] There exist a settling-time function $T_a: \mathcal{C} \to \left[0,+\infty \right)$ and a finite constant $\Delta T \ge 0$ such that for all $t \ge T_a(\boldsymbol{q}_0)$, there exists at least one time instant $t^* \in [t, t+\Delta T]$ satisfying $\boldsymbol{q}(t^*) = \boldsymbol{q}_d$.
    \item[2)] The speed matches the terminal speed whenever the trajectory $\boldsymbol{q}(t)$ reaches the target configuration $\boldsymbol{q}_d$, i.e., $v(t) = v_d$ for all $t \ge T_a(\boldsymbol{q}_0)$ such that $\boldsymbol{q}(t) = \boldsymbol{q}_d$.
    \end{itemize}
\end{problem}

This formulation unifies two representative robotic tasks by adjusting the terminal speed $v_d$ as follows.

\textit{1) Stationary Operation ($v_{d} = 0$):} The curvature constraint (i.e., $|\omega| \le \bar{\kappa} |v|$) strictly enforces $\omega = 0$ whenever $v = v_d = 0$. Consequently, it follows that  $\Delta T = 0$, $\boldsymbol{q}(t) \equiv \boldsymbol{q}_d$, and $v(t) \equiv 0$ for all $t \ge T_a(\boldsymbol{q}_0)$, meaning the robot remains stationary at the target configuration. 

\textit{2) Periodic Monitoring ($v_{d} > 0$):} The robot passes through $\boldsymbol{q}_d$ at speed $v_{d}$ and revisits this configuration periodically, with the recurrence period upper-bounded by $\Delta T$.

\subsection{Problem Reformulation Based on Vector Field}

To characterize curvature constraints on the integral curves of a vector field, we first introduce the following notion.
\begin{definition}\label{def:curvature_vf}
The curvature of the vector field at a nonsingular point is defined as the curvature of the integral curve of the vector field passing through that point. 
\end{definition}


\begin{lemma}\label{lem:curvature_vf}
Let $\boldsymbol{\chi}: \mathbb{R}^2 \to \mathbb{R}^2$ be continuously differentiable on $\mathbb{R}^2\setminus\mathcal{W}$, where $\mathcal{W}:=\{\boldsymbol{\xi}\in\mathbb{R}^2:\boldsymbol{\chi}(\boldsymbol{\xi})=\boldsymbol{0}\}$ is the singular set. Then, for any $\boldsymbol{\xi}\in\mathbb{R}^2\setminus\mathcal{W}$, the signed curvature of
$\boldsymbol{\chi}$ at $\boldsymbol{\xi}$ is given by
\begin{equation}\label{eq:curvature_vf}
    \kappa_{\chi}(\boldsymbol{\xi})
    =
    \frac{
    \left(\boldsymbol{E}\boldsymbol{\chi}(\boldsymbol{\xi})\right)^\top
    \boldsymbol{J}_{\chi}(\boldsymbol{\xi})
    \boldsymbol{\chi}(\boldsymbol{\xi})
    }{
    \|\boldsymbol{\chi}(\boldsymbol{\xi})\|^3
    },
\end{equation}
where $\boldsymbol{J}_{\chi}(\boldsymbol{\xi})
=\partial\boldsymbol{\chi}(\boldsymbol{\xi})/\partial\boldsymbol{\xi}$ and $\boldsymbol{E}= \left[\begin{smallmatrix} 0 & -1 \\ 1 & 0 \end{smallmatrix}\right]$.
\end{lemma}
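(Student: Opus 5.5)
Fix a nonsingular point $\boldsymbol{\xi}_0\in\mathbb{R}^2\setminus\mathcal{W}$. I will apply the classical signed-curvature formula for a regular planar curve, $\kappa = (\dot x\ddot y-\dot y\ddot x)/\|\dot{\boldsymbol{\xi}}\|^3$, to the integral curve of $\boldsymbol{\chi}$ through $\boldsymbol{\xi}_0$. Using the matrix $\boldsymbol{E}$, this formula reads $\kappa = (\boldsymbol{E}\dot{\boldsymbol{\xi}})^\top\ddot{\boldsymbol{\xi}}/\|\dot{\boldsymbol{\xi}}\|^3$. Indeed, $(\boldsymbol{E}\dot{\boldsymbol{\xi}})^\top = [-\dot y,\ \dot x]$, and hence $(\boldsymbol{E}\dot{\boldsymbol{\xi}})^\top\ddot{\boldsymbol{\xi}} = \dot x\ddot y - \dot y\ddot x$.

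First I must check that the integral curve exists and is regular and $C^2$ near $\boldsymbol{\xi}_0$. Since $\mathcal{W}$ is the zero set of a continuous map, it is closed, so $\mathbb{R}^2\setminus\mathcal{W}$ is open. On this open set $\boldsymbol{\chi}$ is $C^1$, hence locally Lipschitz. Picard--Lindel\"of then gives a unique solution $\boldsymbol{\xi}(t)$ of $\dot{\boldsymbol{\xi}}=\boldsymbol{\chi}(\boldsymbol{\xi})$ with $\boldsymbol{\xi}(0)=\boldsymbol{\xi}_0$, defined on some interval $(-\epsilon,\epsilon)$. By continuity this solution stays in a neighbourhood where $\boldsymbol{\chi}\neq\boldsymbol{0}$. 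So $\dot{\boldsymbol{\xi}}=\boldsymbol{\chi}(\boldsymbol{\xi}(t))$ is nonzero and the curve is regular. Moreover, $\dot{\boldsymbol{\xi}}$ is the composition of $C^1$ maps, so $\boldsymbol{\xi}\in C^2$. By the chain rule,
\begin{equation*}
\ddot{\boldsymbol{\xi}}(t)=\boldsymbol{J}_{\chi}(\boldsymbol{\xi}(t))\,\dot{\boldsymbol{\xi}}(t)=\boldsymbol{J}_{\chi}(\boldsymbol{\xi}(t))\,\boldsymbol{\chi}(\boldsymbol{\xi}(t)).
\end{equation*}

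Substituting $\dot{\boldsymbol{\xi}}(0)=\boldsymbol{\chi}(\boldsymbol{\xi}_0)$ and $\ddot{\boldsymbol{\xi}}(0)=\boldsymbol{J}_{\chi}(\boldsymbol{\xi}_0)\boldsymbol{\chi}(\boldsymbol{\xi}_0)$ into the curvature formula yields \eqref{eq:curvature_vf} at $\boldsymbol{\xi}_0$. By Definition \ref{def:curvature_vf}, this value is the curvature of $\boldsymbol{\chi}$ at $\boldsymbol{\xi}_0$.

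Two points remain.

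\textbf{Parametrization.} Signed curvature is invariant under orientation-preserving reparametrization, so the time parametrization is legitimate. The sign convention is the one induced by the flow direction, with positive curvature meaning a counterclockwise turn. Consistent with the kinematics $\kappa=\omega/v$, I would justify this by writing $\dot{\boldsymbol{\xi}}=\|\boldsymbol{\chi}\|\,\boldsymbol{d}(\phi)$, where $\phi=\angle\boldsymbol{\chi}$. Then
\begin{equation*}
\ddot{\boldsymbol{\xi}}=\tfrac{d}{dt}\|\boldsymbol{\chi}\|\,\boldsymbol{d}(\phi)+\|\boldsymbol{\chi}\|\,\dot\phi\,\boldsymbol{n}(\phi).
\end{equation*}
Since $\boldsymbol{E}\boldsymbol{d}(\phi)=\boldsymbol{n}(\phi)$, it follows that $(\boldsymbol{E}\dot{\boldsymbol{\xi}})^\top\ddot{\boldsymbol{\xi}}=\|\boldsymbol{\chi}\|^2\dot\phi$. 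Hence $\kappa=\dot\phi/\|\boldsymbol{\chi}\|$, which is the rate of heading change per unit arc length.

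\textbf{Main obstacle.} The only delicate step is regularity: uniqueness of the integral curve and existence of the second derivative away from $\mathcal{W}$. The openness argument and the local Lipschitz property above handle it, so no serious obstruction is expected.
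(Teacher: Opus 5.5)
Your proposal is correct and follows the paper's proof. Both apply the planar signed-curvature formula to the integral curve through $\boldsymbol{\xi}_0$ (the paper writes the numerator as $-\dot{\boldsymbol{\xi}}^\top\boldsymbol{E}\ddot{\boldsymbol{\xi}}$, which equals your $(\boldsymbol{E}\dot{\boldsymbol{\xi}})^\top\ddot{\boldsymbol{\xi}}$ because $\boldsymbol{E}^\top=-\boldsymbol{E}$) and substitute $\ddot{\boldsymbol{\xi}}(0)=\boldsymbol{J}_{\chi}(\boldsymbol{\xi}_0)\boldsymbol{\chi}(\boldsymbol{\xi}_0)$ from the chain rule. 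Your existence, regularity and sign-convention checks make explicit what the paper takes for granted; one small refinement is that openness of $\mathbb{R}^2\setminus\mathcal{W}$ needs only continuity of $\boldsymbol{\chi}$ at nonsingular points, which differentiability there already gives, not continuity on all of $\mathbb{R}^2$.
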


\begin{IEEEproof}
  See Appendix \ref{app:curvature_vf}.
\end{IEEEproof}

For a given target configuration $\boldsymbol{q}_d = [\boldsymbol{\xi}_d^\top, \theta_d]^\top$, a circle manifold tangent to this configuration is constructed as
\begin{equation}\label{eq:M_r}
         \mathcal{M}_r = \left\{ \boldsymbol{\xi} \in \mathbb{R}^2 :\|\boldsymbol{\xi} - \boldsymbol{c} \|^2 = r^2,\boldsymbol{c} = \boldsymbol{\xi}_d \pm r \boldsymbol{n}(\theta_d) \right\},
\end{equation}
where $r$ is the radius. Assume that a vector field is designed to converge to $\mathcal{M}_r$ in finite time while ensuring that the curvature does not exceed $\bar{\kappa}$. If a tracking controller enables the trajectory of the robot to exactly follow the integral curve of this vector field, the trajectory naturally satisfies the curvature constraint, and the robot converges to the target configuration along $\mathcal{M}_r$. Such a vector field is defined as a finite-time curvature-constrained vector field (FT-C2VF), which is formally specified below.

\begin{definition}[FT-C2VF]\label{def1}
    Let $\mathcal{W} = \{\boldsymbol{\xi} \in \mathbb{R}^2: \boldsymbol{\chi}(\boldsymbol{\xi}) = \boldsymbol{0}\}$ denote the singular set of a vector field $\boldsymbol{\chi}: \mathbb{R}^2 \to \mathbb{R}^2$. $\boldsymbol{\chi}$ is defined as an FT-C2VF with respect to $\mathcal{M}_r$ if the differential equation $\dot{\boldsymbol{\xi}} = \boldsymbol{\chi}(\boldsymbol{\xi})$ fulfills the following properties.
        \begin{itemize}
        \item[1)] \textit{Curvature Constraint:} For all points $\boldsymbol{\xi} \in \mathbb{R}^2 \setminus \mathcal{W}$, the curvature of the vector field satisfies $|\kappa_\chi(\boldsymbol{\xi})| \le \bar{\kappa}$.
        \item[2)] \textit{Finite-time Convergence:} For all initial points $\boldsymbol{\xi}_0 \in \mathbb{R}^2 \setminus \mathcal{W}$, there exists a settling-time function $T_b: \mathbb{R}^2 \to [0, +\infty)$ such that the distance between the integral curve $\boldsymbol{\xi}(t)$ and the manifold $\mathcal{M}_r$ becomes zero in finite time; that is, $\mathrm{dist}(\boldsymbol{\xi}(t), \mathcal{M}_r) = 0$ for all $t \ge T_b(\boldsymbol{\xi}_0)$.
    \end{itemize}
\end{definition}

We now shift our objective toward the simultaneous design of the FT-C2VF and a tracking control law. Consequently, Problem \ref{pro:001} is recast as the co-design of the FT-C2VF and the control law (CL) as described below.

\begin{problem}[VF-CL Co-Design problem]\label{pro:002}
    Given an initial configuration $\boldsymbol{q}_0 = [\boldsymbol{\xi}_0^\top,\theta_0]^\top \in \mathcal{C}$ and a target configuration $\boldsymbol{q}_d = [\boldsymbol{\xi}_d^\top,\theta_d]^\top \in \mathcal{C}$ with a desired terminal speed $v_d \in [v_-,v_+]$, the VF-CL Co-Design problem is to jointly design an FT-C2VF $\boldsymbol{\chi}$ with $\mathcal{M}_r$ satisfying \eqref{eq:M_r} and $\theta_d = \angle \boldsymbol{\chi}(\boldsymbol{\xi}_d)$, and a control law $\boldsymbol{u} = [v,\omega]^\top \in \mathcal{U}$ for the system \eqref{eq:kinematic_system} such that the following two conditions are satisfied.
     \begin{itemize}
     \item[C1] \textit{(Heading Alignment):} There exists a settling-time function $T_c:\mathcal{C} \to \left[0,+\infty \right)$ such that the heading of the robot aligns with the direction of the vector field; that is, $\theta(t) = \angle\boldsymbol{\chi}(\boldsymbol{\xi}(t))$ for all $t \ge T_c(\boldsymbol{q}_0)$. 
     \item[C2] \textit{(Speed Matching):} There exists a settling-time function $T_d:\mathcal{C} \to \left[0,+\infty \right)$ such that the speed matches the terminal speed whenever the trajectory $\boldsymbol{q}(t)$ reaches the target configuration $\boldsymbol{q}_d$ after $T_d(\boldsymbol{q}_0)$, and maintains a positive speed otherwise; that is, $v(t) = v_d$ for all $t \ge T_d(\boldsymbol{q}_0)$ when $\boldsymbol{q}(t) = \boldsymbol{q}_d$, and $v(t) > 0$ for all $t \ge 0$ whenever $\boldsymbol{q}(t) \neq \boldsymbol{q}_d$.
    \end{itemize}
\end{problem}

\begin{remark}
Problem \ref{pro:002} serves as a constructive sufficient condition for solving Problem \ref{pro:001}. This reformulation decouples the complex FT-GMP problem into sequential and tractable subtasks. C1 ensures that the heading of the robot aligns with the FT-C2VF in finite time. Upon alignment, the trajectory of the robot is governed by the vector field, which naturally guides the robot to converge to the circle manifold $\mathcal{M}_r$. Meanwhile, C2 guarantees a positive speed before reaching the target configuration to prevent premature stalling. Furthermore, because $\mathcal{M}_r$ is a closed manifold, this continuous motion with a positive speed ensures the robot inevitably reaches $\boldsymbol{q}_d$, remaining stationary if $v_d = 0$, or repeatedly passing through $\boldsymbol{q}_d$ if $v_d > 0$. By default, it is assumed that $v_d = v_-$; otherwise, a new lower bound of the linear velocity can simply be defined as $v'_- = v_d$. \hfill $\blacktriangleleft$
\end{remark}

\section{Design of Finite-Time Curvature-Constrained Vector Field}\label{sec:3}
In this section, we construct a guiding vector field based on the circle manifold in \eqref{eq:M_r} and show that it is an FT-C2VF by appropriately selecting its parameters and radius.

Without loss of generality, the coordinate frame can be translated such that the center of $\mathcal{M}_r$ coincides with the origin. Subsequently, the manifold can be expressed as $\mathcal{M}_r = \{\boldsymbol{\xi} \in \mathbb{R}^2: \phi(\boldsymbol{\xi}) = 0\}$, where the level set function $\phi:\mathbb{R}^2\to\mathbb{R}$ is defined as $\phi(\boldsymbol{\xi}) = \|\boldsymbol{\xi}\|^2 - r^2$. The guiding vector field $\boldsymbol{\chi}:\mathbb{R}^2\to\mathbb{R}^2$ is designed as follows:
\begin{equation}\label{eq:FT-C2VF}
\boldsymbol{\chi}(\boldsymbol{\xi})=\psi_{\tau}(\eta(\boldsymbol{\xi}))\boldsymbol{E}\nabla\phi(\boldsymbol{\xi})+\psi_{\nu}(\eta(\boldsymbol{\xi}))\nabla\phi(\boldsymbol{\xi}),
\end{equation}
where $\boldsymbol{E}\in SO(2)$ is the $90^\circ$ rotation matrix $\left[\begin{smallmatrix} 0 & -1 \\ 1 & 0 \end{smallmatrix}\right]$, and $\nabla\phi \in \mathbb{R}^{2}$ is the gradient of $\phi$ with respect to $\boldsymbol{\xi}$. The \emph{radial ratio function} $\eta:\mathbb{R}^2 \to \left[0,+\infty\right)$ is defined as $\eta(\boldsymbol{\xi})=\|\boldsymbol{\xi}\|/r$, which ensures that $\mathrm{dist}(\boldsymbol{\xi}(t),\mathcal{M}_r) = 0$ when $\eta(\boldsymbol{\xi}) = 1$. Furthermore, $\psi_{\tau}:[0,+\infty)\to[0,1]$ is a non-negative \emph{tangential gain function} satisfying $\psi_{\tau}(1)>0$, and $\psi_{\nu}:[0,+\infty)\to[-1,1]$ is a decreasing \emph{normal gain function} satisfying $\psi_{\nu}(1)=0$. For all $\eta \in \left[0,+\infty\right)$, these two functions are designed to satisfy
\begin{equation}\label{eq:equal_psi}
     \psi_{\tau}^2(\eta)+\psi_{\nu}^2(\eta)=1. 
\end{equation}
It follows that 
\begin{equation}\label{eq:daxiao}
    \|\boldsymbol{\chi}(\boldsymbol{\xi})\| =\sqrt{\psi^2_\tau+\psi^2_\nu} \|\nabla \phi(\boldsymbol{\xi})\|= 2\|\boldsymbol{\xi}\|,
\end{equation}
 which indicates that $\boldsymbol{\xi}=\boldsymbol{0}$ is the unique singular point; that is, the singular set in this translated frame is $\mathcal{W}=\{\boldsymbol{0}\}$. Similar to classical guiding vector fields \cite[eq.~(9)]{7942030}, \cite[eq.~(2)]{9312173}, \cite[eq.~(13)]{7172278}, the first term of the proposed vector field is tangent to the desired manifold $\mathcal{M}_r$, thereby enabling the robot to move along it. The second term is perpendicular to the first, driving the robot toward the desired manifold. Consequently, the combined vector field simultaneously forces the robot to move toward and along the desired manifold. Unlike existing methods, the gain functions in our approach are specifically designed to yield a tractable analytical expression for the curvature of the vector field, as shown in Lemma \ref{lem:001}.

\begin{lemma}\label{lem:001}
     For any point $\boldsymbol{\xi}\in\mathbb{R}^2 \setminus \mathcal{W}$,
    let $\eta=\eta(\boldsymbol{\xi})$. The signed curvature of
    $\boldsymbol{\chi}$ depends only on the radial ratio $\eta$ and is
    therefore well defined as a scalar function
    $\kappa_\chi:(0,+\infty)\to\mathbb R$. Moreover, it is given by
    \begin{equation}\label{eq:lemma1}
    \kappa_\chi(\eta) = \frac{1}{r} \left( \frac{\psi_{\tau}(\eta)}{\eta} + \psi_{\tau}'(\eta) \right).
    \end{equation}
\end{lemma}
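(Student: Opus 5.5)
The plan is to apply Lemma~\ref{lem:curvature_vf} directly to the field \eqref{eq:FT-C2VF} in the translated frame, where $\nabla\phi(\boldsymbol{\xi})=2\boldsymbol{\xi}$. This gives $\boldsymbol{\chi}(\boldsymbol{\xi})=2\psi_\tau(\eta)\boldsymbol{E}\boldsymbol{\xi}+2\psi_\nu(\eta)\boldsymbol{\xi}$, and by \eqref{eq:daxiao} the denominator in \eqref{eq:curvature_vf} is $\|\boldsymbol{\chi}\|^3=8\|\boldsymbol{\xi}\|^3$. Throughout I assume $\psi_\tau,\psi_\nu$ are differentiable, which the statement implicitly requires. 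I will use the Jacobian-based formula rather than a polar-angle argument $\beta=\mathrm{atan2}(\psi_\tau,\psi_\nu)$. The angle approach requires dividing by $\psi_\tau$ or $\psi_\nu$, whereas the Jacobian computation avoids any case split at points where one of the gains vanishes.

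\textbf{Step 1 (Jacobian).} On $\mathbb{R}^2\setminus\{\boldsymbol{0}\}$ we have $\nabla\eta=\boldsymbol{\xi}/(r\|\boldsymbol{\xi}\|)$. Hence
\begin{equation*}
\boldsymbol{J}_\chi=2\psi_\tau\boldsymbol{E}+2\psi_\nu\boldsymbol{I}+2\left(\psi_\tau'\boldsymbol{E}\boldsymbol{\xi}+\psi_\nu'\boldsymbol{\xi}\right)\nabla\eta^\top .
\end{equation*}

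\textbf{Step 2 (numerator).} I split $(\boldsymbol{E}\boldsymbol{\chi})^\top\boldsymbol{J}_\chi\boldsymbol{\chi}$ into the part coming from $2\psi_\tau\boldsymbol{E}+2\psi_\nu\boldsymbol{I}$ and the rank-one part.

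For the first part I use $\boldsymbol{E}^\top\boldsymbol{E}=\boldsymbol{I}$ and $(\boldsymbol{E}\boldsymbol{\chi})^\top\boldsymbol{\chi}=0$. This part then reduces to $2\psi_\tau\|\boldsymbol{\chi}\|^2=8\psi_\tau\|\boldsymbol{\xi}\|^2$.

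For the rank-one part I first compute the scalar $\nabla\eta^\top\boldsymbol{\chi}=2\psi_\nu\eta$, using $\boldsymbol{\xi}^\top\boldsymbol{E}\boldsymbol{\xi}=0$. I then compute $\boldsymbol{E}\boldsymbol{\chi}=2(-\psi_\tau\boldsymbol{\xi}+\psi_\nu\boldsymbol{E}\boldsymbol{\xi})$, which gives $(\boldsymbol{E}\boldsymbol{\chi})^\top(\psi_\tau'\boldsymbol{E}\boldsymbol{\xi}+\psi_\nu'\boldsymbol{\xi})=2\|\boldsymbol{\xi}\|^2(\psi_\nu\psi_\tau'-\psi_\tau\psi_\nu')$. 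The rank-one contribution is therefore $8\eta\|\boldsymbol{\xi}\|^2\psi_\nu(\psi_\nu\psi_\tau'-\psi_\tau\psi_\nu')$.

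\textbf{Step 3 (key simplification).} This is where the constraint \eqref{eq:equal_psi} enters, and it is the only non-mechanical step. Differentiating \eqref{eq:equal_psi} gives $\psi_\tau\psi_\tau'+\psi_\nu\psi_\nu'=0$, so $-\psi_\nu\psi_\tau\psi_\nu'=\psi_\tau^2\psi_\tau'$. It follows that
\begin{equation*}
\psi_\nu(\psi_\nu\psi_\tau'-\psi_\tau\psi_\nu')=\psi_\tau'(\psi_\nu^2+\psi_\tau^2)=\psi_\tau'.
\end{equation*}
The numerator is thus $8\|\boldsymbol{\xi}\|^2(\psi_\tau+\eta\psi_\tau')$.

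\textbf{Step 4 (conclusion).} Dividing by $8\|\boldsymbol{\xi}\|^3$ and substituting $\|\boldsymbol{\xi}\|=r\eta$ yields $\kappa_\chi=\frac{1}{r}\left(\frac{\psi_\tau}{\eta}+\psi_\tau'\right)$. This expression depends on $\boldsymbol{\xi}$ only through $\eta$, so $\kappa_\chi$ is well defined as a function on $(0,+\infty)$, which is exactly \eqref{eq:lemma1}.

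The main obstacle is only bookkeeping: keeping the signs of $\boldsymbol{E}^2=-\boldsymbol{I}$ correct and recognizing the Step~3 identity.
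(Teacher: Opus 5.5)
Your proof is correct and is essentially the paper's computation: since $\ddot{\boldsymbol{\xi}}=\boldsymbol{J}_\chi\boldsymbol{\chi}$ along integral curves, your rank-one term $2(\psi_\tau'\boldsymbol{E}\boldsymbol{\xi}+\psi_\nu'\boldsymbol{\xi})\nabla\eta^\top\boldsymbol{\chi}$ and your frozen-gain term are exactly the two terms of the paper's $\ddot{\boldsymbol{\xi}}$ in \eqref{eq:ddot}, and the decisive step is the same differentiated constraint $\psi_\tau\psi_\tau'+\psi_\nu\psi_\nu'=0$. The differences are minor: you read off well-definedness directly from the explicit formula instead of using the paper's separate rotational-equivariance argument, and your multiplicative Step~3 avoids the paper's division by $\psi_\nu$ in \eqref{eq:kn}, which is undefined at $\eta=1$.
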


\begin{IEEEproof}
    See Appendix \ref{app:001}.
\end{IEEEproof}

While the geometric curvature $\kappa_\chi$ naturally describes the bending of the integral curves along the tangential direction (e.g., $\kappa_\chi > 0$ and $\kappa_\chi < 0$ indicate left and right turns, respectively), a comprehensive geometric characterization of the vector field also requires quantifying its spatial variation in the transversal direction. To this end, we introduce the \textit{orthogonal curvature} $\kappa_\perp$, which is defined as the curvature of the new vector field $\boldsymbol{E\chi}$ obtained by rotating each vector by $90^\circ$ of the original vector field $\boldsymbol{\chi}$, and it characterizes the spatial variation of the original vector field $\boldsymbol{\chi}$ along the “orthogonal direction”. Benefiting from the structural consistency between the tangential and normal components in the proposed FT-C2VF, $\kappa_\perp$ admits a dual formulation with the same structure as that of $\kappa_\chi$. Namely, they satisfy a fundamental geometric identity, as established in the following proposition.

\begin{proposition}\label{cor:001}
The orthogonal curvature $\kappa_\perp:[0, +\infty) \to \mathbb{R}$ is given by
\begin{equation}\label{eq:kappa_prep}
\kappa_\perp(\eta) = \frac{1}{r} \left( \frac{\psi_{\nu}(\eta)}{\eta} + \psi_{\nu}'(\eta) \right).
\end{equation}
Moreover, the following identity holds for all $\eta \in (0,+\infty)$:
\begin{equation}\label{eq:identity}
    \kappa_\chi\psi_\tau+\kappa_\perp\psi_\nu = \frac{1}{r\eta}.
\end{equation}
\end{proposition}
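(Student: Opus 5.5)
The plan is to observe that the rotated field $\boldsymbol{E}\boldsymbol{\chi}$ has exactly the same structure as $\boldsymbol{\chi}$ in \eqref{eq:FT-C2VF}, with the roles of the two gains exchanged, and then to reuse the formula of Lemma~\ref{lem:001}. Since $\boldsymbol{E}^2=-\boldsymbol{I}$,
\begin{equation*}
\boldsymbol{E}\boldsymbol{\chi}(\boldsymbol{\xi}) = \psi_\nu(\eta)\,\boldsymbol{E}\nabla\phi(\boldsymbol{\xi}) + \bigl(-\psi_\tau(\eta)\bigr)\nabla\phi(\boldsymbol{\xi}).
\end{equation*}
So $\boldsymbol{E}\boldsymbol{\chi}$ is a field of the form \eqref{eq:FT-C2VF} with tangential gain $\tilde\psi_\tau=\psi_\nu$ and normal gain $\tilde\psi_\nu=-\psi_\tau$. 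These still satisfy $\tilde\psi_\tau^2+\tilde\psi_\nu^2=1$, and $\|\boldsymbol{E}\boldsymbol{\chi}\|=2\|\boldsymbol{\xi}\|$, so the singular set is again $\{\boldsymbol{0}\}$.

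The first step is to check that the derivation in Appendix~\ref{app:001} behind \eqref{eq:lemma1} uses only the algebraic structure of \eqref{eq:FT-C2VF}, the normalization \eqref{eq:equal_psi}, and differentiability of the gains. It should not use the sign, range or monotonicity assumptions on $\psi_\tau,\psi_\nu$. If that is true, Lemma~\ref{lem:001} applies verbatim to $\boldsymbol{E}\boldsymbol{\chi}$ and gives \eqref{eq:kappa_prep} immediately.

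If I cannot rely on that structural reading, I would instead recompute directly with Lemma~\ref{lem:curvature_vf}, in polar-type coordinates. The key quantities are the unit radial vector $\hat{\boldsymbol{\xi}}=\boldsymbol{\xi}/\|\boldsymbol{\xi}\|$, the relation $\nabla\phi=2\boldsymbol{\xi}$, and the gradient $\nabla\eta=\hat{\boldsymbol{\xi}}/r$. The Jacobian of $\psi(\eta)\boldsymbol{A}\boldsymbol{\xi}$, with $\boldsymbol{A}\in\{\boldsymbol{I},\boldsymbol{E}\}$, is $\psi\boldsymbol{A}+\psi'\,\boldsymbol{A}\boldsymbol{\xi}\,\hat{\boldsymbol{\xi}}^\top/r$. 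Substituting into \eqref{eq:curvature_vf}, the cross terms cancel because of the identity $\psi_\tau\psi_\tau'+\psi_\nu\psi_\nu'=0$, which comes from differentiating \eqref{eq:equal_psi}. This is the step I expect to be the main bookkeeping obstacle: keeping the signs straight under $\boldsymbol{E}^\top\boldsymbol{E}=\boldsymbol{I}$ and $\boldsymbol{\xi}^\top\boldsymbol{E}\boldsymbol{\xi}=0$. It should still reduce to $\frac1r\bigl(\psi_\nu/\eta+\psi_\nu'\bigr)$.

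For the identity \eqref{eq:identity}, I would multiply \eqref{eq:lemma1} by $\psi_\tau$ and \eqref{eq:kappa_prep} by $\psi_\nu$ and add them:
\begin{equation*}
\kappa_\chi\psi_\tau+\kappa_\perp\psi_\nu=\frac1r\Bigl(\frac{\psi_\tau^2+\psi_\nu^2}{\eta}+\psi_\tau\psi_\tau'+\psi_\nu\psi_\nu'\Bigr).
\end{equation*}
By \eqref{eq:equal_psi} the first term equals $1/\eta$. The second term equals $\tfrac12\frac{d}{d\eta}(\psi_\tau^2+\psi_\nu^2)=0$. This yields $1/(r\eta)$ for all $\eta>0$.
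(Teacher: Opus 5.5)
Your proposal is correct and is essentially the paper's proof. The paper obtains \eqref{eq:kappa_prep} by rerunning the proof of Lemma~\ref{lem:001} with $\boldsymbol{E}\boldsymbol{\chi}$ in place of $\boldsymbol{\chi}$, and your observation that $\boldsymbol{E}\boldsymbol{\chi}=\psi_\nu\boldsymbol{E}\nabla\phi-\psi_\tau\nabla\phi$ has the form \eqref{eq:FT-C2VF} with gains $(\psi_\nu,-\psi_\tau)$ is exactly what makes that replacement legitimate; it then derives \eqref{eq:identity} by the same multiply-and-add argument using $\psi_\tau\psi_\tau'+\psi_\nu\psi_\nu'=0$.

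One caveat applies to the paper as well: for the finite-time gains \eqref{eq:kn_def} with $\gamma\in(0,1)$, $\psi_\nu'(\eta)\to-\infty$ as $\eta\to1$, so $\kappa_\perp(1)$ is undefined and \eqref{eq:identity} holds at $\eta=1$ only by continuous extension, reading $\psi_\nu\psi_\nu'$ as $-\psi_\tau\psi_\tau'$.
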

\begin{IEEEproof}
    See Appendix \ref{app:Corollary1}.
\end{IEEEproof}

Lemma \ref{lem:001} demonstrates that the curvature of $\boldsymbol{\chi}$ depends exclusively on the tangential gain function $\psi_{\tau}$ and the radius $r$. To bound the curvature, we design the piecewise continuous tangential gain function as follows:
\begin{subequations}\label{eq:kt_def}
\begin{numcases}{\psi_{\tau}(\eta) =}1 - (1 - \eta)^{1+\gamma}, & $\eta \in [0,1]$ \label{eq:kt_def_a} \\ 
1 - (1 - \eta^{-1})^{1+\gamma}, & $\eta \in (1,+\infty)$ \label{eq:kt_def_b}
\end{numcases}
\end{subequations}
where $\gamma > 0$ is a constant gain. In this way, the normal gain function is uniquely determined as
\begin{subequations}\label{eq:kn_def}
\begin{numcases}{\psi_{\nu}(\eta) =}\sqrt{1 - \psi_{\tau}^2(\eta)}, & $\eta \in [0,1]$ \label{eq:kn_def_a} \\
-\sqrt{1 - \psi_{\tau}^2(\eta)}. & $\eta \in (1,+\infty)$ \label{eq:kn_def_b}
\end{numcases}
\end{subequations}

Note that the curvature of the vector field is undefined at singular points. Hence, the curvature constraint is considered only on the domain $\eta \in (0,+\infty)$. In this domain, the tangential gain function $\psi_{\tau}$ is continuously differentiable\footnote{Given an integer $k$, a vector field (or function) $\boldsymbol{\chi}: \mathbb{R}^n \to \mathbb{R}^n$ is said to be of class $C^k$ if all its component functions have continuous partial derivatives of order up to $k$ \cite[p.~644]{lee2015smooth}. In this article, we denote this by $\boldsymbol{\chi} \in C^k$.} (i.e., $\psi_{\tau} \in C^1$), thereby guaranteeing the continuity of the curvature $\kappa_\chi$ given in \eqref{eq:lemma1} (see Fig.~\ref{fig:kappa}).  Further details on the curvature of $\boldsymbol{\chi}$ are given in the following proposition.

\begin{proposition}[Curvature constraint]\label{propos:001}
    The curvature $\kappa_\chi:(0, +\infty) \to \mathbb{R}$ is a continuous and strictly decreasing function for $\eta \in (0,+\infty)$, which can be analytically expressed as
    \begin{equation}\label{eq:kappa_piecewise}
    \kappa_\chi(\eta)=
    \begin{cases}
    \displaystyle\frac{1}{r}\left[\frac{1-(1-\eta)^\gamma}{\eta}+(2+\gamma)(1-\eta)^\gamma\right], \eta \in (0,1], \\[2.5ex]
    \displaystyle \frac{1}{r\eta} \left[ 1 - \left(1 + \frac{\gamma}{\eta} \right) \left(1 - \frac{1}{\eta} \right)^\gamma \right], \eta \in (1, +\infty).
    \end{cases}
    \end{equation}
 Moreover, it satisfies
    \begin{equation}\label{eq:kappa_max_min}
    \lim_{\eta\to +\infty}\kappa_\chi(\eta) = 0
    < \kappa_\chi(\eta)
    < \lim_{\eta\to 0^+}\kappa_\chi(\eta)
    = \frac{2(1+\gamma)}{r}.
    \end{equation}
\end{proposition}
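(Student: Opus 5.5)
The plan is to substitute the explicit tangential gain \eqref{eq:kt_def} into the curvature formula of Lemma~\ref{lem:001}, $\kappa_\chi=\frac1r(\psi_\tau/\eta+\psi_\tau')$, and then analyze each branch separately.

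\emph{Closed form on $(0,1]$.} Here $\psi_\tau=1-(1-\eta)^{1+\gamma}$ and $\psi_\tau'=(1+\gamma)(1-\eta)^\gamma$. Writing $(1-\eta)^{1+\gamma}=(1-\eta)^\gamma-\eta(1-\eta)^\gamma$ gives
\[
\psi_\tau/\eta=\frac{1-(1-\eta)^\gamma}{\eta}+(1-\eta)^\gamma ,
\]
which, added to $\psi_\tau'$, yields the first line of \eqref{eq:kappa_piecewise}.

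\emph{Closed form on $(1,\infty)$.} Set $s=1-1/\eta\in(0,1)$. Then $\psi_\tau=1-s^{1+\gamma}$ and $\psi_\tau'=-(1+\gamma)s^\gamma\eta^{-2}$. Using $s^{1+\gamma}=s^\gamma-s^\gamma/\eta$, we get
\[
\psi_\tau/\eta+\psi_\tau'=\frac{1}{\eta}\bigl[1-s^\gamma(1+\gamma/\eta)\bigr],
\]
which is the second line of \eqref{eq:kappa_piecewise}.

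\emph{Continuity.} Continuity on each open piece is immediate. At $\eta=1$ both expressions equal $1/r$. The one-sided derivatives of $\psi_\tau$ at $1$ are both $0$ because $\gamma>0$, so $\psi_\tau\in C^1(0,\infty)$ and $\kappa_\chi$ is continuous.

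\emph{Limits in \eqref{eq:kappa_max_min}.} As $\eta\to0^+$, $(1-(1-\eta)^\gamma)/\eta\to\gamma$ and $(2+\gamma)(1-\eta)^\gamma\to2+\gamma$, so the limit is $2(1+\gamma)/r$. As $\eta\to\infty$, the bracket in the second line stays bounded (it tends to $0$), and it is multiplied by $1/(r\eta)\to0$, so the limit is $0$.

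Once strict monotonicity is proved, the strict inequalities in \eqref{eq:kappa_max_min} follow from these limits, since a strictly decreasing function on an open interval lies strictly between its endpoint limits.

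\emph{Monotonicity: the main obstacle.} I would compute $r\kappa_\chi'$ on each branch and show it is negative.

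On $(0,1)$, let $u=1-\eta$. Differentiating gives
\[
r\kappa_\chi'=-\frac{1-u^\gamma}{\eta^2}+\frac{\gamma u^{\gamma-1}}{\eta}-(2+\gamma)\gamma u^{\gamma-1}.
\]
Multiplying by $\eta^2$, the sign of $r\kappa_\chi'$ is that of
\[
g(\eta)=-(1-u^\gamma)+\gamma\eta u^{\gamma-1}-(2+\gamma)\gamma\eta^2u^{\gamma-1}.
\]
Now $g(0)=0$, and I would show $g'<0$ on $(0,1)$. Expanding, the terms $\pm\gamma u^{\gamma-1}$ cancel. What remains is $-\gamma(\gamma-1)\eta u^{\gamma-2}$ from differentiating $\gamma\eta u^{\gamma-1}$, together with the derivative of the last term:
\[
-(2+\gamma)\gamma\bigl[2\eta u^{\gamma-1}+(\gamma-1)\eta^2u^{\gamma-2}\bigr].
\]
Factoring out $-\gamma\eta u^{\gamma-2}$, the remaining bracket is $(\gamma-1)+(2+\gamma)\bigl[2u+(\gamma-1)\eta\bigr]$. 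Substituting $u=1-\eta$ turns it into $(\gamma-1)+(2+\gamma)\bigl[2+(\gamma-3)\eta\bigr]$, which is linear in $\eta$. At $\eta=0$ it equals $3+3\gamma>0$, and at $\eta=1$ it equals $\gamma^2+\gamma-1+\dots$; I need to recheck that endpoint carefully. Positivity at both endpoints of this linear function gives positivity on $[0,1]$, hence $g'<0$, $g<0$, and $\kappa_\chi'<0$. This sign bookkeeping, including the $\gamma<1$ case where $u^{\gamma-2}$ blows up near $\eta=1$, is where I expect the real work.

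On $(1,\infty)$, I would substitute $s=1-1/\eta$, so $1/\eta=1-s$, and write
\[
r\kappa_\chi=(1-s)\bigl[1-s^\gamma(1+\gamma-\gamma s)\bigr].
\]
Since $s$ increases with $\eta$, it suffices to show $\frac{d}{ds}(r\kappa_\chi)<0$ on $(0,1)$. The derivative is
\[
-\bigl[1-s^\gamma(1+\gamma-\gamma s)\bigr]-(1-s)\frac{d}{ds}\bigl[s^\gamma(1+\gamma-\gamma s)\bigr],
\]
and the inner derivative equals $\gamma(1+\gamma)s^{\gamma-1}(1-s)\ge0$.

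For the first bracket, set $h(s)=s^\gamma(1+\gamma-\gamma s)$. Then $h(1)=1$ and $h'=\gamma(1+\gamma)s^{\gamma-1}(1-s)>0$, so $h<1$ on $(0,1)$. Hence $1-h>0$, both terms of the derivative are negative, and $\kappa_\chi$ is strictly decreasing on $(1,\infty)$. This also shows $\kappa_\chi>0$ there.

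Combining strict decrease on $(0,1]$ and on $[1,\infty)$ with continuity at $1$ gives strict decrease on all of $(0,\infty)$, which completes the proof.
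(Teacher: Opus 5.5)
Your closed forms, the continuity at $\eta=1$, the two limits, and the whole $(1,\infty)$ branch are correct. On $(1,\infty)$, writing $r\kappa_\chi=(1-s)[1-h(s)]$ with $h$ increasing and $h(1)=1$ exhibits $r\kappa_\chi$ as a product of two positive, strictly decreasing functions of $s$. This works for every $\gamma>0$. It is cleaner than the paper's route, which substitutes $\varepsilon=1/\eta$, studies the sign of $h''$, and needs a separate case for $\gamma>1$.

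The gap is on $(0,1)$: monotonicity there is not established.
\begin{itemize}
\item \emph{Sign error.} When you differentiate $-(2+\gamma)\gamma\eta^2u^{\gamma-1}$ you drop the chain-rule factor $-1$ from $u=1-\eta$, although you kept it for $\gamma\eta u^{\gamma-1}$. Your bracket $(\gamma-1)+(2+\gamma)[2+(\gamma-3)\eta]$ equals $(\gamma-1)(\gamma+3)<0$ at $\eta=1$ when $\gamma<1$. So the argument as written fails exactly where you planned to recheck.
\item \emph{The corrected version still falls short.} The correct derivative is $g'=-\gamma(1+\gamma)\eta u^{\gamma-2}[3-(2+\gamma)\eta]$. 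Its bracket equals $1-\gamma^2$ at $\eta=1$, so your endpoint argument covers only $\gamma\le 1$.
\item \emph{Why that matters.} The proposition is stated for every $\gamma>0$; the restriction $\gamma\in(0,1)$ is imposed only after it. For $\gamma>1$, $g'$ changes sign at $\eta=3/(2+\gamma)$, so the claim that $g'<0$ on $(0,1)$ is false. You would need the extra observation that $g(1^-)=-1<0$, together with the decrease-then-increase shape of $g$.
\end{itemize}
The blow-up of $u^{\gamma-2}$ near $\eta=1$ that you worried about is harmless, since only the sign on the open interval matters.

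The missing idea is that your $g$ is exactly $(\eta\psi_\tau'-\psi_\tau)+\eta^2\psi_\tau''$. On $(0,1)$ you have $\psi_\tau''=-\gamma(1+\gamma)(1-\eta)^{\gamma-1}<0$ and $\psi_\tau(0)=0$. Strict concavity then gives $\psi_\tau(\eta)=\int_0^\eta\psi_\tau'(s)\,ds>\eta\psi_\tau'(\eta)$. Hence both summands are negative for every $\gamma>0$, with no brute-force differentiation or endpoint check. This is how the paper handles that branch.
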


\begin{IEEEproof}
    See Appendix \ref{app:002}.
\end{IEEEproof}

To study more properties of the vector field, we investigate the trajectory of the following autonomous ordinary differential equation:
\begin{equation}\label{eq:ODE}
    \dot{\boldsymbol{\xi}}=\boldsymbol{\chi}(\boldsymbol{\xi})
\end{equation}
To ensure the integral curves of $\boldsymbol{\chi}$ reach and traverse $\mathcal{M}_r$ in finite time, we set $\gamma\in(0,1)$. This guarantees the existence and uniqueness of the forward solution to \eqref{eq:ODE}, as detailed in Appendix~\ref{app:003}. Given $\gamma \in (0,1)$, an explicit upper bound on the convergence time is provided as follows.

\begin{proposition}[Finite-time convergence]\label{propos:003}
     For any initial point $\boldsymbol{\xi}_0 \in \mathbb{R}^2 \setminus \mathcal{W}$, there exists a settling-time function $T_\chi: \mathbb{R}^2 \to [0,+\infty)$ such that the trajectory $\boldsymbol{\xi}(t)$ converges to the circle manifold $\mathcal{M}_r$ at $t = T_\chi(\boldsymbol{\xi}_0)$. Moreover, we have
    \begin{equation}\label{eq:T_chi}
    T_\chi(\boldsymbol{\xi}_0) \le
    \begin{cases}
    \dfrac{(1-\eta(\boldsymbol{\xi}_0))^{\frac{1-\gamma}{2}}}{\eta(\boldsymbol{\xi}_0)(1-\gamma)}, & \eta(\boldsymbol{\xi}_0) \in (0,1] \\
    \dfrac{(\eta(\boldsymbol{\xi}_0)-1)^{\frac{1-\gamma}{2}}}{1-\gamma}. & \eta(\boldsymbol{\xi}_0) \in (1,+\infty)
    \end{cases}
    \end{equation}
\end{proposition}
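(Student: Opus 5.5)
The plan is to reduce the planar ODE \eqref{eq:ODE} to a scalar ODE for the radial ratio $\eta(t)=\eta(\boldsymbol{\xi}(t))$, and then apply a comparison argument of the kind used for finite-time stability to the distance from $\eta=1$.

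\textbf{Step 1: radial dynamics.} Since $\nabla\phi(\boldsymbol{\xi})=2\boldsymbol{\xi}$ and $\boldsymbol{\xi}^\top\boldsymbol{E}\boldsymbol{\xi}=0$, we get
\begin{equation*}
\tfrac{d}{dt}\|\boldsymbol{\xi}\|^2=2\boldsymbol{\xi}^\top\boldsymbol{\chi}(\boldsymbol{\xi})=4\psi_\nu(\eta)\|\boldsymbol{\xi}\|^2 .
\end{equation*}
For $\boldsymbol{\xi}\neq\boldsymbol{0}$ this gives the decoupled equation $\dot\eta=2\psi_\nu(\eta)\,\eta$.

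\textbf{Step 2: qualitative behaviour of $\eta$.} By \eqref{eq:kn_def}, $\psi_\nu>0$ on $(0,1)$, $\psi_\nu<0$ on $(1,+\infty)$, and $\psi_\nu(1)=0$. Hence $\eta$ is increasing when $\eta_0<1$ and decreasing when $\eta_0>1$. Also, $\eta=1$ is an equilibrium of the scalar equation. Using forward uniqueness for $\gamma\in(0,1)$ (Appendix~\ref{app:003}), once $\eta$ reaches $1$ it stays there, so $\boldsymbol{\xi}(t)\in\mathcal{M}_r$ for all later times. The trajectory never reaches $\boldsymbol{0}$, since $\eta$ is bounded below by $\min(\eta_0,1)>0$.

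\textbf{Step 3: lower bound on the normal gain.} Put $s=(1-\eta)^{1+\gamma}$ for $\eta\le1$, and $s=(1-\eta^{-1})^{1+\gamma}$ for $\eta>1$. In both cases $\psi_\tau=1-s$ with $s\in[0,1]$, so
\begin{equation*}
|\psi_\nu|=\sqrt{s(2-s)}\ge\sqrt{s}.
\end{equation*}

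\textbf{Step 4a: the case $\eta_0\in(0,1]$.} Let $e=1-\eta\ge0$. By Steps 1 and 3,
\begin{equation*}
\dot e\le-2\eta\, e^{(1+\gamma)/2}\le-2\eta_0\, e^{(1+\gamma)/2},
\end{equation*}
where the second inequality uses the monotonicity of $\eta$. Differentiating $e^{(1-\gamma)/2}$ gives, while $e>0$,
\begin{equation*}
\tfrac{d}{dt}e^{(1-\gamma)/2}\le-(1-\gamma)\eta_0 .
\end{equation*}
So $e$ vanishes no later than $e_0^{(1-\gamma)/2}/((1-\gamma)\eta_0)$, which is the first bound in \eqref{eq:T_chi}.

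\textbf{Step 4b: the case $\eta_0>1$.} Let $z=\eta-1>0$, so that $1-\eta^{-1}=z/\eta$. Then
\begin{equation*}
\dot z\le-2\eta\,(z/\eta)^{(1+\gamma)/2}=-2\eta^{(1-\gamma)/2}z^{(1+\gamma)/2}\le-2z^{(1+\gamma)/2},
\end{equation*}
because $\eta>1$ and $\gamma<1$. Hence $\tfrac{d}{dt}z^{(1-\gamma)/2}\le-(1-\gamma)$, which yields the second bound.

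\textbf{Main obstacle.} The delicate point is justifying the comparison argument near $\eta=1$, where the right-hand side is non-Lipschitz. I would handle this by differentiating $e^{(1-\gamma)/2}$ (respectively $z^{(1-\gamma)/2}$) only on the open interval where $e>0$ (respectively $z>0$), where the power is smooth. I would then integrate and use continuity to pass to the hitting time, and invoke the forward uniqueness from Appendix~\ref{app:003} to guarantee that the trajectory remains on $\mathcal{M}_r$ afterwards.
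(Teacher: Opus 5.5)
Your proof is correct and follows essentially the same route as the paper. The paper takes the Lyapunov function $\Phi=(\eta-1)^2$ and uses the same bound $|\psi_\nu|\ge\sqrt{s}$ to get $\dot\Phi\le -c\,\Phi^{(3+\gamma)/4}$, with $c=4\eta_0$ for $\eta_0\le 1$ and $c=4$ for $\eta_0>1$. It then invokes the standard finite-time stability theorem, which amounts to your direct integration of $|\eta-1|^{(1-\gamma)/2}$ and gives the same two bounds.
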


\begin{IEEEproof}
    See Appendix \ref{app:004}.
\end{IEEEproof}

Notably, finite-time convergence does not necessarily imply a shorter entrance time into every prescribed nonzero error neighborhood. Its main advantage is a finite exact settling time to the target set. Equivalently, for a fixed initial error, the time to enter an $\epsilon$-neighborhood of the target admits an upper bound independent of $\epsilon$ as $\epsilon\to0^+$. The following scalar example illustrates this point.

\begin{example}\label{example:001}
We compare two first-order scalar autonomous systems describing the evolution of a nonnegative error magnitude $e(t)\in\mathbb{R}_{\ge0}$: the exponentially convergent system $\dot e=-\lambda e$ and the finite-time convergent system $\dot e=-\lambda e^u$, where $\lambda>0$ and $0<u<1$. The target is the origin, where $e(t)=0$ represents exact convergence. For a tolerance $\epsilon>0$, define the error neighborhood $\mathcal{B}_{\epsilon}:=\{e\in\mathbb{R}_{\ge0}:e\le \epsilon\}$. Given $e(0)=e_0>\epsilon$, define the entrance time into $\mathcal{B}_{\epsilon}$ as $T_{\epsilon}:=\inf\{t\ge0:e(s)\le\epsilon, \ \forall s\ge t\}$. For the exponentially convergent system $\dot e=-\lambda e$ with $\lambda>0$, the solution is $e(t)=e_0e^{-\lambda t}$, yielding $T_{\epsilon}^{\rm exp}=\frac{1}{\lambda}\ln\frac{e_0}{\epsilon}$. Thus, $T_{\epsilon}^{\rm exp}\to+\infty$ as $\epsilon\to0^+$, meaning exact convergence occurs only asymptotically. Conversely, for the finite-time system $\dot e=-\lambda e^u$ with $\lambda>0$ and $0<u<1$, the solution exactly reaches the origin at $T_0^{\rm ft}=\frac{e_0^{1-u}}{\lambda(1-u)}$. For any $0<\epsilon<e_0$, the entrance time into $\mathcal{B}_{\epsilon}$ is $T_{\epsilon}^{\rm ft}=\frac{e_0^{1-u}-\epsilon^{1-u}}{\lambda(1-u)}\le T_0^{\rm ft}$, uniformly bounded independently of $\epsilon$. For example, let $\lambda=2$, $e_0=10$, and $u=0.5$. For $\epsilon=0.1$, $T_{0.1}^{\rm exp}\approx2.30$ and $T_{0.1}^{\rm ft}\approx2.85$, indicating the exponential system enters this neighborhood earlier. However, for a tighter tolerance $\epsilon=10^{-4}$, $T_{10^{-4}}^{\rm exp}\approx5.76$ and $T_{10^{-4}}^{\rm ft}\approx3.15$. Moreover, the finite-time system exactly reaches the origin at $T_0^{\rm ft}\approx3.16$, whereas the exponential system never reaches it in finite time.
    \hfill $\blacktriangleleft$
\end{example}

The preceding example can be extended to practical robotic tasks. By defining the target configuration set as $\mathcal{Q}_\epsilon = \{\boldsymbol{q}\in\mathcal{C}:\|\boldsymbol{q} - \boldsymbol{q}_d\| \le \epsilon \}$, the convergence time of the robot to $\mathcal{Q}_\epsilon$ is guaranteed to be upper bounded by some constant value, regardless of how small the tolerance $\epsilon$ is chosen. Building upon Propositions \ref{propos:001} and \ref{propos:003}, we establish the following sufficient condition for $\boldsymbol{\chi}$ to be an FT-C2VF.

\begin{theorem}\label{thm:FT-C2VF}
     The vector field $\boldsymbol{\chi}$ in \eqref{eq:FT-C2VF} is an FT-C2VF (i.e., it satisfies both requirements in Definition \ref{def1} if  $\gamma \in (0,1)$ and $r\ge \frac{2(1+\gamma)}{\bar{\kappa}}$.
\end{theorem}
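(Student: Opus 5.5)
The plan is to verify the two requirements of Definition~\ref{def1} separately, each by invoking one of the preceding propositions. The parameter condition $r\ge 2(1+\gamma)/\bar{\kappa}$ will handle the curvature requirement, and the condition $\gamma\in(0,1)$ will handle the finite-time requirement.

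\emph{Curvature constraint.} Fix $\boldsymbol{\xi}\in\mathbb{R}^2\setminus\mathcal{W}$; in the translated frame this means $\boldsymbol{\xi}\neq\boldsymbol{0}$, so $\eta(\boldsymbol{\xi})\in(0,+\infty)$. By Lemma~\ref{lem:001}, the curvature of $\boldsymbol{\chi}$ at $\boldsymbol{\xi}$ equals the scalar $\kappa_\chi(\eta)$. Proposition~\ref{propos:001} states that $\kappa_\chi$ is strictly decreasing on $(0,+\infty)$ and satisfies \eqref{eq:kappa_max_min}. This gives
\[
0<\kappa_\chi(\eta)<\frac{2(1+\gamma)}{r}.
\]
Since $\kappa_\chi(\eta)>0$, we have $|\kappa_\chi(\eta)|=\kappa_\chi(\eta)<2(1+\gamma)/r$. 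The hypothesis $r\ge 2(1+\gamma)/\bar{\kappa}$ is equivalent to $2(1+\gamma)/r\le\bar{\kappa}$. Hence $|\kappa_\chi(\boldsymbol{\xi})|<\bar{\kappa}$, which gives property~1. This step uses only the bound in Proposition~\ref{propos:001}, which holds for every $\gamma>0$, so the restriction $\gamma<1$ is not needed here. One small point must be checked: the set $\mathcal{W}$ in Definition~\ref{def1} is the singular set of $\boldsymbol{\chi}$, which by \eqref{eq:daxiao} is exactly $\{\boldsymbol{0}\}$, so the domains coincide.

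\emph{Finite-time convergence.} With $\gamma\in(0,1)$, forward solutions of \eqref{eq:ODE} exist and are unique (Appendix~\ref{app:003}), and Proposition~\ref{propos:003} supplies a settling time $T_\chi(\boldsymbol{\xi}_0)$ at which the trajectory reaches $\mathcal{M}_r$. Definition~\ref{def1} requires more than this: $\mathrm{dist}(\boldsymbol{\xi}(t),\mathcal{M}_r)=0$ must hold for \emph{all} $t\ge T_b(\boldsymbol{\xi}_0)$. So I must also show that $\mathcal{M}_r$ is forward invariant.

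For invariance, the plan is to compute $\dot\phi=\nabla\phi^\top\boldsymbol{\chi}$. The tangential term is orthogonal to $\nabla\phi$, so
\[
\dot\phi=\psi_\nu(\eta)\|\nabla\phi\|^2=4\|\boldsymbol{\xi}\|^2\psi_\nu(\eta),
\]
which vanishes on $\eta=1$ because $\psi_\nu(1)=0$. Then, using the sign structure of \eqref{eq:kn_def}, $\psi_\nu>0$ for $\eta<1$ and $\psi_\nu<0$ for $\eta>1$, so $\phi\equiv 0$ is the only solution starting on $\mathcal{M}_r$ that is compatible with these signs. An alternative argument: $\eta$ obeys the scalar ODE $\dot\eta=2r\eta^2\psi_\nu(\eta)$, which is independent of the angle. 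Any departure from $\eta=1$ would have to move $\eta$ away from $1$ while the dynamics push it back toward $1$, a contradiction. Uniqueness of forward solutions settles this cleanly.

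Setting $T_b:=T_\chi$, and extending it arbitrarily (for example by $0$) at $\boldsymbol{\xi}_0=\boldsymbol{0}$ so that $T_b$ is defined on all of $\mathbb{R}^2$, gives property~2. I expect the main (mild) obstacle to be exactly this invariance step, since Proposition~\ref{propos:003} as stated only asserts arrival at time $T_\chi$. I will handle it with the sign argument on the scalar $\eta$-dynamics above, combined with forward uniqueness.
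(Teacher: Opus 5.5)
Your proposal is correct and follows the paper's proof. The curvature bound comes from Proposition~\ref{propos:001}, where you rightly also use $\kappa_\chi>0$ to handle the absolute value. Finite-time convergence comes from Proposition~\ref{propos:003} with $T_b=T_\chi$, and the forward invariance of $\mathcal{M}_r$ that you add is what the paper establishes in Case~2 of Appendix~\ref{app:003}.

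One harmless slip: the radial dynamics are $\dot\eta=2\eta\psi_\nu(\eta)$, not $2r\eta^2\psi_\nu(\eta)$, but your invariance argument uses only the sign of $\psi_\nu$.
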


\begin{IEEEproof}
    Proposition \ref{propos:001} ensures that any point $\boldsymbol{\xi}\in\mathbb{R}^2 \setminus \mathcal{W}$ satisfies the curvature constraint $\kappa_\chi(\eta(\boldsymbol{\xi}))<\frac{2(1+\gamma)}{r}\le \bar{\kappa}$. Furthermore, by setting $T_b = T_\chi$ in Definition \ref{def1}, finite-time convergence is guaranteed according to Proposition \ref{propos:003}.
\end{IEEEproof}

\section{Design of Saturation-Free Control Law and Convergence Results}\label{sec:4}
In this section, we develop a saturation-free control law for the linear and angular velocities based on the FT-C2VF to solve Problem \ref{pro:002}. To this end, we introduce the closed-loop autonomous system under this control strategy and derive related convergence results.

For a robot with configuration $\boldsymbol{q} = [\boldsymbol{\xi}^\top, \theta]^\top$, where $\boldsymbol{\xi} \notin \mathcal{W}$, the heading error with respect to the FT-C2VF is defined as $\theta_e = \theta - \angle\boldsymbol{\chi}(\boldsymbol{\xi})$. This error satisfies $\cos\theta_e = \boldsymbol{h}^\top\hat{\boldsymbol{\chi}}$ and $\sin\theta_e = \boldsymbol{h}^\top \boldsymbol{E} \hat{\boldsymbol{\chi}}$, where $\hat{\boldsymbol{\chi}} = \boldsymbol{\chi}/\|\boldsymbol{\chi}\|$ and $\boldsymbol{h} = [\cos\theta,\sin\theta]^\top$. We define a configuration error function $\Xi : \mathcal{C} \to \mathbb{R}_{\ge 0}$ as follows:
\begin{equation}\label{eq:Xi}
\Xi(\boldsymbol{q}) = \|\boldsymbol{\xi} - \boldsymbol{\xi}_d\|^2 + \mu \big( 1 - \cos\theta_e \big),
\end{equation}
where $\mu > 0$ is a constant gain balancing the position and heading weights. Clearly, $\Xi(\boldsymbol{q}) = 0$ if and only if $\boldsymbol{q} = \boldsymbol{q}_d$; i.e., the robot configuration $\boldsymbol{q}$ coincides with the desired configuration $\boldsymbol{q}_d$.

To ensure that the trajectory curvature of the robot under the control input is bounded by $\bar{\kappa}$, the \textit{commanded curvature} is formulated as follows:
\begin{equation}\label{command_kappa}
\kappa_c = \kappa_\chi - k_\omega\lceil \sin\theta_e\rfloor^\alpha,
\end{equation}
where $\kappa_\chi$ is the curvature of the FT-C2VF in \eqref{eq:kappa_piecewise}, and $k_\omega= \bar{\kappa} - \kappa_\chi > 0$ is a state-dependent gain. The operator $\lceil x \rfloor^\alpha := |x|^\alpha \operatorname{sgn}(x)$ denotes the generalized fractional-power sign function with $\alpha >0$, where $\operatorname{sgn}(\cdot)$ is the signum function (i.e., $\operatorname{sgn}(x)=1$ for $x>0$, $0$ for $x=0$, and $-1$ for $x<0$). 

Then, we have the following inequalities
\begin{align}\label{eq:kappa_c_max}
|\kappa_c| &\le \kappa_\chi + \left|k_\omega\lceil \sin\theta_e\rfloor^\alpha\right|  \notag\\
&\le \kappa_\chi + (\bar{\kappa} - \kappa_\chi)|\sin\theta_e|^\alpha\le \bar{\kappa}.
\end{align}
In \eqref{command_kappa}, the first term ensures trajectory alignment with the integral curve of the vector field when $\theta_e = 0$, causing the robot to move along it, while the second term provides feedback correction for heading errors. Once the commanded curvature $\kappa_c$ and the speed $v$ are determined, the angular velocity is given by $\omega = v \kappa_c$. This design decouples the geometric and temporal kinematics: the angular velocity is adaptively scaled by the speed $v$, ensuring consistent trajectory generation across varying speed profiles.

\begin{figure}[!th]
\centering
\includegraphics[width=2in]{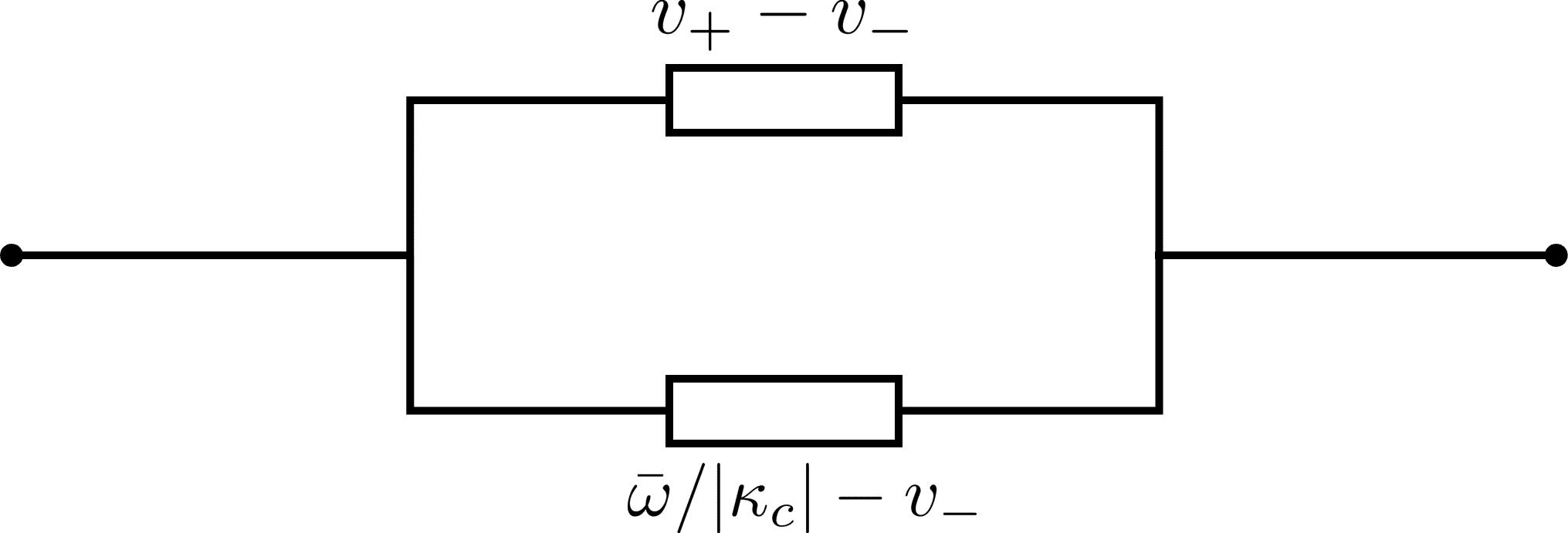}
\caption{Parallel-resistance-like structure for the adaptive speed upper bound.}
\label{fig:parallel}
\end{figure}

Given $\kappa_c$, the speed must satisfy $v \le \bar{\omega}/|\kappa_c|$ to avoid violation of the angular velocity constraint $|\omega| \le \bar{\omega}$ when $|\kappa_c|$ is large. Conversely, when $|\kappa_c|$ is small, the speed is primarily restricted by its nominal upper bound $v_+$. If the kinematic velocity bounds satisfy $v_+ > v_-$, an adaptive velocity upper bound is designed as
\begin{equation}\label{eq:cmv}
v_{r} = v_{-} + \left( \frac{1}{v_{+} - v_{-}} + \frac{1}{\bar{\omega}/|\kappa_c| - v_{-}} \right)^{-1}.
\end{equation}
Under the premise that $\bar{\kappa} < \bar{\omega}/v_-$, this adaptive bound gracefully accounts for variations in trajectory curvature, ensuring deceleration during sharp turns and higher speeds along gentle curves. This structure can be interpreted by analogy with parallel resistance in circuit theory~\cite[Ch.~2.6]{alexander2021fundamentals}, as illustrated in \cref{fig:parallel}. Specifically, the nominal maximum velocity $v_+$ and the maximum feasible turning velocity $\bar{\omega}/|\kappa_c|$ jointly dictate the upper bound, ensuring that $v_r$ is less than the minimum of $v_+$ and $\bar{\omega}/|\kappa_c|$. To achieve the motion planning objectives, we synthesize the linear and angular velocity control laws as follows:
\begin{subequations}\label{eq:co_design_law}
\begin{align}
    v &= v_{-} + \big( v_{r} - v_{-} \big) \left( 1 - e^{-k_v \Xi^\beta} \right), \label{eq:vlaw} \\
   \omega &= v\kappa_c, \label{eq:omegalaw}
\end{align}
\end{subequations}
where $k_v,\beta > 0$ are constant gains. The design of the speed controller in \eqref{eq:co_design_law} ensures that if the robot is far from the target configuration (i.e., $\Xi \gg 0$), $v \to v_r$, while upon reaching the target configuration (i.e., $\Xi = 0$), $v = v_{-}$.


\begin{theorem}[Saturation-free admissible control]\label{thm:admissible_control}
The control input $\boldsymbol{u} = [v,\omega]^\top$ generated by \eqref{eq:co_design_law} is saturation-free; that is, the forward speed $v$ and angular velocity $\omega$ remain strictly below their respective prescribed bounds. Moreover, the generated input is admissible, i.e., $\boldsymbol{u}\in\mathcal{U}$, and the control law is almost globally $C^1$-smooth on  $\mathcal{C}=\mathbb{R}^2\times\mathbb{S}^1$.
\end{theorem}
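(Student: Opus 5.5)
The plan has three parts: bound the speed, bound the angular velocity, and then check smoothness away from a small exceptional set. Throughout I assume the standing premises $v_+>v_-$, $\bar\kappa<\bar\omega/v_-$, and $r\ge 2(1+\gamma)/\bar\kappa$ from Theorem~\ref{thm:FT-C2VF}. By Proposition~\ref{propos:001}, these give $0<\kappa_\chi<\bar\kappa$ on $(0,+\infty)$, so $k_\omega=\bar\kappa-\kappa_\chi>0$.

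\emph{Speed bounds.} Let $a=v_+-v_->0$ and $b=\bar\omega/|\kappa_c|-v_-$, with the convention $b=+\infty$ when $\kappa_c=0$.
\begin{itemize}
\item By \eqref{eq:kappa_c_max}, $|\kappa_c|\le\bar\kappa<\bar\omega/v_-$, so $b>0$.
\item The parallel combination $(1/a+1/b)^{-1}$ is positive and strictly smaller than $\min\{a,b\}$ (it equals $a$ when $b=\infty$).
\item Hence $v_-<v_r<\min\{v_+,\bar\omega/|\kappa_c|\}$, with only $v_r\le v_+$ in the degenerate case $\kappa_c=0$, where $v_r=v_+$.
\item Since $\Xi\ge0$, the factor $1-e^{-k_v\Xi^\beta}$ lies in $[0,1)$. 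Therefore $v_-\le v<v_r\le v_+$, i.e.\ $v<v_+$ strictly.
\end{itemize}

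\emph{Angular velocity bounds.} From $\omega=v\kappa_c$ and $|\kappa_c|\le\bar\kappa$ we get $|\omega|\le\bar\kappa v=\bar\kappa|v|$, so the curvature constraint holds. For the rate bound, if $\kappa_c\ne0$ then $|\omega|=v|\kappa_c|<v_r|\kappa_c|<\bar\omega$, and if $\kappa_c=0$ then $\omega=0<\bar\omega$. Strictness in the curvature bound also follows: $|\kappa_c|=\bar\kappa$ would need $|\sin\theta_e|=1$, and then $|\kappa_c|=|\kappa_\chi\mp k_\omega|=\bar\kappa$ forces the sign case $\kappa_\chi-k_\omega=-\bar\kappa$, i.e.\ $\kappa_\chi=0$, which is impossible. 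So $\mathbf u\in\mathcal U$ and every bound is strict, which is the saturation-free claim.

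\emph{Smoothness.} I would identify the exceptional set and show it has measure zero. The law is undefined on $\{\boldsymbol\xi=\mathbf 0\}\times\mathbb S^1$ (the singular set $\mathcal W$), where the vector field vanishes and the dependence on $\theta$ is lost.

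Away from this set, I would check each ingredient:
\begin{itemize}
\item $\hat{\boldsymbol\chi}$ is $C^1$ off $\mathcal W$. The gains $\psi_\tau,\psi_\nu$ are $C^1$ in $\eta$ except possibly at $\eta=1$. For $\psi_\tau$ this holds on both sides since $\gamma>0$. For $\psi_\nu=\pm\sqrt{1-\psi_\tau^2}$, expanding near $\eta=1$ gives $\psi_\nu\approx\mp\sqrt2\,|1-\eta|^{(1+\gamma)/2}$. With $\gamma\in(0,1)$ this is only Hölder there, so $\boldsymbol\chi$ itself may fail to be $C^1$ on the circle $\mathcal M_r$.
\item The key observation is that the law never uses $\boldsymbol J_\chi$. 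It uses $\theta_e$ (via $\hat{\boldsymbol\chi}$, hence via $\psi_\tau,\psi_\nu$) and $\kappa_\chi(\eta)$, which Proposition~\ref{propos:001} shows is continuous with an explicit piecewise form. $C^1$-smoothness therefore needs to be checked only through these compositions.
\item $\lceil\sin\theta_e\rfloor^\alpha$ is $C^1$ wherever $\sin\theta_e\neq0$, and everywhere if $\alpha>1$.
\item $|\kappa_c|$ is non-smooth only where $\kappa_c=0$.
\item $\Xi^\beta$ is non-smooth only at $\Xi=0$ when $\beta<1$.
\end{itemize}

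The set of possibly non-$C^1$ points is thus contained in a finite union of lower-dimensional sets: $\mathcal W\times\mathbb S^1$, $\mathcal M_r\times\mathbb S^1$, $\{\kappa_c=0\}$, $\{\sin\theta_e=0\}$, and the point $\mathbf q_d$. Each is the zero set of a function with nonvanishing partial derivative (e.g.\ in $\theta$) or is a lower-dimensional submanifold, so all have measure zero in $\mathcal C$. This yields almost-global $C^1$-smoothness.

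\emph{Main obstacle.} The hard step is the derivative of $\kappa_\chi$ at $\eta=1$. Differentiating \eqref{eq:kappa_piecewise} gives terms in $(1-\eta)^{\gamma-1}$, which blow up for $\gamma<1$. I would first try to show that the one-sided derivatives are unbounded only on $\mathcal M_r$ and absorb that set into the measure-zero exception, rather than trying to prove smoothness across $\eta=1$.
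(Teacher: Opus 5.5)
Your proof follows the paper's. Both use the parallel-combination bound on $v_r$ and the convex-combination form of $v$ to get $v_-\le v<v_r\le v_+$ and $|\omega|<\bar\omega$. Both also use the same exceptional sets for the $C^1$ claim: $\mathcal W\times\mathbb S^1$, $\mathcal M_r\times\mathbb S^1$, $\{\sin\theta_e=0\}$, $\{\kappa_c=0\}$ and $\{\boldsymbol q_d\}$. You are slightly more careful than the paper in two places:
\begin{itemize}
\item At $\kappa_c=0$, the paper's chain $v|\kappa_c|<v_r|\kappa_c|$ degenerates to $0<0$, and its formula for $v_r$ contains $\bar\omega/|\kappa_c|$. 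You handle both with the convention $b=+\infty$.
\item You establish $v_r>v_-$ strictly, which the step $v<v_r$ actually requires.
\end{itemize}

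One of your assertions is false and should be removed: the curvature bound is not strict. You checked only the cases $\kappa_\chi-k_\omega=\pm\bar\kappa$. At $\sin\theta_e=-1$, however, $\lceil\sin\theta_e\rfloor^\alpha=-1$, so $\kappa_c=\kappa_\chi+k_\omega=\bar\kappa$ identically. Hence $|\omega|=\bar\kappa v$ on the whole set $\{\theta_e=-\pi/2\}$. This does not damage the theorem. The strictness it asserts concerns only $v<v_+$ and $|\omega|<\bar\omega$. Admissibility needs only $|\omega|\le\bar\kappa|v|$, which follows from $|\kappa_c|\le\bar\kappa$ exactly as you and the paper show.

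A smaller point concerns your measure-zero argument for $\{\kappa_c=0\}$. The $\theta$-derivative $-\alpha k_\omega|\sin\theta_e|^{\alpha-1}\cos\theta_e$ vanishes where $\kappa_\chi=\bar\kappa/2$ and $\sin\theta_e=1$, a point set that is nonempty for admissible parameter choices. So "nonvanishing partial derivative" is not correct everywhere. The paper's slice argument is cleaner: for fixed $\boldsymbol\xi$, the equation $\kappa_c=0$ forces $\sin\theta_e=(\kappa_\chi/k_\omega)^{1/\alpha}$. That leaves at most two admissible values of $\theta$, and Fubini then gives measure zero.

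Your handling of the non-differentiability of $\kappa_\chi$ and $\angle\boldsymbol\chi$ at $\eta=1$, by absorbing $\mathcal M_r\times\mathbb S^1$ into the exceptional set, is exactly what the paper does.
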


\begin{IEEEproof}
    See Appendix \ref{app:005}.
\end{IEEEproof}

\begin{remark}
It is worth noting that for nonholonomic mobile robots, especially Dubins car-like models described by \eqref{eq:kinematic_system}, the underlying principle of \eqref{eq:co_design_law} differs from most existing VF-based approaches.  For example, in \cite[eq.~(24)]{9312173}, \cite[eq.~(20)]{11300826}, \cite[eq.~(13)]{11246344}, \cite[eq.~(13)]{11127300}, \cite[eq.~(23)]{7942030}, \cite[eq.~(18)]{7484276}, and \cite[eq.~(31)]{9969449}, the angular velocity control law is deliberately designed as the sum of a feedforward and a feedback term. The feedforward term provides the angular velocity required to follow the vector field and is usually formulated as the time derivative of the vector field direction, e.g., $\dot{\angle \boldsymbol{\chi}} = \frac{-1}{\|\boldsymbol{\chi}\|}\hat{\boldsymbol{\chi}}^\top\boldsymbol{E}\boldsymbol{J}(\boldsymbol{\chi})\dot{\boldsymbol{\xi}}$, where $\boldsymbol{J}(\boldsymbol{\chi})$ is the Jacobian of $\boldsymbol{\chi}$ with respect to the position $\boldsymbol{\xi}$. The feedback term compensates for the heading error $\theta_e$. Although such control laws enable vector field tracking, they typically do not explicitly account for curvature constraints, and thus must employ input saturation to enforce feasibility. Moreover, under finite-time convergence requirements, the Jacobian $\boldsymbol{J}(\boldsymbol{\chi})$ may become unbounded near the desired manifold, which may render the control law infeasible. In contrast, \eqref{eq:co_design_law} satisfies Theorem \ref{thm:admissible_control}, it relies only on the curvature of the FT-C2VF and does not involve the Jacobian of the vector field.  
\hfill $\blacktriangleleft$
\end{remark}

To analyze the closed-loop error system, we study the evolution of the heading error $\theta_e$ and the radial ratio $\eta$. As detailed in Appendix \ref{app:derivation}, the dynamics is given by
\begin{subequations}\label{eq:yamhua}
\begin{align}
\dot{\theta}_e &= \omega - v(\kappa_\chi(\eta) \cos\theta_e + \kappa_\perp(\eta) \sin\theta_e),\label{eq:theta_e_1} \\
\dot{\eta} &=  \frac{v}{r} \big( \psi_{\nu} \cos\theta_e - \psi_{\tau} \sin\theta_e \big).\label{eq:eta_1}
\end{align}
\end{subequations}
Before the robot converges to the target configuration, we have $v>0$. By applying orbital equivalence in \cite[Def.~2.4]{Kuznetsov2023ElementsOA}, we introduce the time reparameterization $\mathrm{d}\tau = v\mathrm{d}t$. On the time scale $\tau$, \eqref{eq:yamhua} can be rewritten as the following closed-loop autonomous system on the cylindrical manifold $\mathcal{H} = \mathbb{S}^1 \times \mathbb{R}_{\ge 0}$:
\begin{equation}\label{eq:bihuan}
\dot{\boldsymbol{z}}=\boldsymbol{F}(\boldsymbol{z})=\begin{bmatrix}
\dot{\theta}_e \\
\dot{\eta}
\end{bmatrix} =
\begin{bmatrix}
\kappa_c(\eta,\theta_e) - \kappa_\chi(\eta) \cos\theta_e - \kappa_\perp(\eta) \sin\theta_e \\
\frac{1}{r} \big( \psi_{\nu} \cos\theta_e - \psi_{\tau} \sin\theta_e \big)
\end{bmatrix},
\end{equation}
where $\kappa_c(\eta,\theta_e)=\kappa_\chi(\eta)-k_\omega(\eta)\left\lceil\sin\theta_e\right\rfloor^\alpha$ and $k_\omega(\eta)=\bar{\kappa}-\kappa_\chi(\eta).$ We first exclude $\eta = 0 $ from the state space because the direction of the FT-C2VF, and hence the heading error $\theta_e$, is not defined at $\boldsymbol{\xi} = 0$. Then, we analyze system \eqref{eq:bihuan}. The desired equilibrium point is $\boldsymbol{z}_{d}=(0,1)$, corresponding to $\theta_e=0$ and $\eta=1$. Define  $\mathcal{E}=\{\boldsymbol{z} \in \mathcal{H}:\boldsymbol{F}(\boldsymbol{z}) =\boldsymbol{0}\}$. For a given equilibrium point $\boldsymbol{z}^*=(\theta_e^*,\eta^*)\in \mathcal{E}$, from $\dot{\eta}^* = 0$ and \eqref{eq:equal_psi}, we obtain
\begin{equation}\label{eq:geometric_identity}
\tan\theta_e^* = \frac{\psi_{\nu}(\eta^*)}{\psi_{\tau}(\eta^*)}
\implies
\begin{bmatrix}
\sin\theta_e^* \\
\cos\theta_e^*
\end{bmatrix} = \pm\begin{bmatrix}\psi_{\nu}(\eta^*) \\
\psi_{\tau}(\eta^*)\end{bmatrix}.
\end{equation}
Substituting \eqref{eq:identity}, \eqref{command_kappa}, and \eqref{eq:geometric_identity} into the condition $\dot{\theta}_e^* = 0$ gives two branch equations:
\begin{equation}\label{eq:trap_eq}
\Upsilon_{\pm}(\eta^*) := \kappa_\chi(\eta^*) - k_\omega(\eta^*)\lceil \pm\psi_{\nu}(\eta^{*})\rfloor^\alpha  \mp \frac{1}{r\eta^*} = 0.
\end{equation}
Here, $\eta^{*}$ is a root of the function $\Upsilon_\pm$, where the $\pm$ branches correspond to counterclockwise and clockwise orbital motions, respectively. We present the final stability result.

\begin{theorem}\label{thm:finit_time}
Suppose that $\gamma \in (0,1)$ and
\begin{subequations}\label{eq:param_choose}
\begin{align}
0 < \alpha < \frac{2\gamma}{1+\gamma}, \label{eq:alpha_choose} \\
r \ge \frac{3 + 2\gamma}{\bar{\kappa}}. \label{eq:r_choose}
\end{align}
\end{subequations}
Then system \eqref{eq:bihuan} has the following properties:
\begin{itemize}
    \item[1)] The system possesses exactly two equilibria, namely the desired equilibrium $\boldsymbol{z}_{d}$ and an equilibrium $\boldsymbol{z}_s = (\theta_{es}, \eta_s)$ on the negative branch, where $\eta_s$ is the unique root of $\Upsilon_-(\eta_s) = 0$ in $(1,+\infty)$ and $\theta_{es} \in (\pi/2,\pi)$ is uniquely determined by $\sin\theta_{es} = - \psi_\nu(\eta_s),\cos\theta_{es} = -\psi_\tau(\eta_s)$. 
    \item[2)] The equilibrium $\boldsymbol{z}_s$ is unstable.
    \item[3)] System \eqref{eq:bihuan} admits no nontrivial periodic orbits; in particular, it admits no limit cycles.
    \item[4)]The desired equilibrium $\boldsymbol{z}_{d}$ is almost globally finite-time stable on $\mathcal{H}$.
\end{itemize}
\end{theorem}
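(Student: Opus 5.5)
The plan is to treat the four items in order, working entirely with the planar system \eqref{eq:bihuan} on $\mathcal{H}$ and the reparameterized time $\tau$. The reparameterization is legitimate because $v>0$ before arrival, by \eqref{eq:vlaw} and C2.

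\textbf{Item 1 (equilibria).} I start from the characterization \eqref{eq:geometric_identity}–\eqref{eq:trap_eq}.

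On the positive branch, at $\eta^*=1$ we have $\psi_\nu=0$ and $\psi_\tau=1$. Moreover $\kappa_\chi(1)=(1/r)\psi_\tau'(1)+1/r$, and $\psi_\tau'(1)=0$ by \eqref{eq:kt_def}. Hence $\Upsilon_+(1)=0$, which gives $\boldsymbol z_d$.

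To exclude other roots, I split $\eta^*$ into the ranges $(0,1)$ and $(1,\infty)$.
\begin{itemize}
\item For $\eta^*<1$ we have $\psi_\nu>0$. I want to show $\kappa_\chi-1/(r\eta)-k_\omega\psi_\nu^\alpha<0$. By \eqref{eq:lemma1}, $\kappa_\chi-1/(r\eta)=(\psi_\tau/\eta-1/\eta+\psi_\tau')/r$. Near $\eta=1$ both $\psi_\tau'\sim(1-\eta)^\gamma$ and $\psi_\nu^\alpha\sim(1-\eta)^{\alpha(1+\gamma)/2}$ appear. The condition $\alpha<2\gamma/(1+\gamma)$ makes the negative feedback term dominate, because $\alpha(1+\gamma)/2<\gamma$. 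Away from $\eta=1$, I would use $k_\omega\ge\bar\kappa-\kappa_\chi$ together with $r\bar\kappa\ge3+2\gamma$ to get a uniform sign.
\item For $\eta^*>1$, on the positive branch $\lceil\psi_\nu\rfloor^\alpha<0$, so $\Upsilon_+=\kappa_\chi+k_\omega|\psi_\nu|^\alpha-1/(r\eta)$. This should be positive by the same local exponent comparison near $1$ and by the $r$ bound far away.
\end{itemize}

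On the negative branch, $\Upsilon_-=\kappa_\chi-k_\omega\lceil-\psi_\nu\rfloor^\alpha+1/(r\eta)$.
\begin{itemize}
\item For $\eta<1$ this is $\kappa_\chi+k_\omega\psi_\nu^\alpha+1/(r\eta)>0$, so there is no root.
\item For $\eta>1$ it is $\kappa_\chi+1/(r\eta)-k_\omega|\psi_\nu|^\alpha$. This is positive near $1$ and tends to $-\bar\kappa<0$ as $\eta\to\infty$, since $\kappa_\chi\to0$ and $|\psi_\nu|\to1$. Existence of a root then follows by the IVT. For uniqueness I would show strict monotonicity, using that $\kappa_\chi$ is strictly decreasing (Proposition~\ref{propos:001}) and that $|\psi_\nu|$ and $k_\omega$ are increasing.
\end{itemize}
The sign pattern $\sin\theta_{es}=-\psi_\nu>0$ and $\cos\theta_{es}=-\psi_\tau<0$ places $\theta_{es}$ in $(\pi/2,\pi)$.

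\textbf{Item 2 (instability of $\boldsymbol z_s$).} At $\boldsymbol z_s$ the field is $C^1$, since $\sin\theta_{es}\neq0$ and $\eta_s\ne1$. I would compute the Jacobian there. Using \eqref{eq:identity} and $(\sin\theta_{es},\cos\theta_{es})=-(\psi_\nu,\psi_\tau)$, the $\partial\dot\eta/\partial\theta_e$ entry is $-(\psi_\nu\sin\theta_e+\psi_\tau\cos\theta_e)/r=1/r>0$. The $\partial\dot\theta_e/\partial\eta$ entry is $\Upsilon_-'(\eta_s)/(\ldots)$ with sign fixed by the monotonicity established in item 1. I then aim to show either $\det<0$ (a saddle) or positive trace; the saddle case is what I expect.

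\textbf{Item 3 (no periodic orbits).} Periodic orbits on the cylinder are either contractible or wrap around $\mathbb S^1$.
\begin{itemize}
\item A contractible orbit must enclose an equilibrium of index $+1$. Since $\boldsymbol z_s$ is a saddle, the only candidate is $\boldsymbol z_d$. I would rule this out with the Lyapunov-like function below, which is strictly decreasing off $\boldsymbol z_d$ in a region containing it.
\item For wrapping orbits, I would use a Dulac/Bendixson argument on the annulus: take a multiplier $B(\eta)$ and show that $\mathrm{div}(B\boldsymbol F)$ has one sign. Alternatively, integrate $\dot\eta$ over one revolution, using $\dot\theta_e$'s sign structure, and show that a net change is forced.
\end{itemize}

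\textbf{Item 4 (almost global finite-time stability).} The natural tool is $V=1-\cos\theta_e$, which reflects the paper's structure in C1. Using $\kappa_c$, we get $\dot\theta_e=\kappa_\chi(1-\cos\theta_e)-\kappa_\perp\sin\theta_e-k_\omega\lceil\sin\theta_e\rfloor^\alpha$. Near $\theta_e=0$ the $\lceil\cdot\rfloor^\alpha$ term dominates, provided $\kappa_\perp$ stays bounded. That gives $\dot V\le -cV^{(1+\alpha)/2}$ locally, hence finite-time heading alignment.

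After alignment, $\dot\eta=\psi_\nu/r$, and Proposition~\ref{propos:003} gives finite-time arrival at $\eta=1$.

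Globally, by Poincaré–Bendixson on the cylinder together with items 1–3, every $\omega$-limit set is an equilibrium. Trajectories are bounded in $\eta$ because $\dot\eta<0$ for large $\eta$ once aligned, and the heading dynamics keep them bounded otherwise. Solutions converging to $\boldsymbol z_s$ form its stable manifold, which has measure zero; the other exceptional set is $\eta=0$.

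\textbf{Main obstacle.} I expect the hardest step to be the non-Lipschitz behavior at $\eta=1$ and $\theta_e=0$. There $\kappa_\perp\sim\psi_\nu'$ blows up like $(1-\eta)^{(\gamma-1)/2}$, so the local finite-time estimate has to exploit the exponent condition \eqref{eq:alpha_choose} to balance $\kappa_\perp\sin\theta_e$ against $k_\omega|\sin\theta_e|^\alpha$. The same balance is needed for the uniqueness and non-existence of roots near $\eta=1$ in item 1.
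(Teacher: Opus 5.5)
Your item 1 follows the paper. Item 2, however, misclassifies $\boldsymbol{z}_s$, and item 3 depends on that misclassification.

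Write $\varphi=\theta_e+\varphi_\chi(\eta)$ with $\cos\varphi_\chi=\psi_\nu$ and $\sin\varphi_\chi=\psi_\tau$. Then \eqref{eq:bihuan} becomes $\dot\eta=\frac{1}{r}\cos\varphi$, $\dot\varphi=\kappa_c-\frac{\sin\varphi}{r\eta}$, and $\dot\varphi$ restricted to $\varphi=-\pi/2$ is exactly $\Upsilon_-(\eta)$. Hence $\det\boldsymbol{J}(\boldsymbol{z}_s)=-\frac{1}{r}\Upsilon_-'(\eta_s)>0$, by the same monotonicity you use for uniqueness. At any equilibrium with $\sin\theta_e\neq0$ the trace reduces to $-\alpha k_\omega|\sin\theta_e|^{\alpha-1}\cos\theta_e$, which is positive at $\boldsymbol{z}_s$ because $\cos\theta_{es}=-\psi_\tau<0$. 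So $\boldsymbol{z}_s$ is a repelling node or focus of index $+1$, as the paper shows, not a saddle.

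Instability still follows from the positive trace. But a contractible periodic orbit encircling $\boldsymbol{z}_s$ alone is fully compatible with index theory, and nothing in your plan excludes it. Also, the only solution converging to $\boldsymbol{z}_s$ is the constant one, so item 4 has no stable manifold to discard.

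Your Dulac idea for orbits winding the cylinder also fails. For any $B(\eta)>0$, $\nabla\cdot(B\boldsymbol{F})$ contains $-\alpha Bk_\omega|\sin\theta_e|^{\alpha-1}\cos\theta_e$, which tends to $-\infty$ near $\theta_e=0$ and to $+\infty$ near $\theta_e=\pi$. The paper instead uses the invariance of $\mathcal{P}=\{\theta_e=0\}$. This excludes, in one stroke, winding orbits and orbits around $\boldsymbol{z}_d$. It then tries to confine any orbit around $\boldsymbol{z}_s$ to $\mathcal{N}=\{\theta_e\in(\pi/2,\pi),\ \eta>1\}$, where $B=\eta$ gives positive divergence. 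Be aware that its check on the edge $\theta_e=\pi/2$ assumes $\psi_\nu'>0$. In fact $\psi_\nu'<0$ and $\kappa_\perp\to-\infty$ as $\eta\to1^+$, so $\dot\theta_e=2\kappa_\chi-\bar\kappa-\kappa_\perp>0$ near $\eta=1$, and the flow enters $\mathcal{N}$ there. The larger half-strip $\{0<\theta_e<\pi,\ \eta>1\}$ is never entered by the flow, but on it $B=\eta$ no longer gives a sign-definite divergence. So the confinement step needs its own repair whichever route you take.

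The same blow-up of $\kappa_\perp$ breaks your local use of $V=1-\cos\theta_e$. Since $\kappa_\perp\to-\infty$ as $\eta\to1$ from either side, the term $-\kappa_\perp\sin^2\theta_e$ in $\dot V$ is positive. For any fixed small $\theta_e\neq0$ it dominates $k_\omega|\sin\theta_e|^{1+\alpha}$ as $\eta\to1$. So $V$ is not decreasing on any full neighborhood of $\boldsymbol{z}_d$. Consequently, neither your item-3 use of it nor your item-4 estimate (which assumes $\kappa_\perp$ stays bounded) goes through. Condition \eqref{eq:alpha_choose} cannot fix this by itself, because $\theta_e$ and $\eta-1$ are independent coordinates.

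The missing ingredient is a trajectory-level estimate: eventually $|\sin\theta_e|\le C|\eta-1|^{(1+\gamma)/2}$. This is Lemma~\ref{eq:lemma_delta_relation}, proved with the barrier $g=\sin\theta_e-M(1-\eta)^{(1+\gamma)/2}$. After it, $|\kappa_\perp||\sin\theta_e|^{1-\alpha}=O\bigl(|\eta-1|^{\gamma-\alpha(1+\gamma)/2}\bigr)\to0$, precisely under \eqref{eq:alpha_choose}.

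Finally, Poincar\'e--Bendixson needs forward orbits to stay a positive distance from $\eta=0$, where \eqref{eq:bihuan} is undefined. Orbits starting at $\eta>0$ can run into that singularity. So the exceptional set is not just $\{\eta=0\}$ but the set $\mathcal{W}_\eta$ of such initial conditions. Showing it has measure zero requires a separate argument; in the paper this is Lemma~\ref{lemma:sbas}, a blow-up at $(\eta,\varphi)=(0,\pi)$ that yields a saddle with a one-dimensional stable manifold.
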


\begin{IEEEproof}
    See Appendix \ref{app:007}.    
\end{IEEEproof}

Building upon the finite-time stability of the closed-loop error system established in Theorem \ref{thm:finit_time}, we now extend our analysis to the full kinematic response of the nonholonomic robot. The following theorem formalizes the almost-global finite-time convergence of the robot to the target configuration under the proposed control laws, thereby fulfilling the two objectives (C1 and C2) in Problem~\ref{pro:002}.

\begin{theorem}\label{thm:velocity_convergence}
Under the control laws \eqref{eq:vlaw} with $\beta \in (0, 0.5)$ and \eqref{eq:omegalaw}, the nonholonomic robot \eqref{eq:kinematic_system} subject to the constraints \eqref{eq:control_input_set} converges to the target configuration $\boldsymbol{q}_d$ in finite time, except for initial configurations belonging to the measure-zero non-converging set
$\mathcal{Q}_c=\{\boldsymbol{q}=[\boldsymbol{\xi}^\top,\theta]^\top: \boldsymbol{\xi} \in \mathcal{W}_\eta\} \cup  \{\boldsymbol{q}=[\boldsymbol{\xi}^\top,\theta]^\top: (\theta-\angle \boldsymbol{\chi}(\boldsymbol{\xi}), \|\boldsymbol{\xi}\|/r) = \boldsymbol{z}_s\}$.
\end{theorem}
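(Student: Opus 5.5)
We transfer the finite-time stability of $\boldsymbol{z}_d$ from Theorem~\ref{thm:finit_time} to the original time scale. We then show that finite-time convergence to $\boldsymbol{z}_d$ forces finite-time convergence to $\boldsymbol{q}_d$ along $\mathcal{M}_r$, using the speed law \eqref{eq:vlaw} with $\beta<1/2$.

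\textbf{Step 1: Transfer to the original time scale.} Let $\boldsymbol{q}_0\notin\mathcal{Q}_c$, so that $\eta(\boldsymbol{\xi}_0)>0$ and $\boldsymbol{z}_0\neq\boldsymbol{z}_s$. By Theorem~\ref{thm:finit_time} (item 4, together with items 1--3, which show that the only non-converging initial states are the equilibrium $\boldsymbol{z}_s$ and the excluded set $\eta=0$), the $\tau$-trajectory reaches $\boldsymbol{z}_d$ at some finite $\tau^*$. Before $\boldsymbol{q}=\boldsymbol{q}_d$, we have $\Xi>0$, so \eqref{eq:vlaw} gives $v>0$. Moreover, $v\ge v_-+(v_r-v_-)(1-e^{-k_v\Xi^\beta})$. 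Along the trajectory, $\eta$ is bounded, since $\eta$ stays in a compact set on the converging orbit. Hence $\Xi$ is bounded, and $v_r-v_-$ is bounded below by a positive constant because $|\kappa_c|\le\bar\kappa<\bar\omega/v_-$. Consequently, as long as $\Xi\ge\delta>0$, $v$ is bounded below by a positive constant, and $t=\int d\tau/v$ is finite up to the time when $\boldsymbol{z}$ reaches $\boldsymbol{z}_d$ or $\Xi$ becomes small. For the case $v_-=0$, we must still handle the regime where $\Xi$ becomes small while $\boldsymbol{z}\neq \boldsymbol{z}_d$. Near $\boldsymbol{q}_d$ this means the robot is near $\boldsymbol{\xi}_d\in\mathcal{M}_r$, and this regime is absorbed into Step 2. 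After alignment ($\theta_e\equiv0$, $\eta\equiv1$), condition C1 holds with $T_c$ equal to this finite time.

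\textbf{Step 2: Arrival at $\boldsymbol{q}_d$ along $\mathcal{M}_r$.} Once $\boldsymbol{z}=\boldsymbol{z}_d$, the robot moves along $\mathcal{M}_r$ with heading tangent to the circle, so $\Xi=\|\boldsymbol{\xi}-\boldsymbol{\xi}_d\|^2=4r^2\sin^2(s/2r)$, where $s$ is the remaining arc length to $\boldsymbol{\xi}_d$. Thus $\Xi\approx s^2$ near the target, and $\dot s=-v$.

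If $v_d=v_->0$, then $v\ge v_->0$, so the robot reaches $\boldsymbol{\xi}_d$ within time $2\pi r/v_-$ and revisits it periodically with $\Delta T\le 2\pi r/v_-$. This gives C2 with $v=v_d$ at the target.

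If $v_-=0$, then near the target $v\approx v_r(1-e^{-k_v s^{2\beta}})\approx c\,s^{2\beta}$ with $c>0$, because $v_r$ is bounded away from zero on $\mathcal{M}_r$. Hence $\dot s\le -c' s^{2\beta}$. Since $2\beta<1$, this is exactly the finite-time scalar system of Example~\ref{example:001}, and $s$ reaches $0$ in time at most $s_0^{1-2\beta}/(c'(1-2\beta))$. At that moment $v=0$, so $\omega=0$ and the robot stays at $\boldsymbol{q}_d$. Away from the target the speed is bounded below, so the total time is finite, which gives $T_a$.

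\textbf{Step 3: Measure zero.} Finally, $\mathcal{Q}_c$ has measure zero: the first set is the preimage of $\eta=0$, a single point times $\mathbb{S}^1$; the second is the preimage of the single point $\boldsymbol{z}_s$ under a submersion, which is a circle in $\mathcal{C}$.

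\textbf{Main obstacle.} The hard part is the $v_-=0$ case in Step 1. The reparameterization $d\tau=v\,dt$ degenerates as $v\to0$, so we must rule out that $\Xi\to0$ before $\boldsymbol{z}$ reaches $\boldsymbol{z}_d$, which would cause the $t$-time to blow up or stall. I would try to show that $\Xi$ bounded below on the $\tau$-interval $[0,\tau^*)$ restricted away from a neighborhood of $\boldsymbol{q}_d$. Inside that neighborhood, I would use a combined bound: there $\theta_e$ already obeys a finite-time inequality $\dot\theta_e\lesssim -|\sin\theta_e|^\alpha$ in $\tau$, and $v\gtrsim\Xi^\beta$ with $\beta<1/2$. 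This should give a finite-time estimate in $t$ for the coupled quantity $\Xi$, for example via $\dot\Xi\le -c\,\Xi^{\beta+1/2}$ with exponent $\beta+1/2<1$.
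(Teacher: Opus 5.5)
Your Step~2 is the paper's whole proof: arc length $s$ with $\dot s=-v$, $\Xi=4r^2\sin^2(s/2r)$, then $\dot s\le -v_-$ if $v_->0$, and $\dot s\le -c\,s^{2\beta}$ via $\sin x\ge 2x/\pi$ and $1-e^{-x}\ge c_1x$ if $v_-=0$. The paper adds only a one-line appeal to Theorem~\ref{thm:finit_time} before it. The problems are in Steps~1 and~3.

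\textbf{The exceptional set.} You read the first part of $\mathcal{Q}_c$ as the singular set $\{\eta=0\}$. You then assert that items 1--3 of Theorem~\ref{thm:finit_time} leave $\boldsymbol{z}_s$ and $\eta=0$ as the only non-converging states. That is false. The notation ``$\boldsymbol{\xi}\in\mathcal{W}_\eta$'' is loose, but $\mathcal{W}_\eta$ is the set of reduced initial states $\boldsymbol{z}_0$ whose orbit has $\inf\eta=0$ (Lemma~\ref{lemma:sbas}). For these states the robot is steered into the center of $\mathcal{M}_r$, heading radially inward, and $\theta_e$ and the control law become undefined in finite time. After blowing up the boundary point $\boldsymbol{p}_\pi$, this set is the stable manifold of a saddle whose stable eigendirection is transverse to $\{x_1=0\}$. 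So it is a genuine curve inside $\{\eta>0\}$, not just $\{\eta=0\}$. Items 1--3 (equilibria, repeller, no periodic orbits) cannot exclude it. Poincar\'e--Bendixson applies only once the orbit is confined to an annulus $\eta\ge\varepsilon$, and that confinement is exactly what $\boldsymbol{z}_0\notin\mathcal{W}_\eta$ supplies in step 4 of the proof of Theorem~\ref{thm:finit_time}. Consequently the first half of your Step~3 is also wrong. That set is not a point times $\mathbb{S}^1$ but the two-dimensional preimage of this invariant curve. Its null measure needs the blow-up/stable-manifold argument of Lemma~\ref{lemma:sbas}, plus the fact that $\boldsymbol{q}\mapsto(\theta_e,\eta)$ pulls null sets back to null sets (for fixed $\boldsymbol{\xi}$ it is a rotation in $\theta$, so Fubini applies).

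\textbf{The time reparameterization.} You spot something the paper passes over silently: it uses Theorem~\ref{thm:finit_time}, a statement in $\tau$, directly in $t$. This is harmless when $v_->0$, since $t\le\tau/v_-$. Your remedy for $v_-=0$ does not work, though. ``Absorbed into Step~2'' is circular, because Step~2 presupposes $\boldsymbol{z}=\boldsymbol{z}_d$. An inequality $\dot\Xi\le -c\,\Xi^{\beta+1/2}$ also cannot hold before alignment, for two reasons:
\begin{itemize}
\item With $\boldsymbol{z}\neq\boldsymbol{z}_d$ the robot can pass close to $\boldsymbol{\xi}_d$ and move on, after which $\Xi$ grows again.
\item Where $\|\boldsymbol{\xi}-\boldsymbol{\xi}_d\|^2\ll\mu(1-\cos\theta_e)$, the only decay is $-\mu v k_\omega|\sin\theta_e|^{1+\alpha}$. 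That gives exponent $\beta+\tfrac{1+\alpha}{2}$, which exceeds $1$ for the values $\alpha=0.6$, $\beta=0.4$ of Section~\ref{sec:5}.
\end{itemize}

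What you actually need is only $\int_0^{\tau^*}\mathrm{d}\tau/v<\infty$. Outside a small neighborhood of $\boldsymbol{q}_d$, $\Xi$ and hence $v$ are bounded below. Inside it, take the along-track coordinate $u=\boldsymbol{d}(\theta_d)^\top(\boldsymbol{\xi}-\boldsymbol{\xi}_d)$. Then $v\ge c\,\Xi^{\beta}\ge c|u|^{2\beta}$ and $\mathrm{d}u/\mathrm{d}\tau=\cos(\theta-\theta_d)\ge\tfrac12$. Each passage through the neighborhood therefore costs at most $\tfrac{2}{c}\int_{-\epsilon}^{\epsilon}|u|^{-2\beta}\,\mathrm{d}u$, which is finite precisely because $2\beta<1$. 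Only boundedly many passages fit into $[0,\tau^*]$, because reversing $u$ requires the heading to rotate by a fixed angle at rate $|\mathrm{d}\theta/\mathrm{d}\tau|=|\kappa_c|\le\bar\kappa$. This integrability, not a Lyapunov inequality for $\Xi$, is where $\beta<\tfrac12$ is used before alignment.
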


\begin{IEEEproof}
See Appendix \ref{app:008}. 
\end{IEEEproof}

\begin{remark}
    Notably, if the condition on $\gamma$ in Theorem \ref{thm:finit_time} is changed to $\gamma >  1$ (while other conditions of $\alpha$ and $r$ remain), the original finite-time convergence of \eqref{eq:bihuan} becomes to only almost global asymptotic stability at $\boldsymbol{z}_{d}$. In the existing literature, most VF-based motion-planning algorithms for nonholonomic robots rely heavily on the Jacobian matrix of the vector field (i.e., the change rate of  the vector field orientation) to derive the control law, regardless of whether curvature constraints are enforced \cite{11300826,7484276,11246344,9312173,11127300}. It remains largely unexplored whether reliable VF tracking can be achieved using solely geometric curvature feedforward and heading feedback. This work has answered this question by establishing a novel Jacobian-free control law with strict convergence guarantees while satisfying explicit curvature constraints.
\end{remark}

\section{Numerical Validation and Method Comparison}\label{sec:5}
In this section, we verify the instability of the non-converging set and the effectiveness of the proposed algorithm using the numerical simulation groups detailed in Table \ref{tab:initial_conditions}. Moreover, Monte Carlo experiments demonstrate the advantages of the proposed approach over existing VF-based motion-planning methods. For all simulations, the kinematic parameters are configured as follows: $v_{+} = 1$, $\bar{\omega} = 1$, $\rho = 1$ and $\bar{\kappa} = 1$. The parameters of the FT-C2VF are set to $\gamma = 0.5$ and $r = 4$, while the control law parameters are chosen as $\alpha = 0.6$, $\beta = 0.4$, $k_v = 1$, and $\mu = 4$.

\begin{table}[htbp]
  \centering
  \footnotesize
  \setlength{\tabcolsep}{3pt}
  \caption{Initial and Target Configurations of Simulations}\label{tab:001}
  \label{tab:initial_conditions}
  \begin{tabular}{@{} l l c c c @{}} 
    \toprule
    \multicolumn{2}{c}{\textbf{Setting}} & $\boldsymbol{q}_0^\top = [x_0, y_0, \theta_0]$ & $\boldsymbol{q}_d^\top = [x_d, y_d, \theta_d]$ & $v_d$ \\
    \midrule
    \multirow{4}{*}{\begin{tabular}[c]{@{}l@{}}\textbf{Group 1}\\ $(\epsilon=10^{-4})$\end{tabular}} 
    & \textbf{Rob.~1} & $[0, 0, 0]$ & $[0, 4, \pi]$ & $0$ \\
    & \textbf{Rob.~2} & $[\epsilon, \epsilon, \epsilon]$ & $[0, 4, \pi]$ & $0$ \\
    & \textbf{Rob.~3} & $[r\eta_s, 0, \theta_s]$ & $[0, -4, 0]$ & $0$ \\
    & \textbf{Rob.~4} & $[r\eta_s+\epsilon, \epsilon, \theta_s+\epsilon]$ & $[0, -4, 0]$ & $0$ \\
    \midrule
    \multirow{4}{*}{\textbf{Group 2}} 
    & \textbf{Rob.~5} & $[0.8, 0.7, -5\pi/6]$ & $[-2\sqrt{2}, 2\sqrt{2}, -3\pi/4]$ & $0$ \\
    & \textbf{Rob.~6} & $[-0.5, 0.7, -\pi/5]$ & $[-2\sqrt{2}, -2\sqrt{2}, -\pi/4]$ & $0$ \\
    & \textbf{Rob.~7} & $[-1.2, -0.9, \pi/4]$ & $[2\sqrt{2}, -2\sqrt{2}, \pi/4]$ & $0$ \\
    & \textbf{Rob.~8} & $[0.4, -0.7, 5\pi/8]$ & $[2\sqrt{2}, 2\sqrt{2}, 3\pi/4]$ & $0$ \\
    \midrule
    \multirow{4}{*}{\textbf{Group 3}} 
    & \textbf{Rob.~9} & $[4.9, 4.8, -3\pi/5]$ & $[-4, 0, -\pi/2]$ & $0.2$ \\
    & \textbf{Rob.~10} & $[-3.9, 3.8, 2\pi/3]$ & $[0, -4, 0]$ & $0.4$ \\
    & \textbf{Rob.~11} & $[-4.2, -4.5, -2\pi/5]$ & $[4, 0, \pi/2]$ & $0.6$ \\
    & \textbf{Rob.~12} & $[5.4, -4.2, -\pi/2]$ & $[0, 4, \pi]$ & $0.8$ \\
    \bottomrule
  \end{tabular}
\end{table}

\begin{figure*}[!t]
    \centering
    \subfloat[]{\includegraphics[width=0.3775\linewidth]{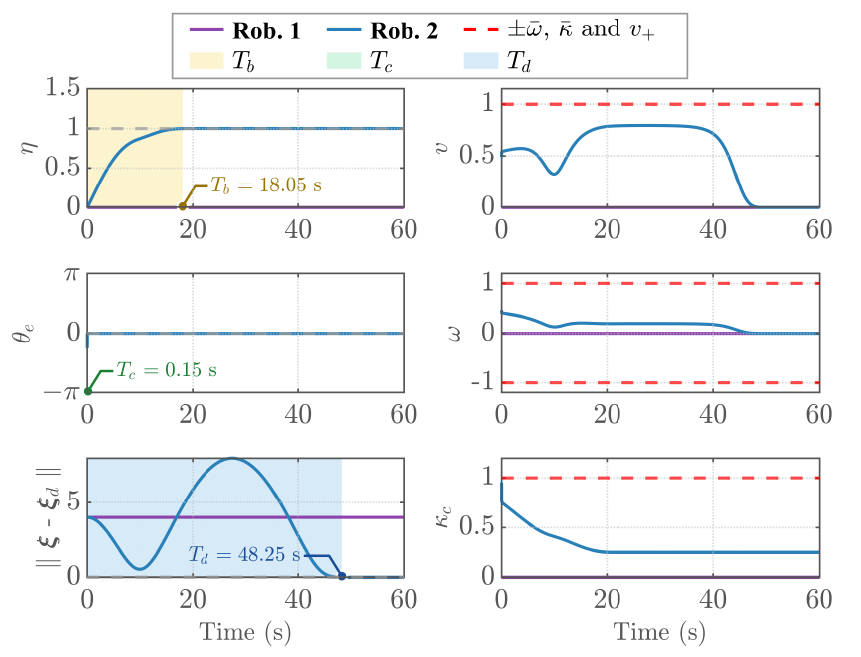}%
    \label{fig:data_group1}}
    \subfloat[]{\includegraphics[width=0.235\linewidth]{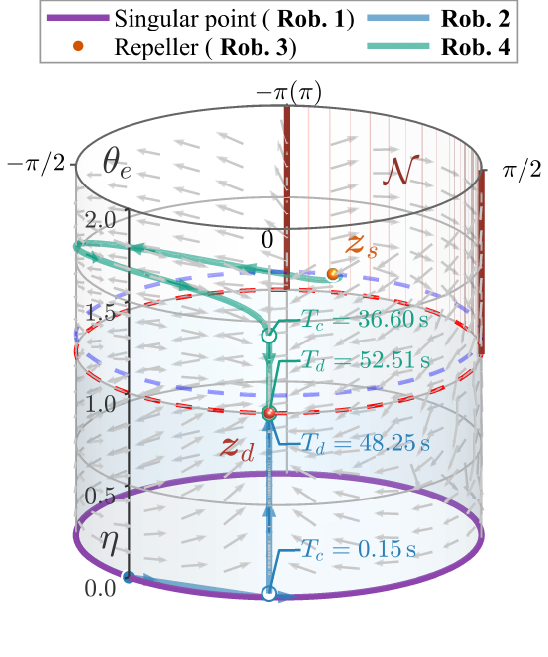}%
    \label{fig:limit_cycle}}
    \subfloat[]{\includegraphics[width=0.3775\linewidth]{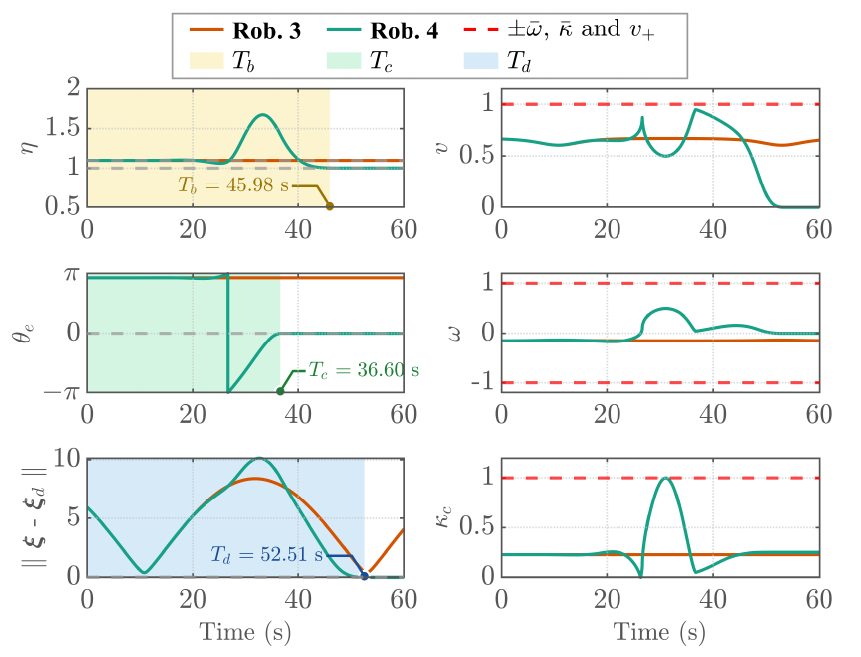}%
    \label{fig:data_group2}}
    
    \subfloat[]{\includegraphics[width=0.245\linewidth]{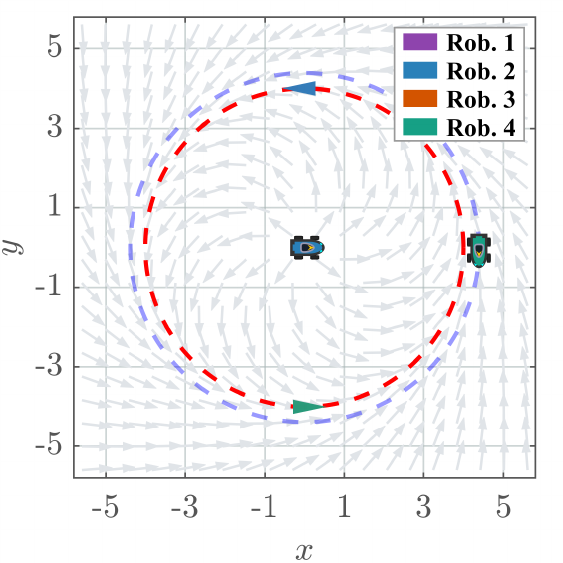}%
    \label{fig:snap1}}
    \subfloat[]{\includegraphics[width=0.245\linewidth]{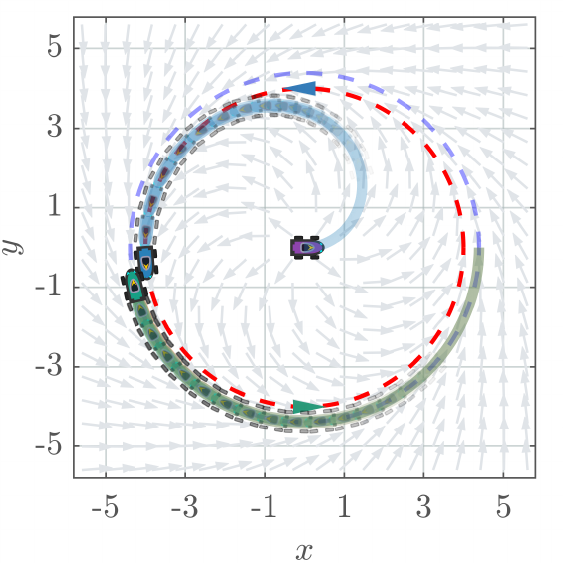}%
    \label{fig:snap2}}
    \subfloat[]{\includegraphics[width=0.245\linewidth]{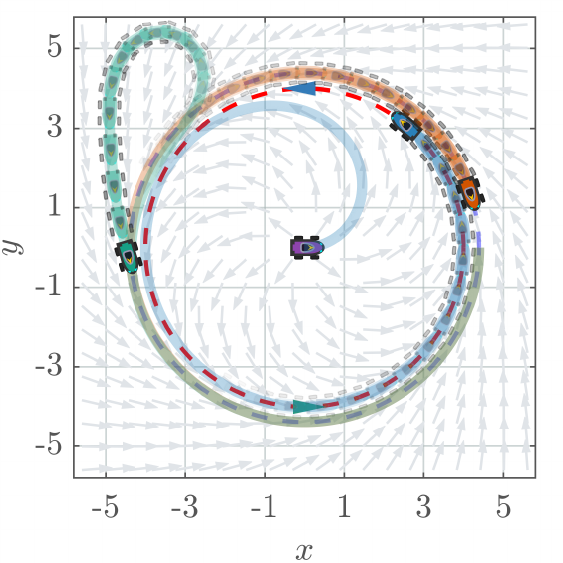}%
    \label{fig:snap3}}
    \subfloat[]{\includegraphics[width=0.245\linewidth]{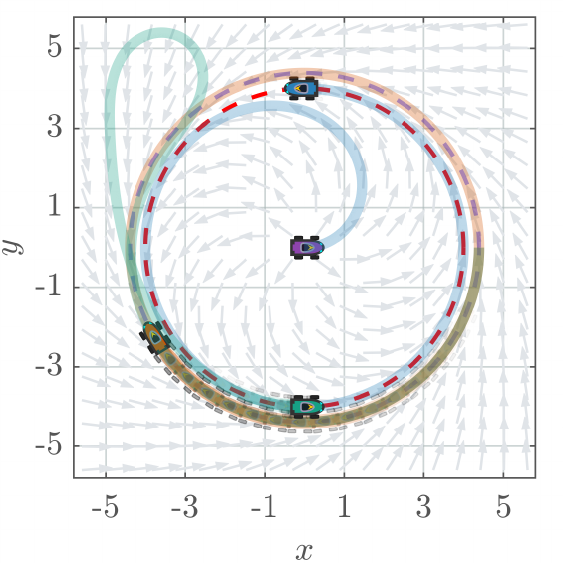}%
    \label{fig:snap4}}
    
    \caption{Simulation verification of the instability of the non-converging set. Under the proposed control law, (a) and (c) present the temporal evolution of the states for different robots, where the state errors ($\eta$, $\theta_e$, and $\|\boldsymbol{\xi}-\boldsymbol{\xi}_d\|$) converge to zero in finite time ($T_b$, $T_c$, and $T_d$ denote the convergence times for the respective stages). (b) illustrates the trajectory evolution of the four robots governed by the closed-loop system. (d)–(g) display snapshots of $\textrm{Rob.~1-4}$ at various time instants: (d) $t = 0\mathrm{s}$; (e) $t = 20\mathrm{s}$; (f) $t = 40\mathrm{s}$; and (g) $t = 60\mathrm{s}$.}
    \label{fig:simulation1}
\end{figure*}

\subsection{Instability of the Non-converging Set}\label{sec:5.A}
In practical physical environments, disturbances such as sensor noise are inevitable. Even if the initial configuration of the robot satisfies $\boldsymbol{q}_0 \in \mathcal{Q}_c$, minor perturbations force the robot out of this set, triggering the finite-time convergence property. The instability of $\mathcal{Q}_c$ is verified through Group 1. As illustrated in Figs.~\ref{fig:data_group1} and \ref{fig:data_group2}, under ideal conditions without perturbations, Rob.~1, positioned at the singularity, remains stationary, whereas Rob.~3, initialized at the repeller, orbits with a radius of $R = r\eta_s$ and a heading satisfying $\theta = \theta_{es} + \angle\boldsymbol{\chi}(\boldsymbol{\xi})$. However, when subject to minor perturbations, the corresponding robots (Rob.~2 and Rob.~4) escape these unstable states and successfully converge to the target configuration in finite time. Furthermore, Fig.~\ref{fig:limit_cycle} depicts the phase-plane trajectories governed by the closed-loop system. Theorem \ref{thm:finit_time} confirms that the perturbed trajectories first converge to the forward-invariant manifold $\mathcal{P}$ at time $T_c$, and subsequently reach the desired equilibrium $\boldsymbol{z}_{d}$ at time $T_d$. Snapshots detailing the dynamic evolution of the trajectories are provided in Figs.~\ref{fig:snap1}--\ref{fig:snap4}.

\begin{figure*}[!t]
    \centering
    \subfloat[]{
        \includegraphics[width=0.96\linewidth]{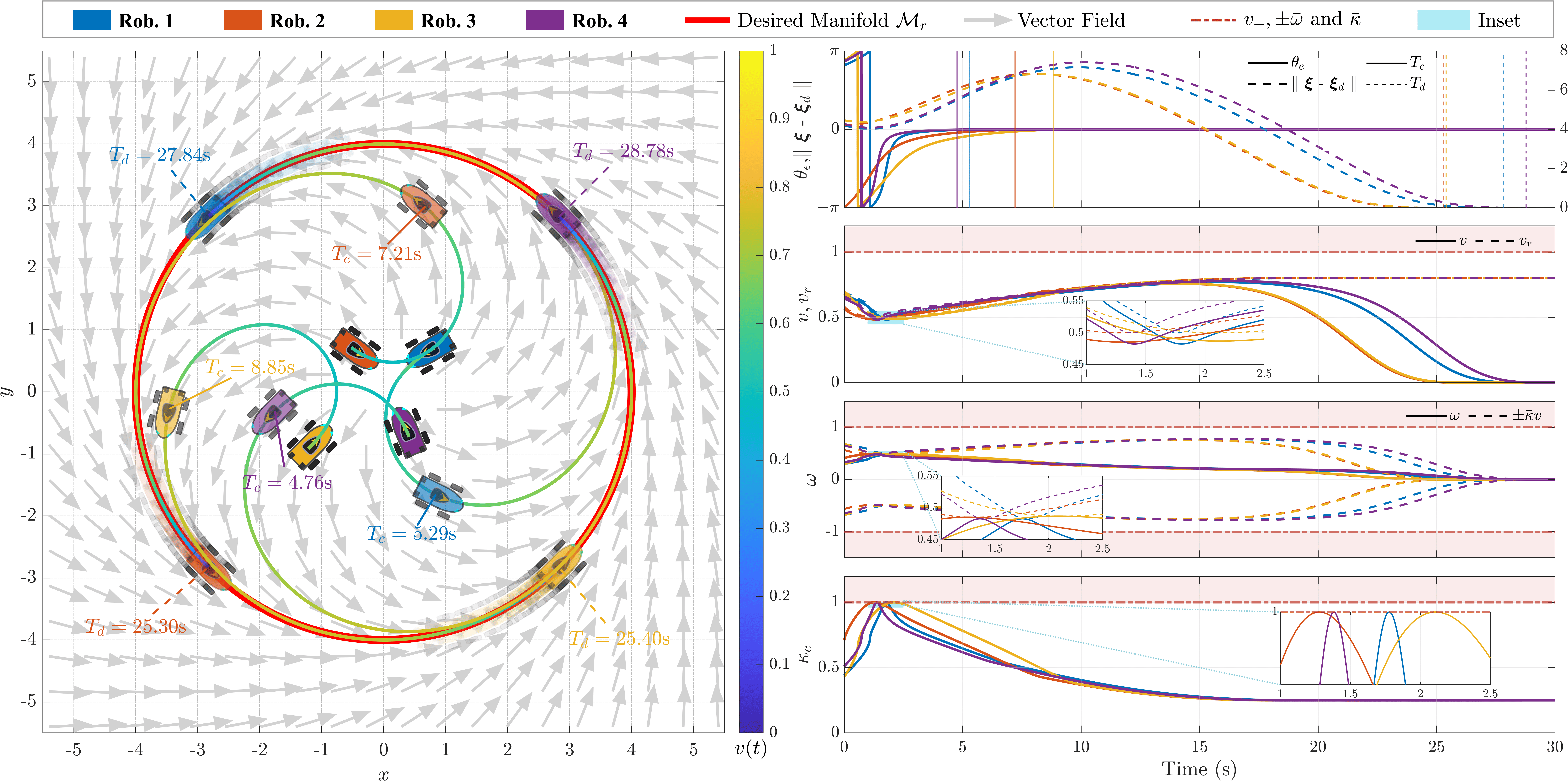}
        \label{fig:sub1}
    }%
    \hspace{-2mm}%
    \subfloat[]{
        \includegraphics[width=0.96\linewidth]{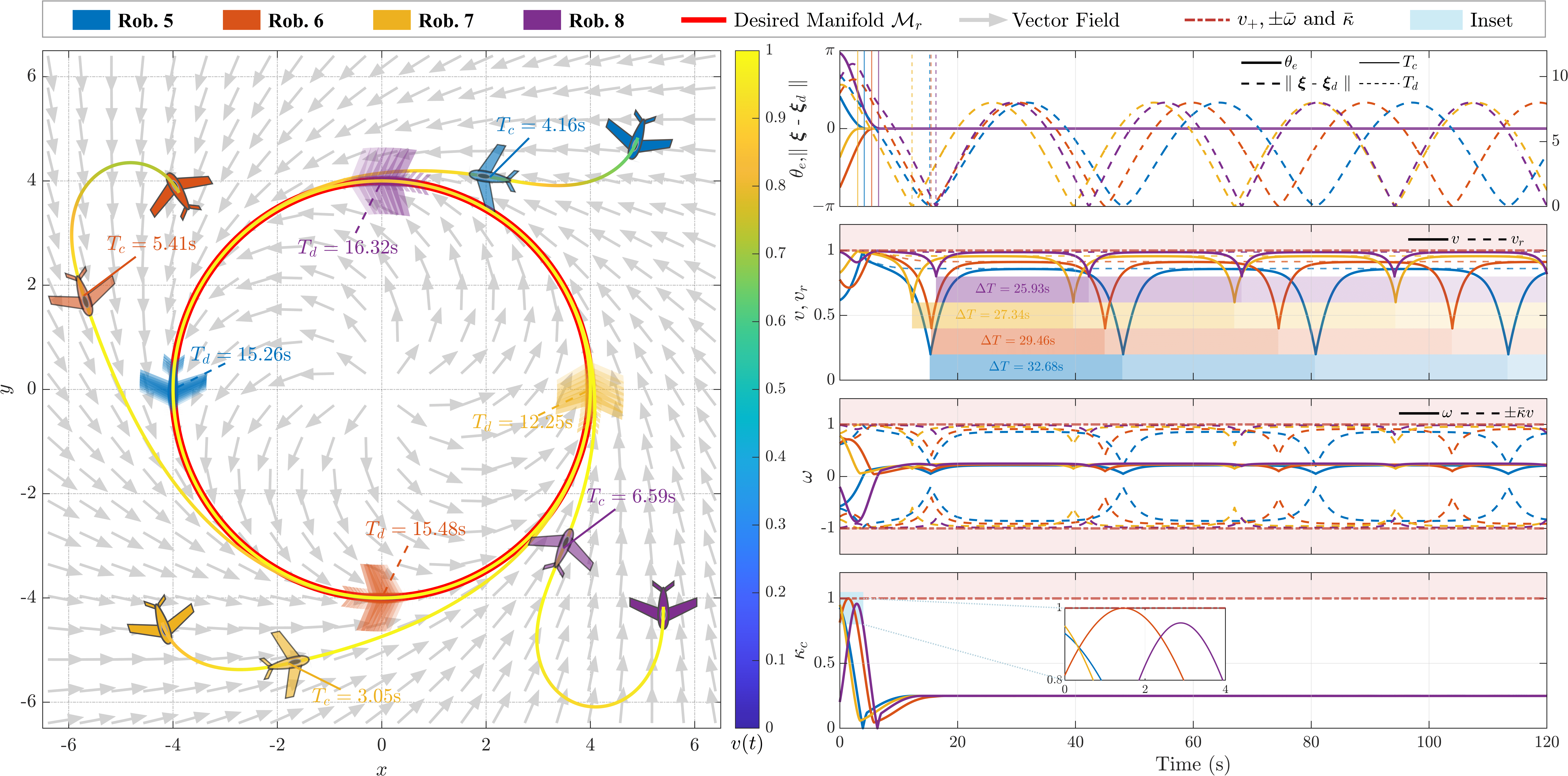}
        \label{fig:sub2}
    }
    \caption{Simulation verification of the effectiveness of the proposed algorithm. Under the proposed control law, the left panels illustrate the motion trajectories of the robots and their corresponding velocity distributions. The right panels of (a) demonstrate that each robot in Group 2 converges to the target configuration $\boldsymbol{q}_d$. The right panels of (b) show that each robot in Group 3 periodically passes through the target configuration $\boldsymbol{q}_d$ with a desired velocity $v_d$ and a recurrence period of $\Delta T$. Moreover, the heading errors $\theta_e$ and the position errors $\|\boldsymbol{\xi}-\boldsymbol{\xi}_d\|$ of all robots converge to zero in finite time, at $T_c$ and $T_d$, respectively.}
    \label{fig:simulation2}
\end{figure*}

\subsection{Effectiveness of the Proposed Algorithm}\label{sec:5.B}
Group 2 and Group 3 are utilized to validate the efficacy of the proposed approach. Under the proposed motion planning algorithm, as shown in Figs.~\ref{fig:sub1} and \ref{fig:sub2}, the configuration errors of the nonholonomic robots converge to zero, while the control inputs and trajectory curvatures remain confined within their permissible physical limits, i.e., $|\omega| \le \min\{\bar{\omega}, \bar{\kappa}v\}$ and $|\kappa|\le \bar{\kappa}$. These results further validate Theorems \ref{thm:admissible_control}--\ref{thm:velocity_convergence} established in Section \ref{sec:4}.

\subsection{Comparison with other VF-based Methods}\label{sec:5.C}

\begin{table*}[htbp]
    \centering
    \caption{Vector Fields and Control Laws in Monte Carlo Experiments}
    \label{tab:parameter_setup}
    \renewcommand{\arraystretch}{1.4} 
    \footnotesize
    
    \begin{tabular}{l c c c >{\centering\arraybackslash}p{4.5cm}}
        \toprule[1.5pt] 
        \textbf{Algorithm} & \textbf{Vector Field} & \textbf{Forward Speed} & \textbf{Angular Velocity} & \textbf{Parameter} \\
        \midrule
        \textbf{FT-C2VF} (Ours) & 
        Eq.~\eqref{eq:FT-C2VF} & 
        Eq.~\eqref{eq:vlaw} & 
        Eq.~\eqref{eq:omegalaw} & 
       Same as in Section \ref{sec:5.B} \\
        \textbf{CVF} \cite{11300826} & 
        \cite[eq.~(15)]{11300826} & 
        \cite[eq.~(20)]{11300826} & 
        \cite[eq.~(17)]{11300826} & 
        $c_{\boldsymbol{p}}=9, c_{\omega}=\pi$, \cite[eq.~(26)]{11300826} \\
        \textbf{AVF} \cite{7484276} & 
        \cite[eq.~(8)]{7484276} & 
        \cite[eq.~18(a)]{7484276} & 
        \cite[eq.~18(b)]{7484276} & 
        $k_u=1, k_\omega=1$ \\
        \textbf{GVF} \cite{7942030} & 
        \cite[eq.~(9)]{7942030} & 
        Constant & 
        \cite[eq.~(23)]{7942030} & 
        $k_n=0.18,k_\delta = 1$ \\
        \textbf{FT-GVF} \cite{11246344} & 
        \cite[eq.~(3)]{11246344} & 
        Constant & 
        \cite[eq.~(13)]{11246344} & 
        $k=0.4,\alpha=0.6,k_\theta = 1$ \\   
        \bottomrule[1.5pt] 
        \multicolumn{5}{l}{\footnotesize The parameter notations are adopted from their respective original references.} \\
    \end{tabular}
\end{table*}

\begin{figure*}[!t]
\centering
\noindent\makebox[\linewidth][c]{%
\subfloat[]{
    \includegraphics[width=0.61\linewidth]{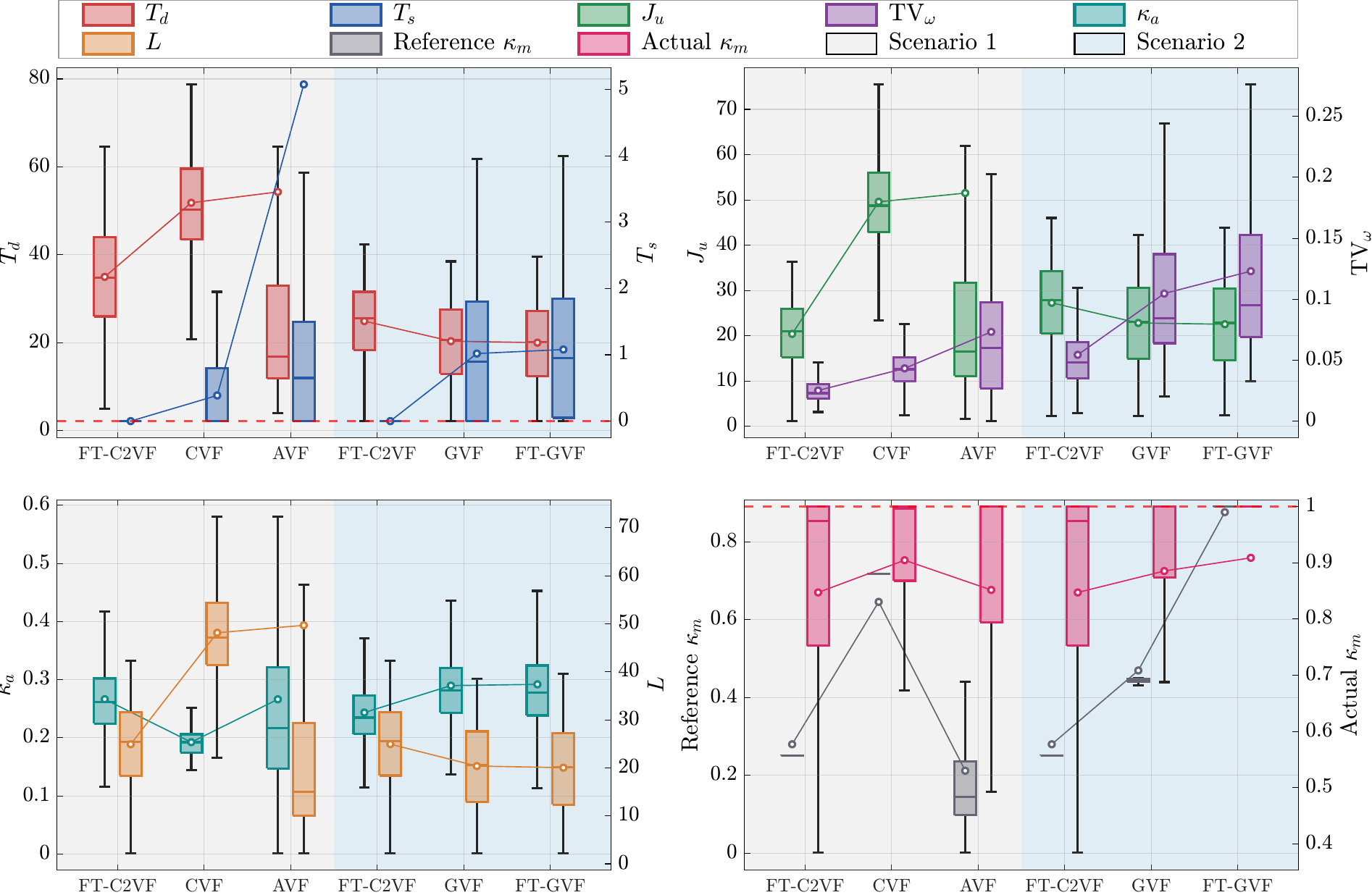}
    \label{fig3:sub_a}
} 
\subfloat[FT-C2VF]{
    \includegraphics[width=0.36\linewidth]{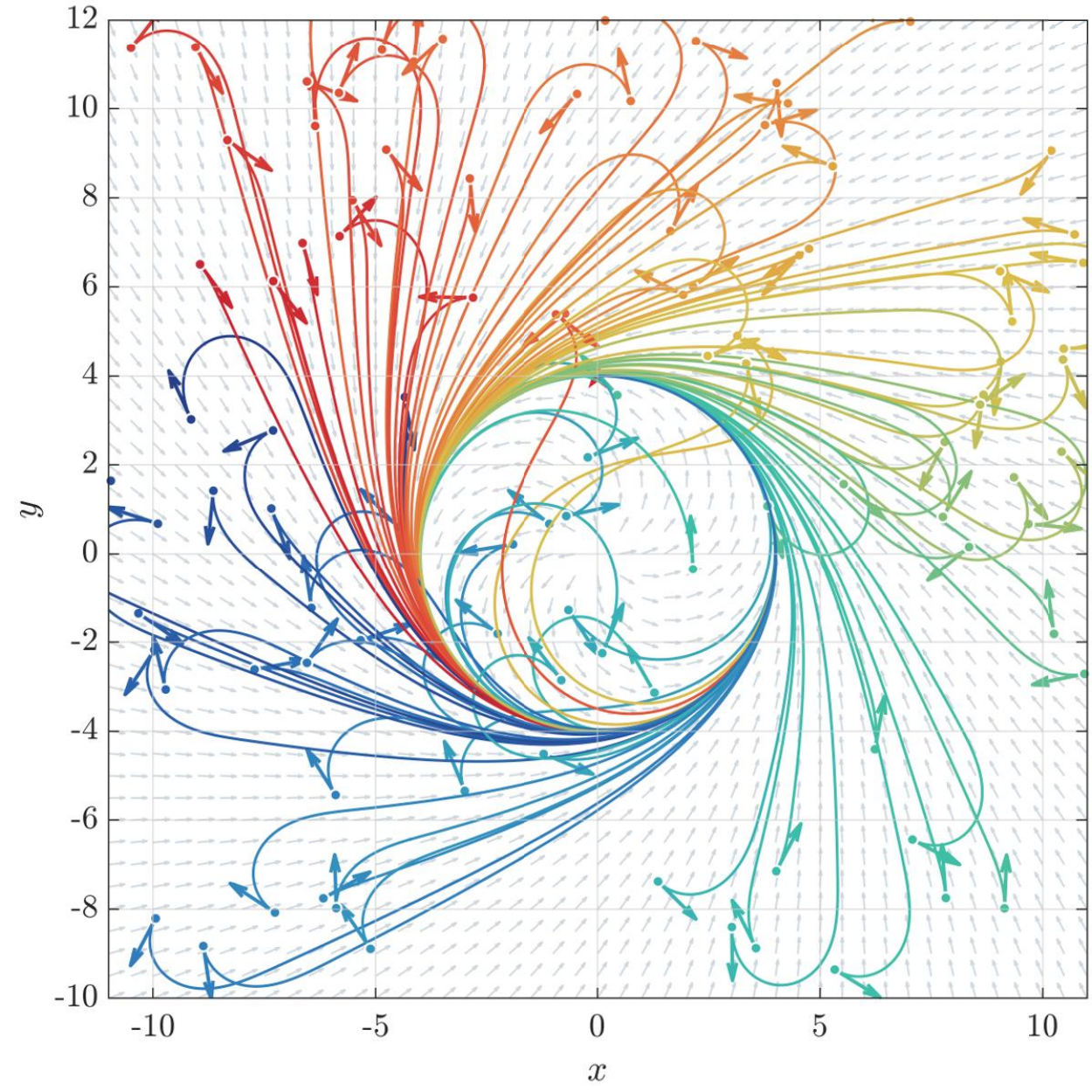}
    \label{fig3:sub_b}
}
}%
\\[-2mm]
\noindent\makebox[\linewidth][c]{%
\subfloat[CVF]{
    \includegraphics[width=0.24\linewidth]{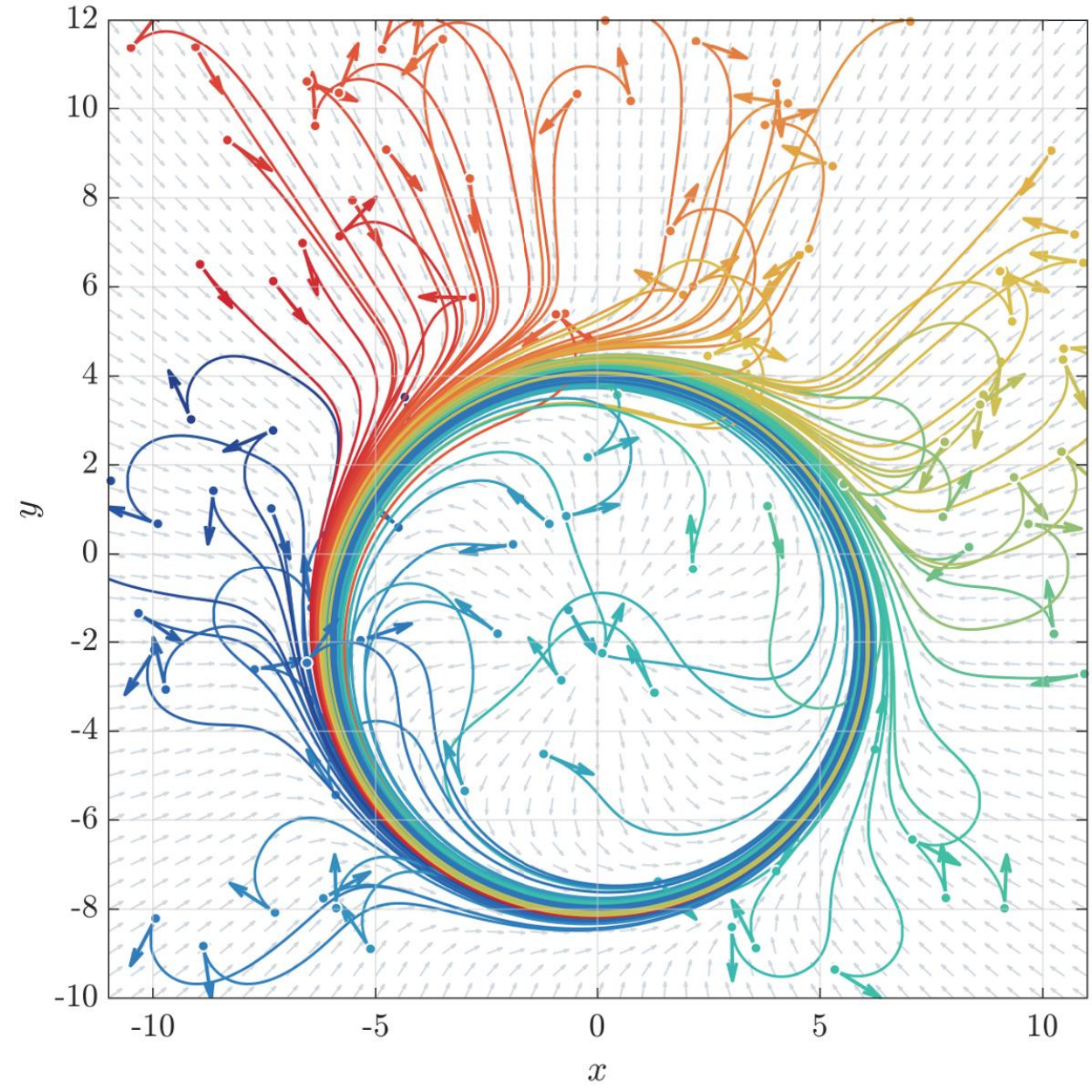}
    \label{fig3:sub_c}
}%
\hspace{-2mm}%
\subfloat[AVF]{
    \includegraphics[width=0.24\linewidth]{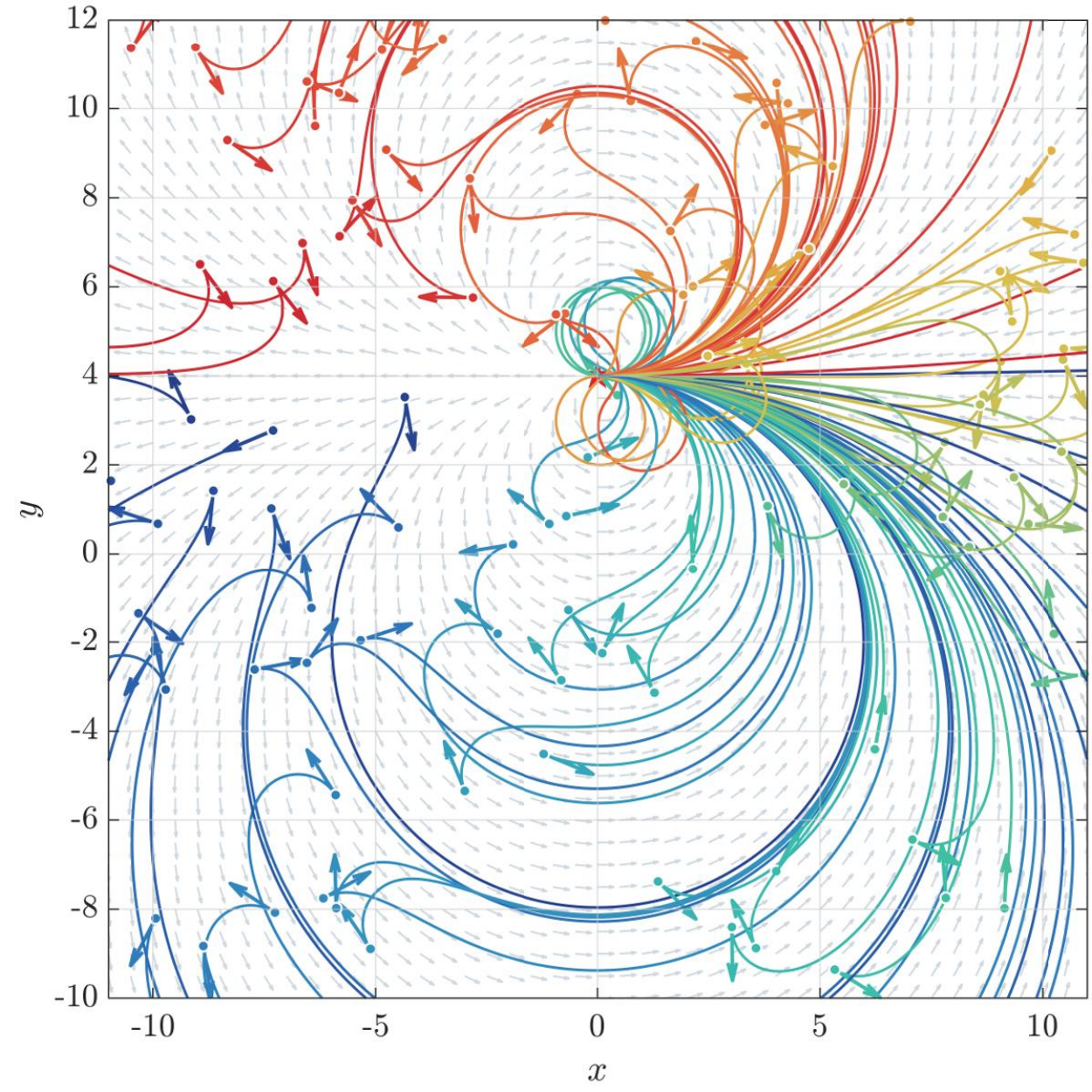}
    \label{fig3:sub_d}
}%
\hspace{-2mm}%
\subfloat[GVF]{
    \includegraphics[width=0.24\linewidth]{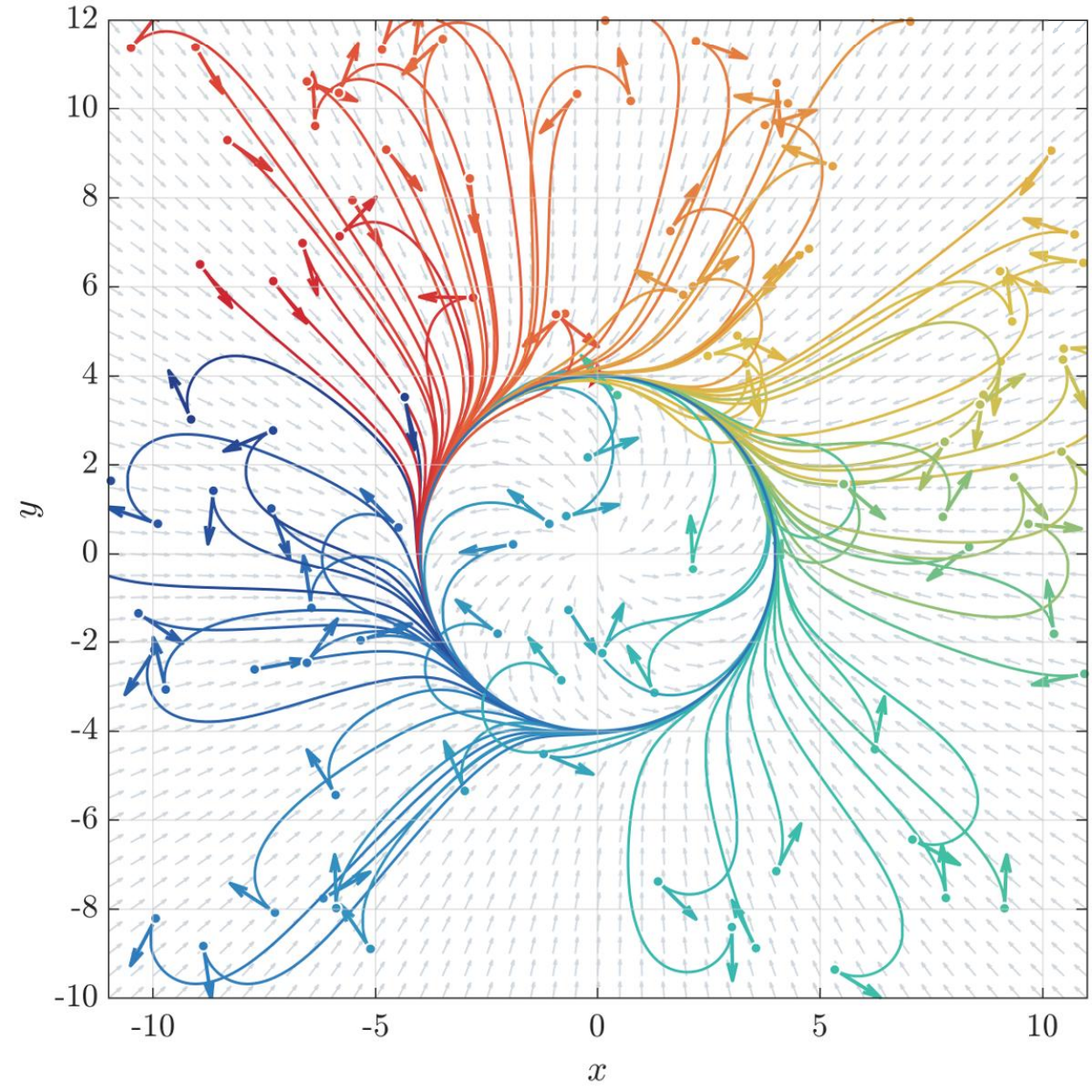}
    \label{fig3:sub_e}
}%
\hspace{-2mm}%
\subfloat[FT-GVF]{
    \includegraphics[width=0.24\linewidth]{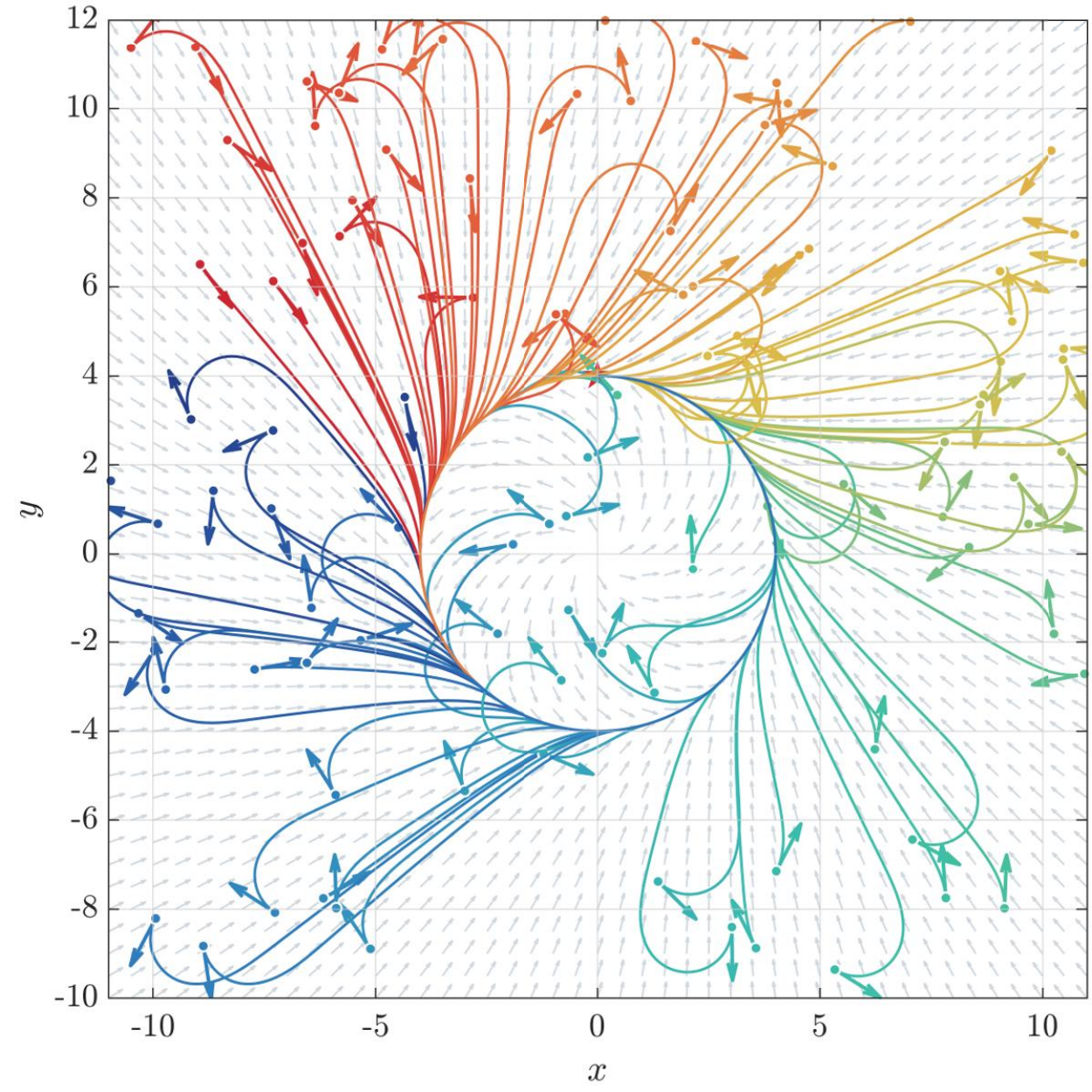}
    \label{fig3:sub_f}
}%
}
\caption{Monte Carlo experimental results of $\epsilon = 0.1$. (a) Statistical distributions of the metrics for different algorithms across 1000 trials. The lower and upper boundaries of the box plots correspond to the 25th ($Q_1$) and 75th ($Q_3$) percentiles of the data, respectively, while the internal horizontal lines denote the medians. The whiskers extend to the most extreme actual data points bounded within $[Q_1 - 1.5\mathrm{IQR}, Q_3 + 1.5\mathrm{IQR}]$, where $\mathrm{IQR} = Q_3 - Q_1$ represents the interquartile range. Data points outside this range are defined as outliers; these are omitted from the visualization to emphasize the primary distribution characteristics. The hollow circles indicate the mean values of the respective metrics. The red horizontal dashed lines demarcate $T_s = 0$ and $\bar{\kappa} = 1$, respectively. (b)-(f) illustrate the initial configurations and the resulting motion trajectories of the evaluated algorithms in the first 100 trials.}
\label{fig:simulation3}
\end{figure*}

\textit{1) Setup:} We conduct Monte Carlo simulations comparing the proposed method with four existing VF-based motion-planning algorithms. By design, only the proposed FT-C2VF and the CVF explicitly incorporate curvature constraints; the baselines (AVF, GVF, and FT-GVF) assume unconstrained kinematics. As summarized in Table \ref{tab:parameter_setup}, the parameters of GVF and FT-GVF are manually tuned to ensure a fair comparison, with their vector fields also satisfying the curvature bound $\bar{\kappa}$. However, AVF remains untuned, as its formulation does not permit such curvature constraint enforcement. The evaluations are divided into two distinct scenarios: a variable-speed scenario evaluates the CVF, AVF, and FT-C2VF using an Ackermann-steered model with $v_+ = 1$ and $v_- = 0$; in contrast, a constant-speed scenario assesses the GVF, FT-GVF, and FT-C2VF using a unicycle model with a fixed speed of $v= 1$. To establish a standardized baseline, the target configuration is fixed at $\boldsymbol{q}_d = [0, r, \pi]^\top$, while the initial configuration $\boldsymbol{q}_0 = [x_0, y_0, \theta_0]^\top$ is sampled uniformly at random from the configuration space $\mathcal{Q}_0 = [-11, 11] \times [-10, 12] \times (-\pi, \pi]$ across $1000$ independent trials.

\textit{2) Specifications:} To ensure a rigorous and fair comparison, three alignment protocols are enforced across the baselines. First, to simultaneously enforce the curvature constraint, the angular velocity is limited by the saturation function $\operatorname{Sat}_{a}^{b}: \mathbb{R} \to \mathbb{R}$, defined as $\operatorname{Sat}_{a}^{b}(x) = x$ for $x \in [a, b]$, $\operatorname{Sat}_{a}^{b}(x) = a$ for $x \in (-\infty, a)$, and $\operatorname{Sat}_{a}^{b}(x) = b$ for $x \in (b, \infty)$, where $a < b$ are constants. Second, to accommodate the structural constraints of the CVF \cite[eq.~(14)]{11300826}, the circle manifolds $\mathcal{M}_r$ are designed as
\begin{equation*}
\mathcal{M}_r = 
\begin{cases}
\{\boldsymbol{\xi} \in \mathbb{R}^2 : \|\boldsymbol{\xi} - [0, -2\rho]^\top\| = 6\rho\}, & \text{CVF}, \\ \{\boldsymbol{\xi} \in \mathbb{R}^2 : \|\boldsymbol{\xi}\| = r\}, & \text{Others}.
\end{cases}
\end{equation*}
The manifold-free AVF is excluded from this specific formulation. Consequently, the actual control input is given by $\omega_r(t) = \operatorname{Sat}_{-\tilde{\omega}(t)}^{\tilde{\omega}(t)}\big(\omega(t)\big)$. Here, $\tilde{\omega}(t) = \bar{\kappa}v(t)$ in the variable-speed scenario, and $\tilde{\omega}(t) = \bar{\omega}$ in the constant-speed scenario. Finally, the task is considered complete if the robot configuration enters this set: $\mathcal{Q}_\epsilon = \big\{ \boldsymbol{q} \in \mathcal{C} : \|\boldsymbol{\xi} - \boldsymbol{\xi}_d\| \le \epsilon, |\theta - \theta_d| \le 0.05 \big\}$. The convergence time is then defined as $T_d(\epsilon) = \inf \big\{ t \ge 0: \boldsymbol{q}(t) \in \mathcal{Q}_\epsilon \big\}$.

\textit{3) Metrics:} The evaluations focus on three primary aspects: convergence performance, trajectory quality, and control performance. Convergence performance is quantified by the convergence time $T_d(\epsilon)$ given a specified error tolerance $\epsilon$. Trajectory quality evaluates the path lengths $L$, mean curvatures $\kappa_a$, and maximum curvatures $\kappa_m$ of both reference and actual trajectories. Furthermore, control performance is assessed through the saturation time $T_s = \int_{0}^{T_a} \mathbb{I}\Big(|\omega(t)| \ge \min(\bar{\omega}, \bar{\kappa}v(t))\Big) \mathrm{d}t$, the control energy $J_u = \int_{0}^{T_a} \big(v^2(t) + \omega^2(t)\big) \mathrm{d}t$, and the total variation of the angular velocity $\mathrm{TV}_\omega = \frac{1}{T_a} \int_{0}^{T_a} |\dot{\omega}(t)| \mathrm{d}t$, where $\mathbb{I}(\cdot)$ denotes the indicator function, yielding $1$ if the enclosed proposition is true and $0$ otherwise.

\begin{table}[htbp]
    \centering
    \footnotesize
    \setlength{\tabcolsep}{2.8pt}
    \renewcommand{\arraystretch}{1.13}
    \caption{Average Performance Metrics of Different Algorithms}
    \label{tab:metrics}
    \begin{tabular}{@{} l l c c c c c c c c @{}}
        \toprule
        \multicolumn{2}{c}{\textbf{Algorithm}} 
        & $T_d$ 
        & $T_s$ 
        & $J_u$ 
        & $\mathrm{TV}_\omega$ 
        & $\kappa_a$ 
        & $L$ 
        & $\kappa^{\mathrm{ref}}_m$ 
        & $\kappa^{\mathrm{act}}_m$ \\
        \midrule

        \multirow{3}{*}{\textbf{Scenario 1}} 
        & \textbf{FT-C2VF} & \textbf{35.00} & \textbf{0.00} & \textbf{20.41} & \textbf{0.03} & 0.27 & \textbf{24.90} & 0.28 & \textbf{0.84} \\
        & \textbf{CVF}    & 51.84 & 0.39 & 49.58 & 0.04 & \textbf{0.19} & 48.14 & 0.65 & 0.90 \\
        & \textbf{AVF}    & 54.31 & 5.08 & 51.51 & 0.07 & 0.26 & 49.69 & \textbf{0.21} & 0.85 \\
        \midrule

        \multirow{3}{*}{\textbf{Scenario 2}} 
        & \textbf{FT-C2VF} & 24.94 & \textbf{0.00} & 27.28 & \textbf{0.05} & \textbf{0.24} & 24.94 & \textbf{0.28} & \textbf{0.85} \\
        & \textbf{GVF}    & 20.37 & 1.02 & 22.83 & 0.10 & 0.29 & 20.38 & 0.47 & 0.88 \\
        & \textbf{FT-GVF} & \textbf{20.03} & 1.08 & \textbf{22.55} & 0.12 & 0.29 & \textbf{20.04} & 0.88 & 0.90 \\
        \bottomrule
        \multicolumn{10}{l}{\scriptsize The bold values represent the best performance among these algorithms.}
    \end{tabular}
\end{table}

\begin{figure}[!h]
\centering
\includegraphics[width=3.2in]{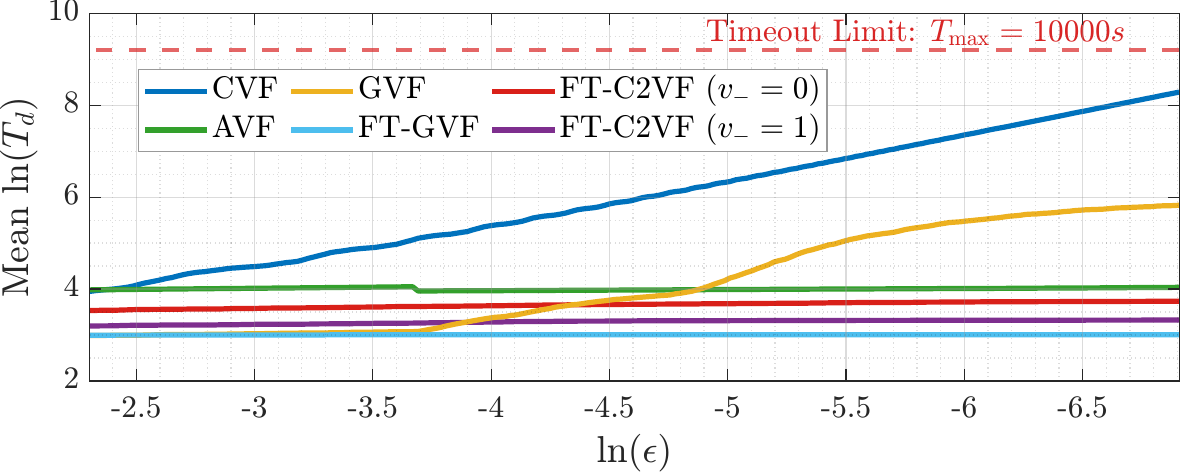}
\caption{The evolution curves of the logarithmic mean convergence time $\ln(T_d)$ versus the logarithmic tolerance $\ln(\epsilon)$ across all experiments. The horizontal axis is plotted in descending order, and the red dashed line indicates the timeout limit $T_{\max} = 10000s$.}
\label{fig:rongcha}
\end{figure}

\textit{4) Results:} The statistical results for $\epsilon=0.1$ in \cref{fig3:sub_a} and Table \ref{tab:metrics} reveal three notable properties of FT-C2VF. First, it outperforms the CVF method across all metrics except the mean actual curvature. Second, whereas the baseline methods enforce the curvature bound $\kappa_m \le \bar{\kappa}$ via input saturation, they do so at the cost of control saturation (i.e., a high saturation time $T_s$). By contrast, FT-C2VF inherently satisfies the curvature constraints, maintaining $T_s \equiv 0$ across all trials. Third, FT-C2VF achieves the smallest median and interquartile range of the angular-velocity variation $\mathrm{TV}_{\omega}$, indicating a smoother control action. This smoothness contrasts with the pronounced input chattering observed in FT-GVF, primarily due to the singularity of the Jacobian near the desired manifold $\mathcal{M}_r$. However, curvature constraints limit the maneuverability of the robot, thereby increasing the trajectory length, which consequently leads to longer convergence times and higher control energy in the constant-speed scenario. This represents a reasonable kinematic trade-off. The resultant spatial trajectories generated by each algorithm are depicted in \crefrange{fig3:sub_b}{fig3:sub_f}.  As illustrated in \cref{fig:rongcha}, the convergence time $T_d$ of FT-C2VF, AVF, and FT-GVF remains essentially insensitive to the error tolerance $\epsilon$, whereas that of CVF and GVF increases as $\epsilon$ decreases, reflecting their asymptotic convergence behavior.


\section{Hardware Experiment Results}\label{sec:6}
In this section, we conduct outdoor experiments using an Ackermann-steered vehicle to validate the effectiveness and robustness of the proposed closed-loop motion planning algorithm across different robotic platforms.

\begin{figure}[!t]
\centering
\includegraphics[width=0.95\linewidth]{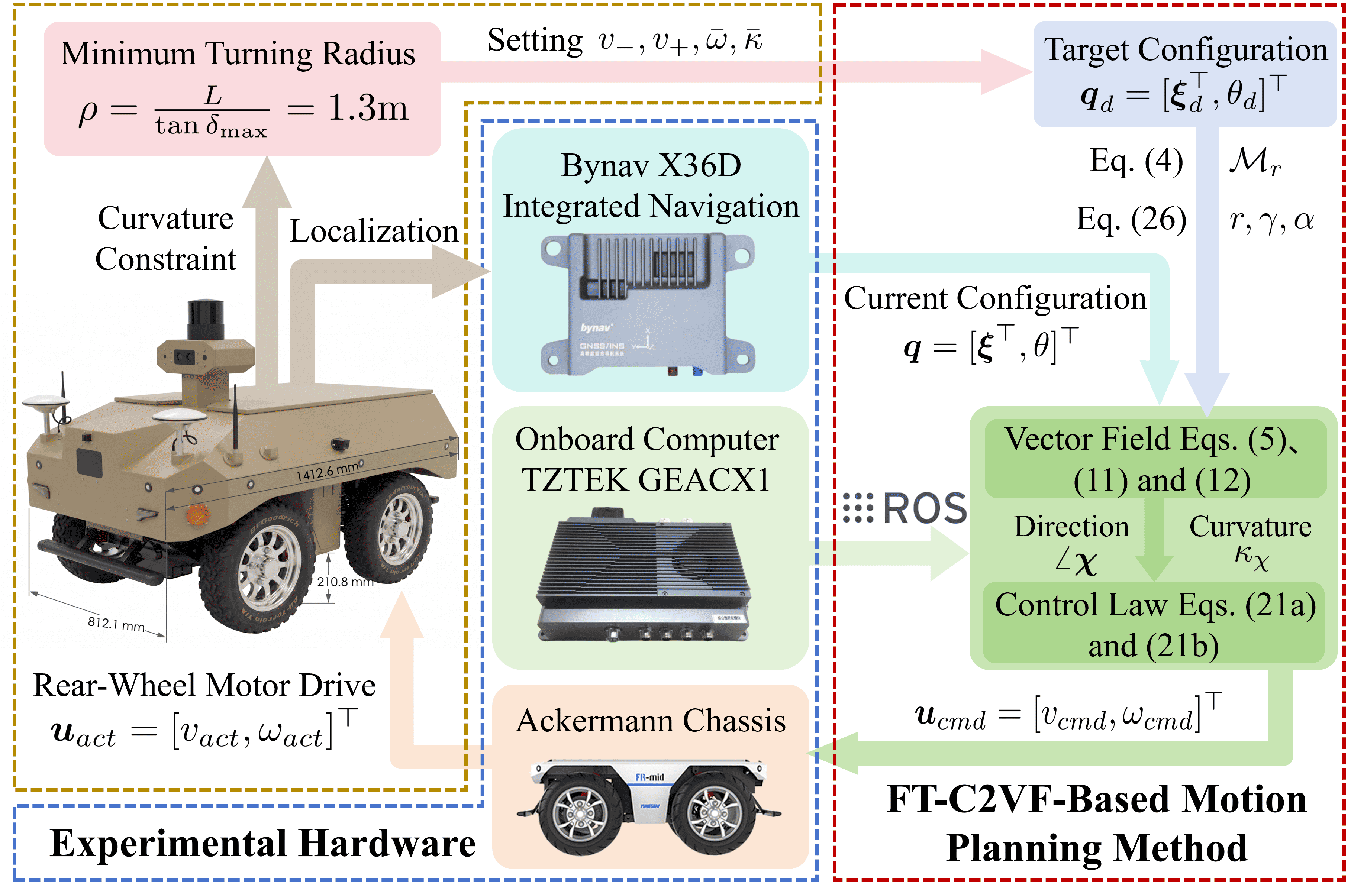}
\caption{Experimental architecture for the Ackermann-steered vehicle.}
\label{fig:exper01_set}
\end{figure}

\begin{figure*}[!t]
\centering

\hspace{-2.5mm}%
\noindent\makebox[\linewidth][c]{%
\begin{minipage}{0.34\linewidth}
    \centering
    \subfloat[]{
        \includegraphics[width=\linewidth]{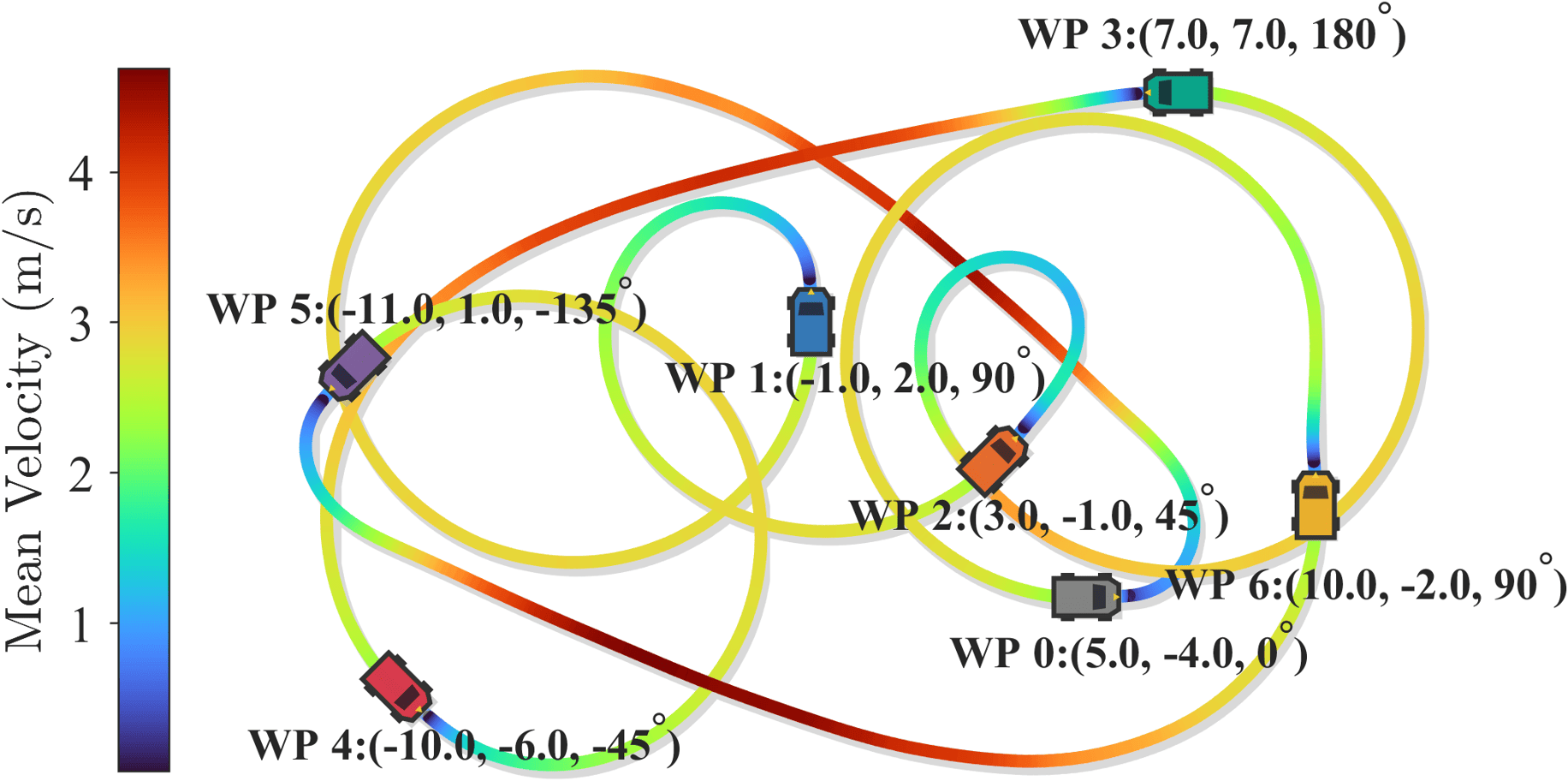}
        \label{fig5:sub_a}
    }%
    \\
    \vspace{-2mm}%
    \subfloat[]{
        \includegraphics[width=\linewidth]{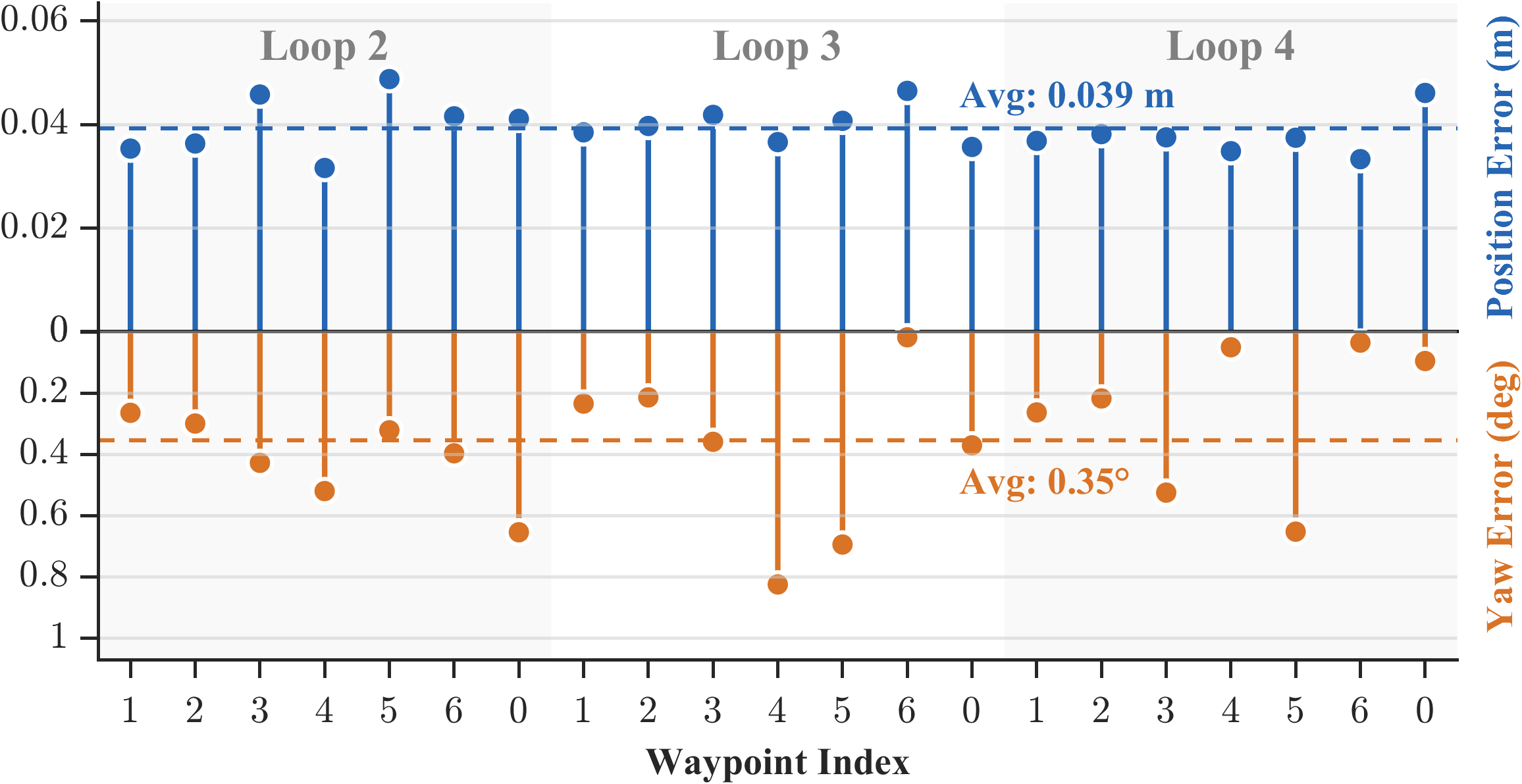}
        \label{fig5:sub_b}
    }
\end{minipage}
\begin{minipage}{0.64\linewidth}
    \centering
    \subfloat[]{
        \includegraphics[width=\linewidth]{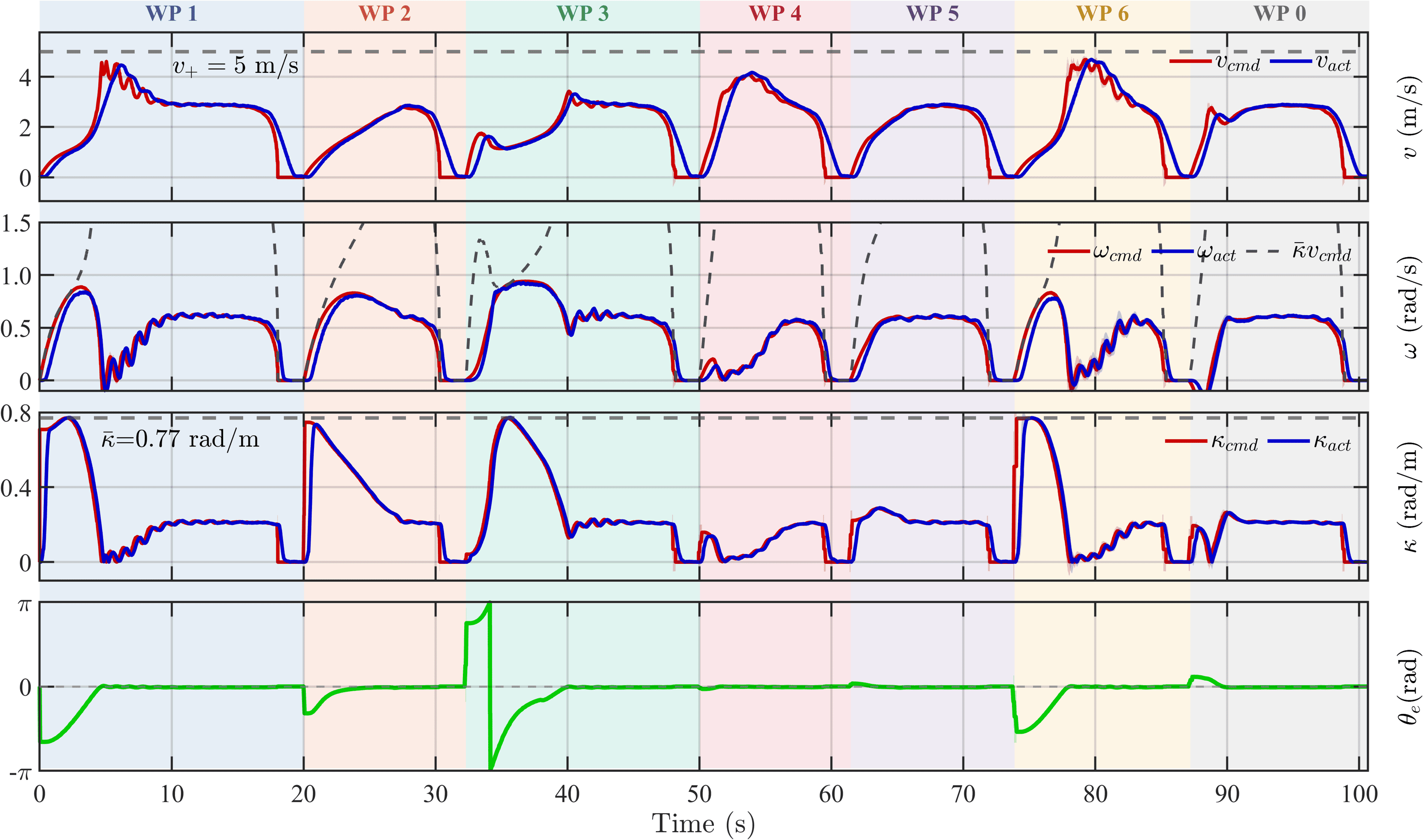}
        \label{fig5:sub_c}
    }
\end{minipage}
}
\\[-2mm]
\noindent\makebox[\linewidth][c]{
    \subfloat[]{
        \includegraphics[width=0.24\linewidth]{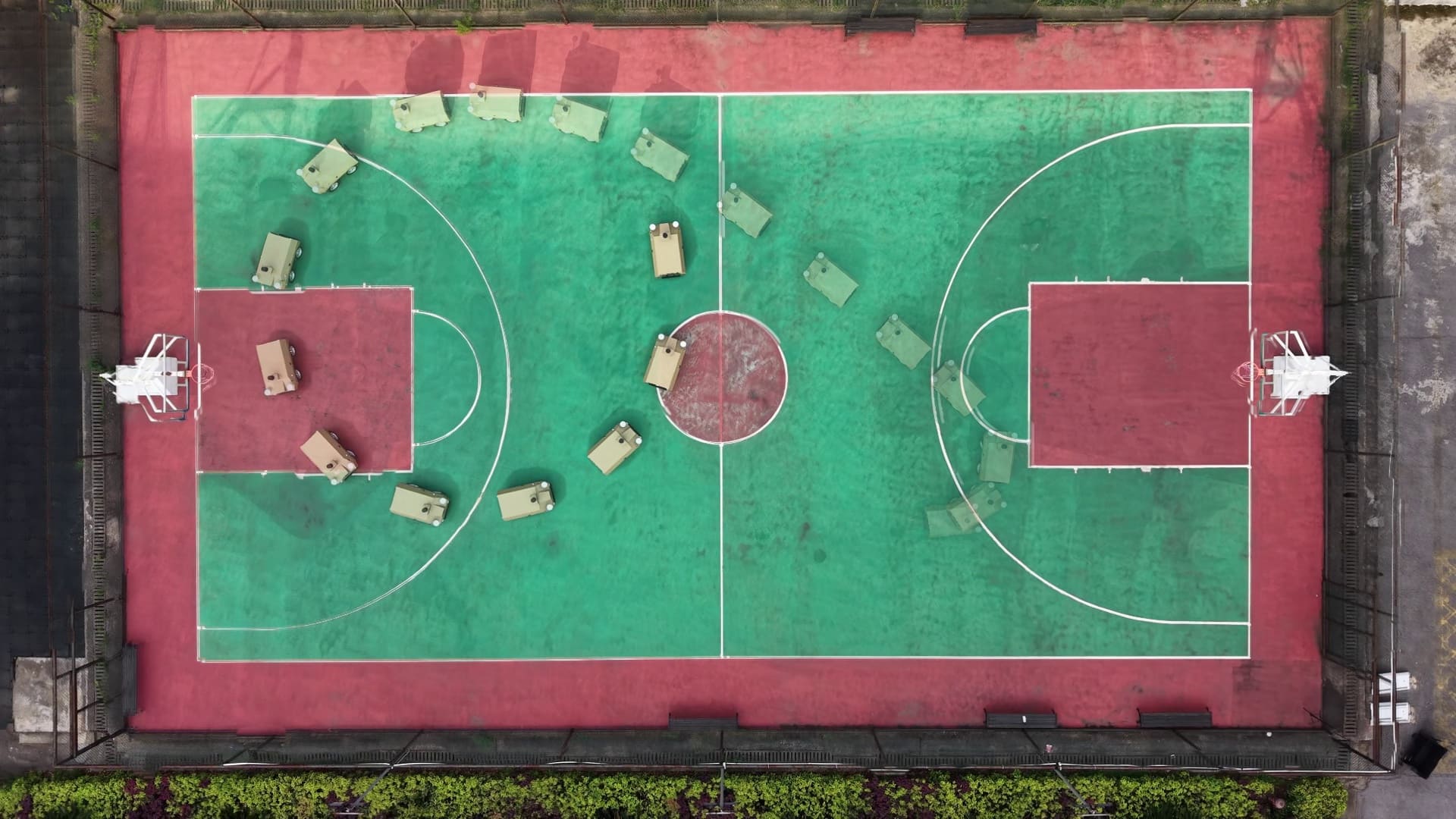}
        \label{fig5:sub_d}
    }%
    \hspace{-2mm}%
    \subfloat[]{
        \includegraphics[width=0.24\linewidth]{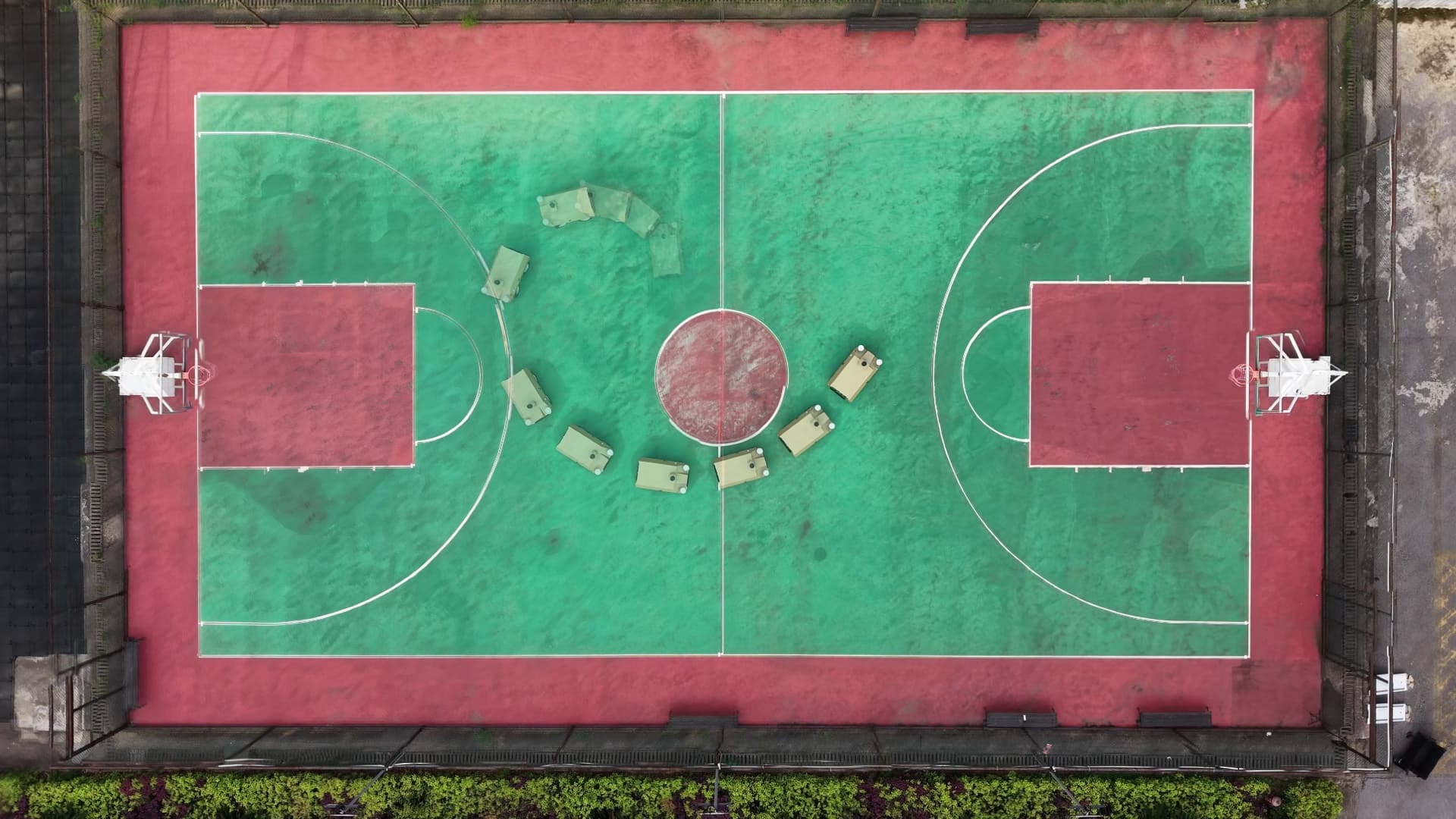}
        \label{fig5:sub_e}
    }%
    \hspace{-2mm}%
    \subfloat[]{
        \includegraphics[width=0.24\linewidth]{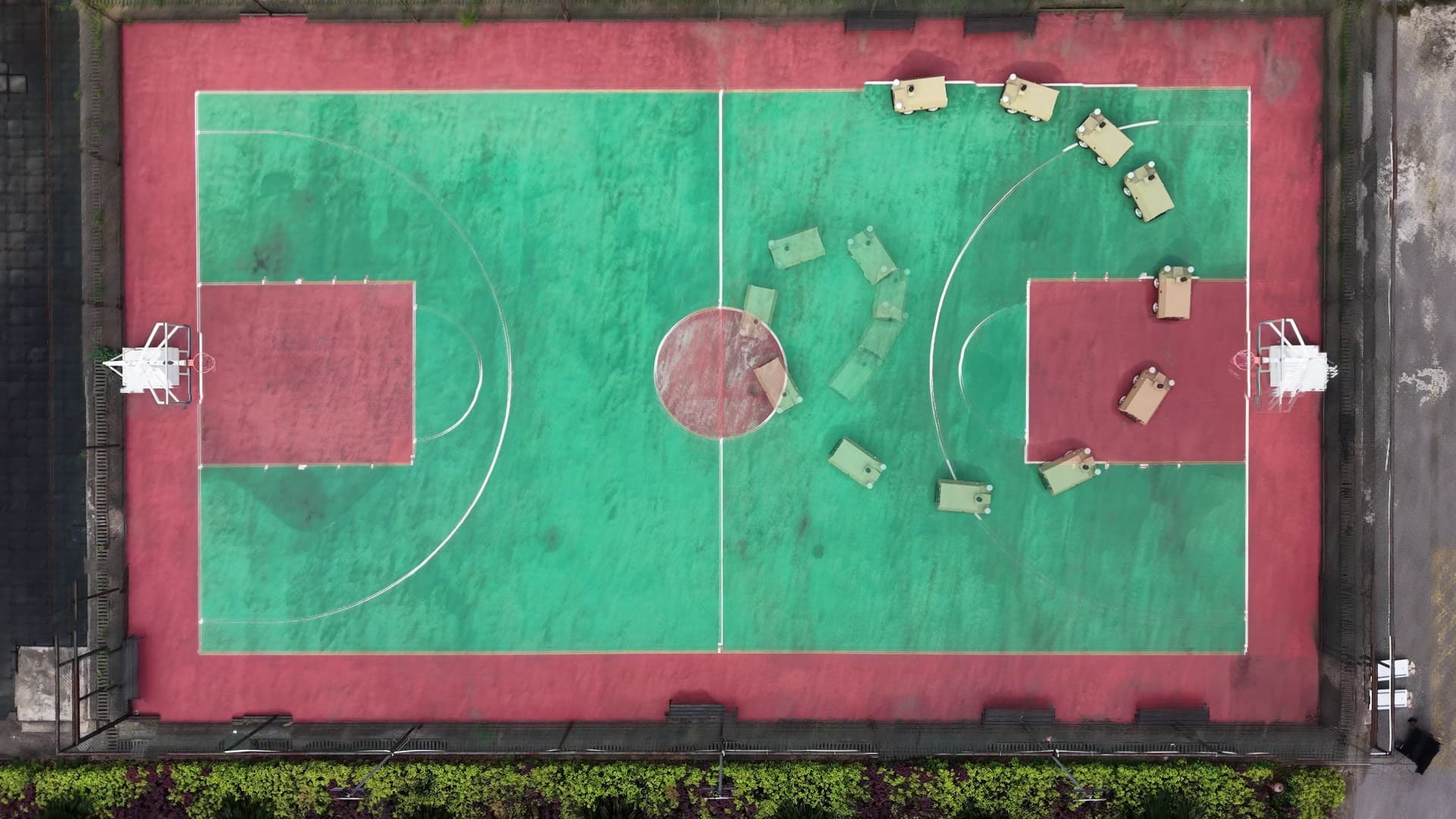}
        \label{fig5:sub_f}
    }%
    \hspace{-2mm}%
    \subfloat[]{
        \includegraphics[width=0.24\linewidth]{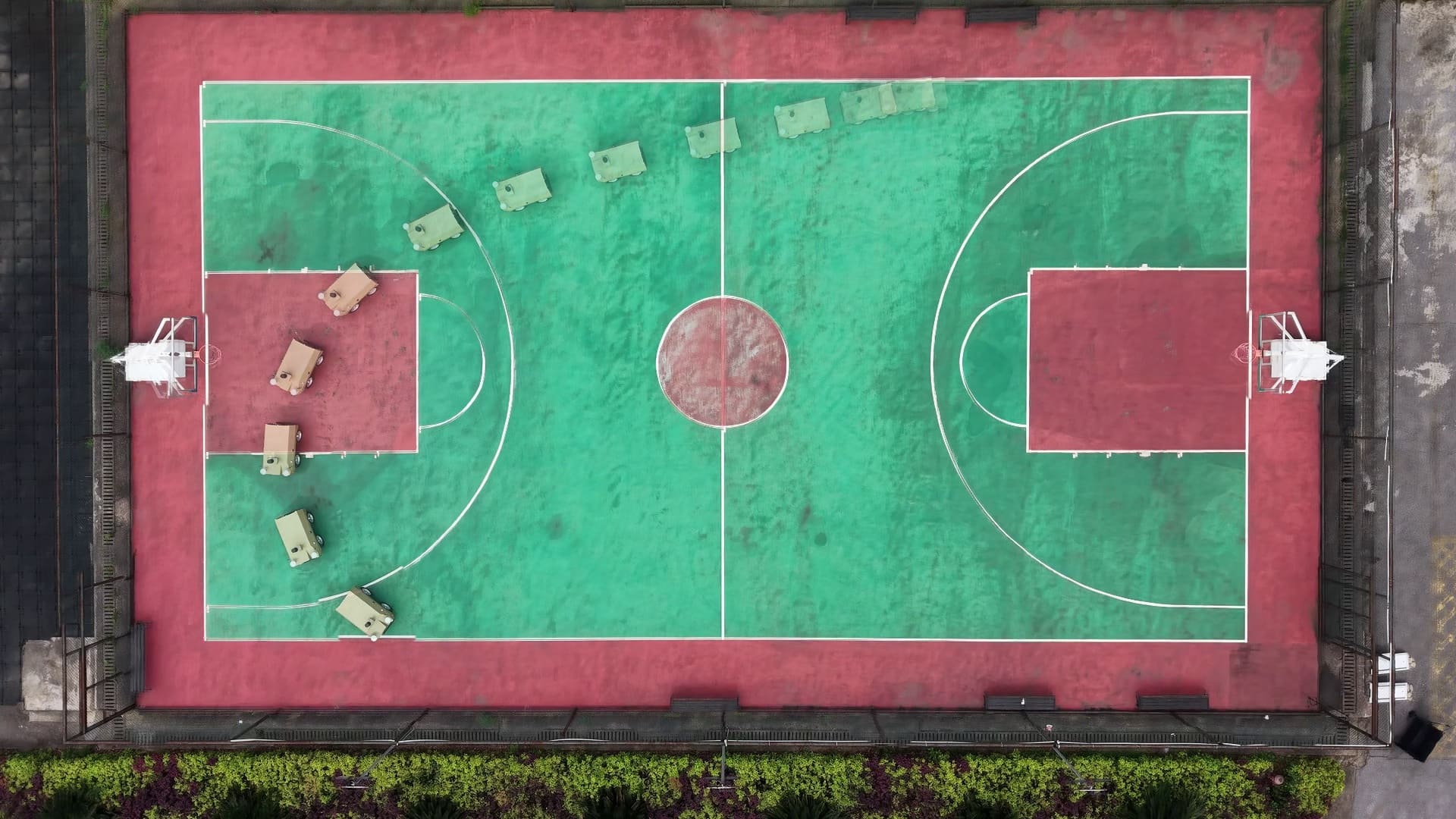}
        \label{fig5:sub_g}
    }%
} 
\\[-2mm] 
\noindent\makebox[\linewidth][c]{
    \subfloat[]{
        \includegraphics[width=0.24\linewidth]{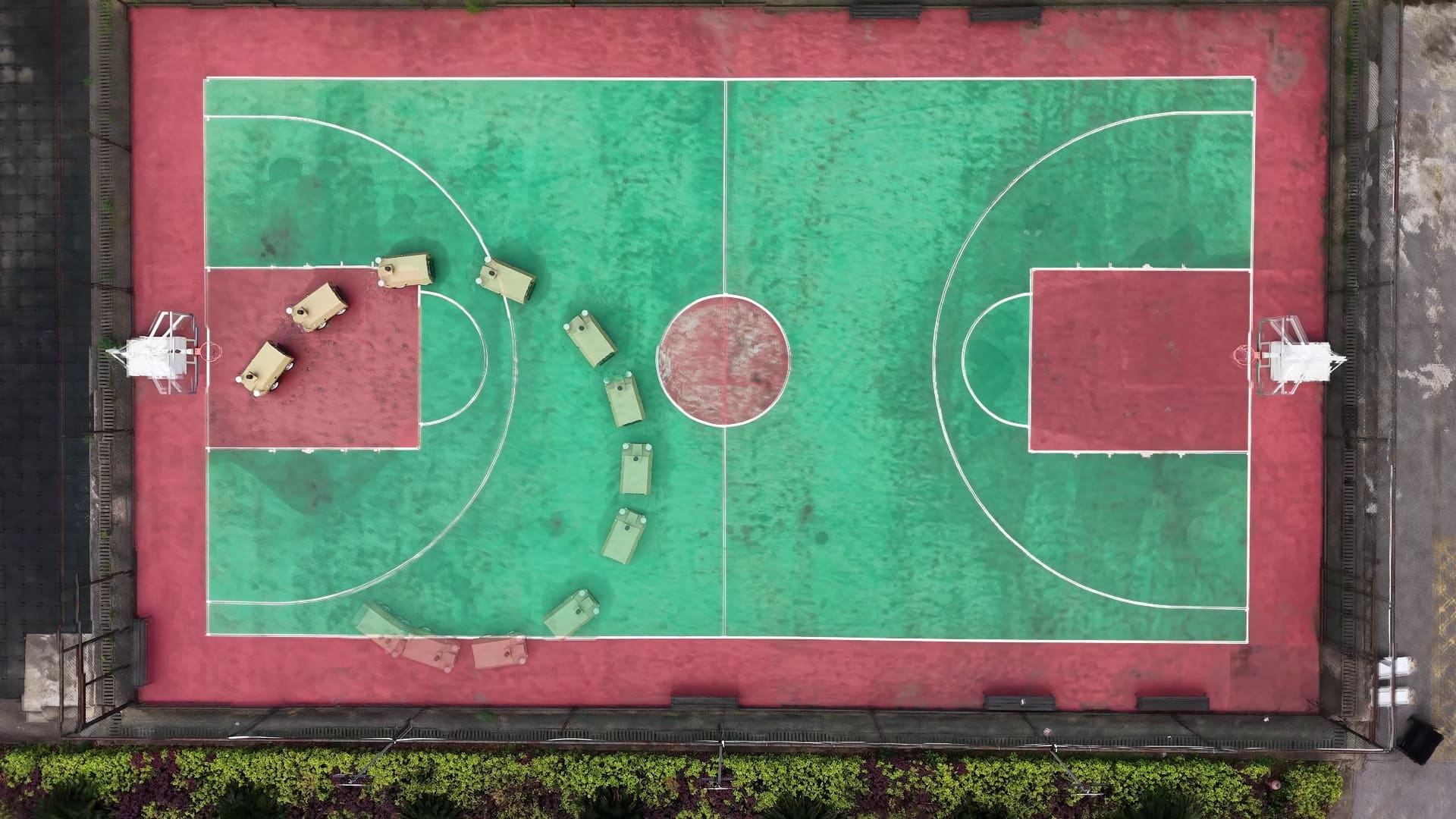}
        \label{fig5:sub_h}
    }%
    \hspace{-2mm}%
    \subfloat[]{
        \includegraphics[width=0.24\linewidth]{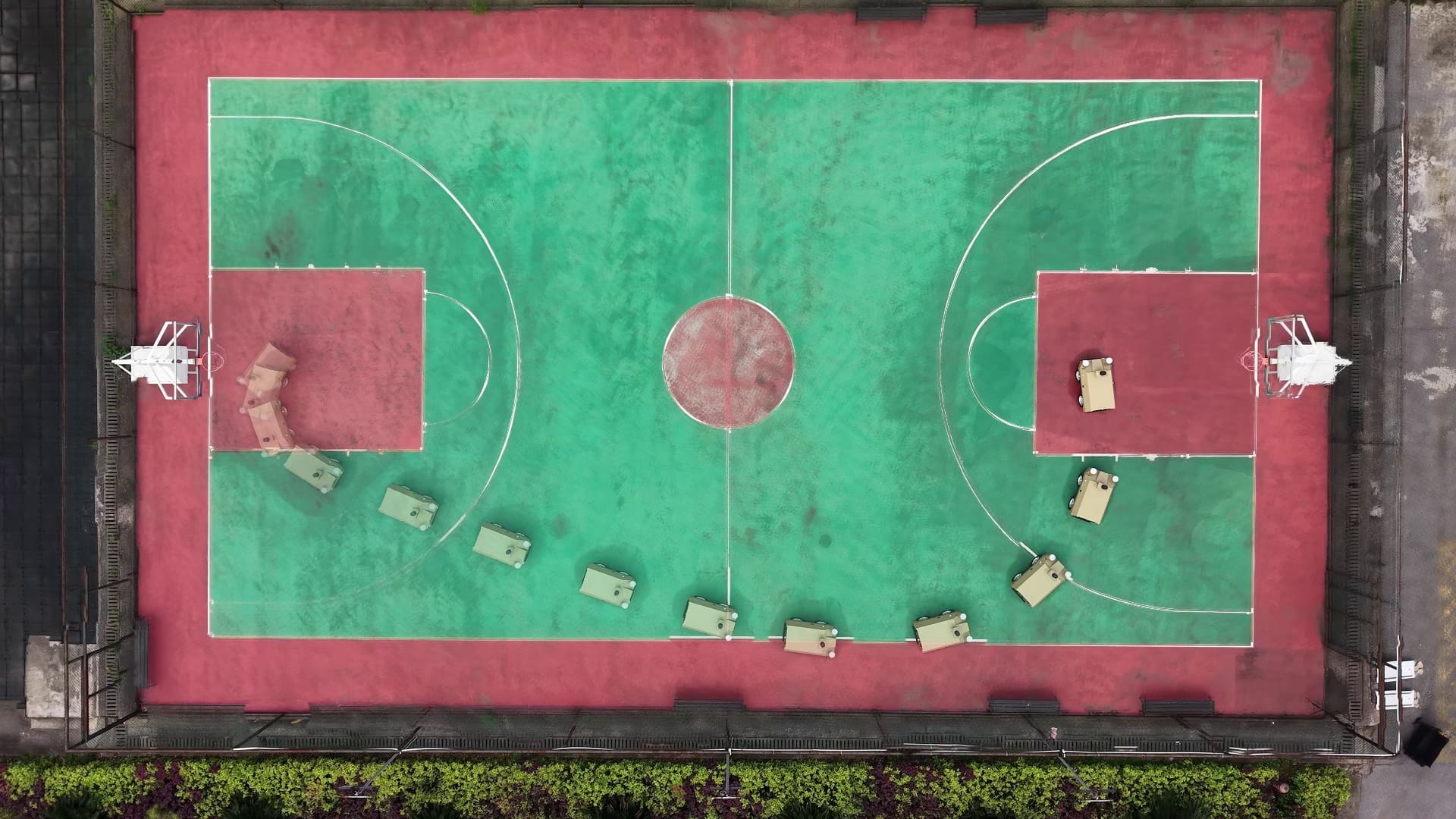}
        \label{fig5:sub_i}
    }%
    \hspace{-2mm}%
    \subfloat[]{
        \includegraphics[width=0.24\linewidth]{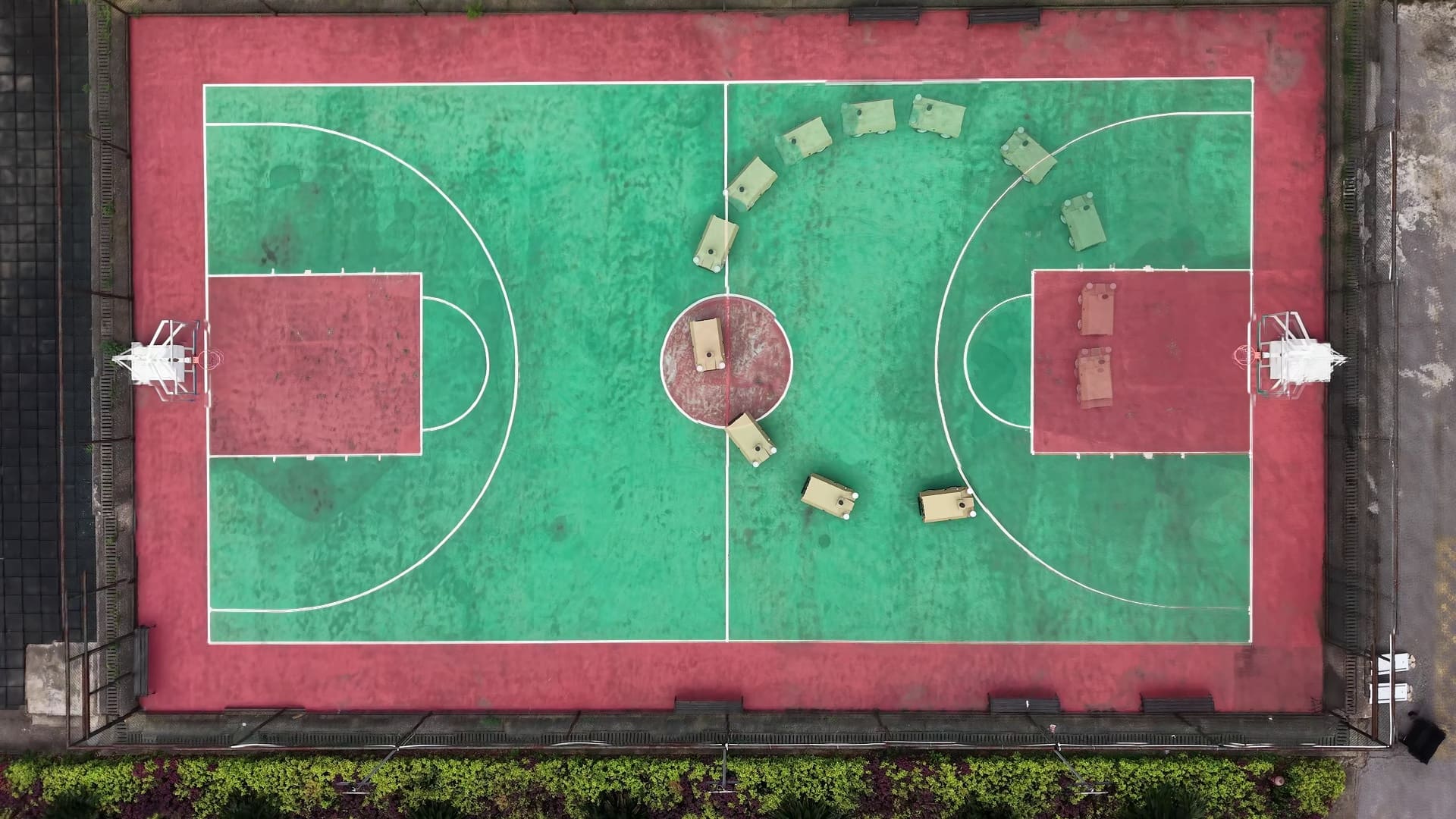}
        \label{fig5:sub_j}
    }%
    \hspace{-2mm}%
    \subfloat[]{
        \includegraphics[width=0.24\linewidth]{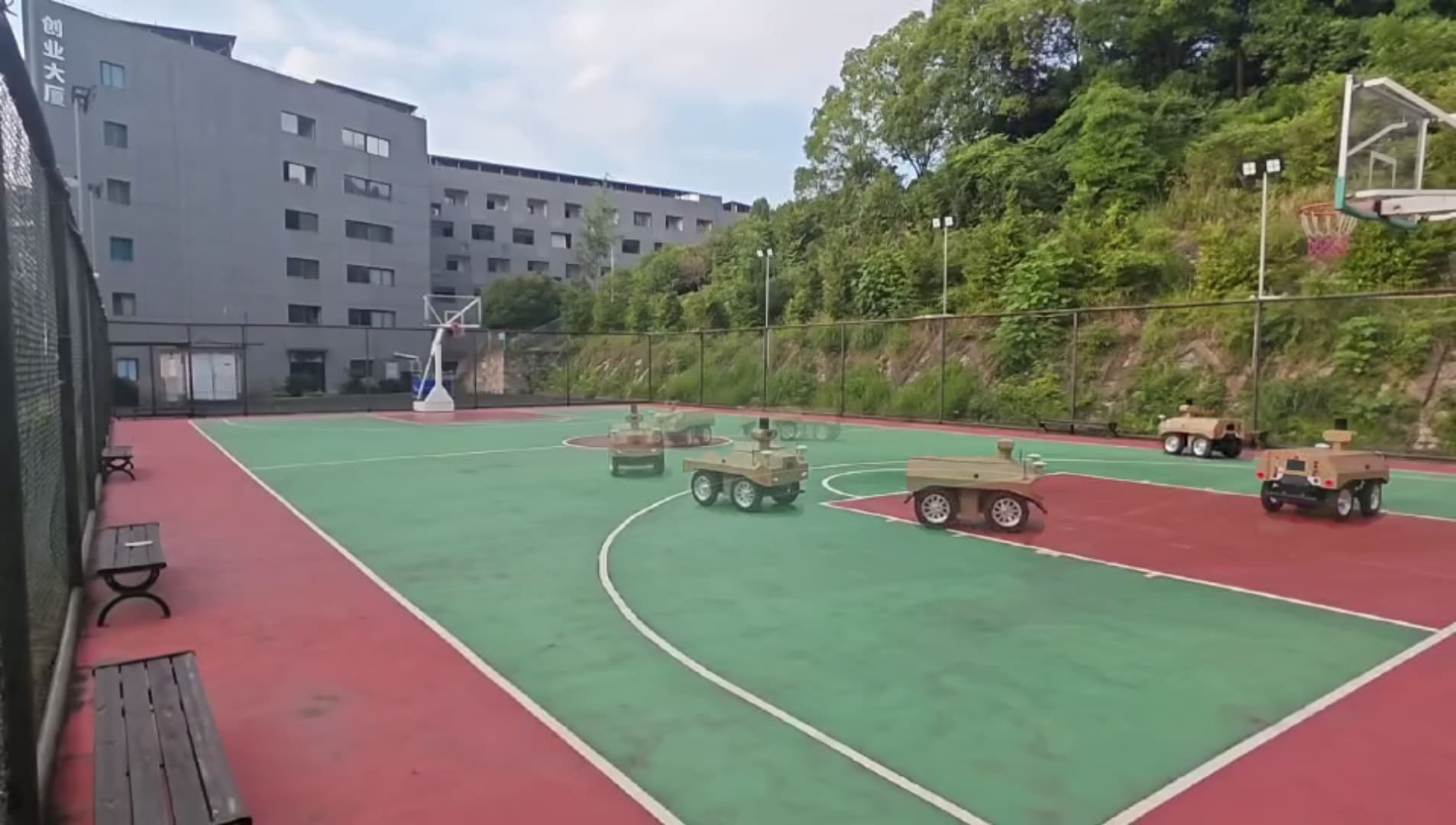}
        \label{fig5:sub_k}
    }%
} 
\caption{Results of the multi-waypoint experiment. (a) Spatial distribution of the waypoints $\boldsymbol{q}^i_d = [\boldsymbol{\xi}_d^{i\top}, \theta^i_d]^\top,i=0,1,\dots,6$ together with the average velocity heat map of the vehicle over loops 2–4. (b) Position and heading errors at each waypoint over three consecutive loops. (c) Average commanded (red) and measured (blue) control inputs. Shaded regions indicate the corresponding $1\sigma$ standard-deviation bands. Gray dashed lines denote the input limits, and the green curve denotes the heading error. (d)-(j) Vehicle trajectories for the segments WP~0$\rightarrow$WP~1, WP~1$\rightarrow$WP~2, WP~2$\rightarrow$WP~3,
WP~3$\rightarrow$WP~4, WP~4$\rightarrow$WP~5, WP~5$\rightarrow$WP~6, and WP~6$\rightarrow$WP~0, respectively. (k) Side view of a representative portion of the vehicle motion.}
\label{fig:exp01}
\end{figure*}

In the experiment, a rear-wheel-drive Ackermann-steered vehicle with a minimum turning radius of $\rho = 1.3$ m autonomously executed the FT-C2VF-based motion-planning algorithm. As illustrated in \cref{fig:exper01_set}, the hardware setup includes a Bynav X36D Integrated Navigation System for localization and heading estimation, alongside a TZTEK GEACX1 onboard computer. This computer publishes the forward speed $v$ and the front-wheel steering angle $\delta$ to the chassis via Robot Operating System (ROS). The steering angle is mapped to the angular velocity via $\omega = \frac{v}{L}\tan\delta$, where the wheelbase is $L = 0.66$ m. Our algorithm runs at a fixed frequency of 50 Hz, and all relevant data, including control commands $\boldsymbol{u}_{cmd} = [v_{cmd}, \omega_{cmd}]^\top$ and measured chassis states $\boldsymbol{u}_{act} = [v_{act}, \omega_{act}]^\top$, are recorded locally at 1000 Hz.

Since the tracking error in a practical discrete system cannot strictly converge to zero, local smoothing is applied to prevent the fractional-power term from inducing singularities at $\theta_e = 0$. We introduce a continuous function $h_\epsilon(\sin\theta_e)$ to replace the nonlinear feedback $\sin\theta_e$ in \eqref{command_kappa}, defined as
$$
h_\epsilon(\sin\theta_e) = \begin{cases} \lceil \sin\theta_e \rfloor^\alpha, & |\sin\theta_e| \ge \epsilon_d \\ 
\epsilon_d^{\alpha - 1} \sin\theta_e, & |\sin\theta_e| < \epsilon_d \end{cases}
$$
where $\epsilon_d = 0.08$. Thus, the commanded curvature becomes $\kappa_c = \kappa_\chi - k_\omega h_\epsilon(\sin\theta_e)$. To generate control inputs that initialize from zero, we modify the linear velocity control law to $v_{cmd} =  v_r (1 - e^{-0.45 t})(1 - e^{-k_v \Xi^\beta})$, where $t$ is the runtime. The remaining parameters are set as follows: $v_- = 0$ m/s, $v_+ = 5$ m/s, $\bar{\omega} = 1.5$ rad/s, $\bar{\kappa} = 0.7692$ rad/m, $\gamma = 0.5$, $r = 5.20$ m, $\alpha = 0.66$, $\beta = 0.45$, $k_v = 0.95$, and $\mu = 5.2$.

The experiment involves seven waypoints as shown in \cref{fig5:sub_a}, denoted by WP $i$ ($i = 0, 1,\dots, 6$), where the configuration of the $i$-th waypoint is defined as $\boldsymbol{q}^i_d = [\boldsymbol{\xi}_d^{i\top}, \theta^i_d]^\top$, with WP 0 being the initial configuration. We consider convergence to WP $i$ achieved if the vehicle state satisfies $\boldsymbol{q} \in \mathcal{Q}^i_\epsilon = \big\{ \boldsymbol{q} \in \mathcal{C}:\|\boldsymbol{\xi} - \boldsymbol{\xi}^i_d\| \le 0.05 \text{m}, |\theta - \theta^i_d| \le \frac{\pi}{180} \text{ rad} \big\}$. Upon reaching a waypoint, the vehicle pauses for $2~\mathrm{s}$ before proceeding to the next waypoint. The vehicle executes four consecutive loops, traversing the sequence WP 0 $\to$ WP 1 $\to \dots \to$ WP 6 $\to$ WP 0. Since the system initializes directly at WP 0, the tracking error at WP 0 during the first loop is zero. To accurately assess the position and heading errors across all waypoints, we discard the data from the first loop and restrict our analysis to the second through fourth loops.

The experimental results are presented in \cref{fig:exp01}. Due to measurement noise, motor response delay, and unmodeled dynamics, minor deviations exist between $\boldsymbol{u}_{cmd}$ and $\boldsymbol{u}_{act}$. As observed in \cref{fig5:sub_c}, $v_{cmd}$ exhibits continuous fluctuations at high speeds. This phenomenon occurs because the elevated speeds cause $|\kappa_c|$ to fluctuate frequently near zero, thereby inducing oscillations in \eqref{eq:vlaw}. Nevertheless, despite these real-world disturbances at a maximum speed approaching $5~\mathrm{m/s}$, the vehicle consistently converges to each configuration waypoint with bounded trajectory curvature and without control input saturation, as depicted in the snapshots across \crefrange{fig5:sub_d}{fig5:sub_k}. Furthermore, \cref{fig5:sub_b} shows that the average position and heading errors are merely $0.039~\mathrm{m}$ and $0.35^\circ$, respectively, which further validates the effectiveness and robustness of the proposed method in hardware implementations.

\begin{remark}
Additional experiments conducted with a maximum linear velocitiy of $v_+ = 4$ m/s show that reducing $v^{+}$ decreases the fluctuation amplitude of $v_{cmd}$. Furthermore, although the proposed algorithm theoretically eliminates chattering caused by vector field Jacobian singularities, the finite-time convergent controller may still induce high-frequency oscillations when implemented on discrete-time hardware. This observation motivates replacing the original nonlinear feedback term with a continuous approximation, which suppresses the oscillations caused by the unbounded feedback gain as  $\theta_e \to 0$ and thereby improves the practical implementability of the controller.
\hfill $\blacktriangleleft$
\end{remark}

\section{Conclusion and Future Work}\label{sec:7}


In this paper, we have proposed a new framework that jointly develops the FT-C2VF approach and a saturation-free control law to solve the finite-time generalized motion planning problem for curvature-constrained nonholonomic robots. The proposed method not only guarantees almost-global finite-time convergence of the robot to the desired configuration but also ensures the well-posedness and effectiveness of the closed-loop system at all times, since the control inputs remain within their prescribed bounds, so saturation does not compromise system stability. The effectiveness of the theoretical results has been validated through numerical simulations, comparative experiments, and outdoor experiments on an Ackermann-steered vehicle.

Our work opens up several directions for future research. 1) While $\mathcal{M}_r$ is defined as a circle in this study, future efforts could extend it to arbitrary manifolds, thereby facilitating curvature-constrained navigation for more complex tasks. 2) To address collision avoidance under curvature constraints, two potential approaches warrant exploration: (i) constructing composite vector fields (e.g., see \cite{9785912}); and (ii) incorporating high-order control barrier functions \cite{9516971}. 3) Although the current experiments are based on the planar Dubins-car kinematic model, extending the FT-C2VF planning framework to full six-degree-of-freedom models in 3D space remains a compelling subject for further investigation.

{
\appendices
\section{Proofs for Sections \ref{sec:2} and \ref{sec:3}}
\subsection{Proof of Lemma \ref{lem:curvature_vf}}\label{app:curvature_vf}
\begin{IEEEproof} Fix an arbitrary point
$\boldsymbol{\xi}_0\in\mathbb{R}^2\setminus\mathcal{W}$, and let
$\boldsymbol{\xi}(t)$ be the integral curve of $\boldsymbol{\chi}$ satisfying
$\boldsymbol{\xi}(0)=\boldsymbol{\xi}_0$, i.e.,
$\dot{\boldsymbol{\xi}}(t)=\boldsymbol{\chi}(\boldsymbol{\xi}(t))$.
Since $\boldsymbol{\xi}_0\notin\mathcal{W}$, the curve is regular at $t=0$. The curvature of a regular planar curve is given in
\cite[Ch.~13.4]{hass2022thomas}. Using the two-dimensional cross product
$\boldsymbol{a}\times\boldsymbol{b}:=
-\boldsymbol{a}^{\top}\boldsymbol{E}\boldsymbol{b}$, we write its signed curvature as
\begin{equation}\label{eq:curvature}
\kappa_{\boldsymbol{\xi}}(t)
=
\frac{\dot{\boldsymbol{\xi}}(t)\times\ddot{\boldsymbol{\xi}}(t)}
{\|\dot{\boldsymbol{\xi}}(t)\|^3}
=
-\frac{
\dot{\boldsymbol{\xi}}(t)^{\top}
\boldsymbol{E}
\ddot{\boldsymbol{\xi}}(t)}
{\|\dot{\boldsymbol{\xi}}(t)\|^3}.
\end{equation}
Moreover, by the chain rule,
$\ddot{\boldsymbol{\xi}}(0)
=
\boldsymbol{J}_{\chi}(\boldsymbol{\xi}_0)
\boldsymbol{\chi}(\boldsymbol{\xi}_0)$.
Substituting this identity and
$\dot{\boldsymbol{\xi}}(0)=\boldsymbol{\chi}(\boldsymbol{\xi}_0)$ into
\eqref{eq:curvature}, and using
$\boldsymbol{E}^{\top}=-\boldsymbol{E}$, gives
\eqref{eq:curvature_vf}. By Definition~\ref{def:curvature_vf},
$\kappa_{\chi}(\boldsymbol{\xi}_0)=\kappa_{\boldsymbol{\xi}}(0)$. Since
$\boldsymbol{\xi}_0$ is arbitrary, the proof is complete.
\end{IEEEproof}

\subsection{Proof of Lemma \ref{lem:001}}\label{app:001}
\begin{IEEEproof} 
First, we show that $\kappa_\chi(\eta)$ is well defined. For any two
points $\boldsymbol\xi_1,\boldsymbol\xi_2\in\mathbb R^2\setminus
\mathcal W$ with $\|\boldsymbol\xi_1\|=\|\boldsymbol\xi_2\|$, there
exists $\boldsymbol R\in SO(2)$ such that
$\boldsymbol\xi_2=\boldsymbol R\boldsymbol\xi_1$. From
\eqref{eq:FT-C2VF}, together with
$\eta(\boldsymbol R\boldsymbol\xi)=\eta(\boldsymbol\xi)$,
$\nabla\phi(\boldsymbol R\boldsymbol\xi)=
\boldsymbol R\nabla\phi(\boldsymbol\xi)$, and
$\boldsymbol E\boldsymbol R=\boldsymbol R\boldsymbol E$, one has
$\boldsymbol\chi(\boldsymbol R\boldsymbol\xi)
=
\boldsymbol R\boldsymbol\chi(\boldsymbol\xi)$.
Consequently, wherever $\boldsymbol J_\chi$ exists, 
$\boldsymbol J_\chi(\boldsymbol R\boldsymbol\xi)
=
\boldsymbol R\boldsymbol J_\chi(\boldsymbol\xi)\boldsymbol R^\top$.
Substituting these relations into \eqref{eq:curvature_vf} and using
$\boldsymbol R^\top\boldsymbol R=\boldsymbol I$ and
$\boldsymbol E\boldsymbol R=\boldsymbol R\boldsymbol E$ yields
$\kappa_\chi(\boldsymbol\xi_2)
=
\kappa_\chi(\boldsymbol R\boldsymbol\xi_1)
=
\kappa_\chi(\boldsymbol\xi_1)$.
Therefore, the signed curvature is constant on each circle centered at
the origin and depends only on $\eta=\|\boldsymbol\xi\|/r$. Hence
$\kappa_\chi(\eta)$ is well defined on $(0,+\infty)$.

Next, we derive its explicit expression. The integral curve of \eqref{eq:FT-C2VF} satisfies $\dot{\boldsymbol{\xi}} = \boldsymbol{\chi}(\boldsymbol{\xi})$; thus,
\begin{equation}\label{eq:dotxi}
\dot{\boldsymbol{\xi}} = 2(\psi_{\tau} \boldsymbol{E} + \psi_{\nu} \boldsymbol{I})\boldsymbol{\xi}.
\end{equation}
By using $\eta = \|\boldsymbol{\xi}\|/r$, $\boldsymbol{\xi}^\top \boldsymbol{E} \boldsymbol{\xi} = 0$ and \eqref{eq:equal_psi}, it can be deduced that
\begin{equation}\label{eq:fenmu}
\|\dot{\boldsymbol{\xi}}\| = 2\|\boldsymbol{\xi}\|\sqrt{ \psi_{\tau}^2+\psi_{\nu}^2} = 2r\eta,
\end{equation}
and the time derivative of $\eta$ is
\begin{equation}\label{eq:doteta}
\dot{\eta} = \frac{\boldsymbol{\xi}^\top \dot{\boldsymbol{\xi}}}{r\|\boldsymbol{\xi}\|} = \frac{2\boldsymbol{\xi}^\top( \psi_{\tau} \boldsymbol{E}+\psi_{\nu} \boldsymbol{I})\boldsymbol{\xi}}{r^2\eta} =2\eta \psi_{\nu}.
\end{equation}
Taking the time derivative of \eqref{eq:dotxi} yields
\begin{equation}\label{eq:ddot}
\ddot{\boldsymbol{\xi}} = 2( \dot{\psi}_{\tau} \boldsymbol{E}+\dot{\psi}_{\nu} \boldsymbol{I})\boldsymbol{\xi} + 2( \psi_{\tau} \boldsymbol{E}+\psi_{\nu} \boldsymbol{I} )\dot{\boldsymbol{\xi}}.
\end{equation}
Differentiating \eqref{eq:equal_psi} with respect to $\eta$ and time, respectively, produces the following equations:
\begin{subequations}
\begin{align}
    \psi_{\tau} \psi_{\tau}' + \psi_{\nu} \psi_{\nu}' &= 0, \label{eq:equal_eta} \\ 
    {\psi}_{\tau} \dot{\psi}_{\tau} + \psi_{\nu} \dot{\psi}_{\nu} &= 0. \label{eq:equal_time}
\end{align}
\end{subequations}
Hence,
\begin{equation}\label{eq:kn}
{\psi}_{\nu} \dot{\psi}_{\tau} - {\psi}_{\tau} \dot{\psi}_{\nu}
= ({\psi}_{\nu} \psi_{\tau}' - \psi_{\tau} \psi_{\nu}')\dot{\eta}
\overset{\eqref{eq:equal_eta}}{=} \frac{ \psi_{\tau}^2+\psi_{\nu}^2 }{\psi_{\nu}} \psi_{\tau}' \dot{\eta}
\overset{\eqref{eq:doteta}}{=} 2\eta \psi_{\tau}'.
\end{equation}
The numerator of \eqref{eq:curvature} can be simplified as
\begin{align}\label{eq:numerator}
&\dot{\boldsymbol{\xi}}^\top \boldsymbol{E} \ddot{\boldsymbol{\xi}} \notag\\&\overset{\eqref{eq:ddot}}{=} 2\dot{\boldsymbol{\xi}}^\top \boldsymbol{E} ( \dot{\psi}_{\tau} \boldsymbol{E}+\dot{\psi}_{\nu} \boldsymbol{I})\boldsymbol{\xi} + 2\dot{\boldsymbol{\xi}}^\top \boldsymbol{E} ( \psi_{\tau} \boldsymbol{E}+\psi_{\nu} \boldsymbol{I} )\dot{\boldsymbol{\xi}} \notag\\
&\overset{\eqref{eq:dotxi}}{=}  4\boldsymbol{\xi}^\top \big[  ( {\psi}_{\tau} \dot{\psi}_{\tau}+{\psi}_{\nu} \dot{\psi}_{\nu} )\boldsymbol{E}+(\psi_{\tau} \dot{\psi}_{\nu} - {\psi}_{\nu} \dot{\psi}_{\tau})\boldsymbol{I} \big]\boldsymbol{\xi} - 2{\psi}_{\tau} \|\dot{\boldsymbol{\xi}}\|^2 \notag\\
&\overset{\eqref{eq:equal_time}}{=} -4(\psi_{\nu} \dot{\psi}_{\tau} - {\psi}_{\tau} \dot{\psi}_{\nu})\|\boldsymbol{\xi}\|^2 - 8r^2\eta^2 {\psi}_{\tau}\notag\\
&\overset{\eqref{eq:kn}}{=} -8r^2\eta^2 (\psi_{\tau} + \eta \psi_{\tau}').
\end{align}
Substituting \eqref{eq:fenmu} and \eqref{eq:numerator} into \eqref{eq:curvature} results in \eqref{eq:lemma1}.
\end{IEEEproof}

\subsection{Proof of Proposition \ref{cor:001}}\label{app:Corollary1}
\begin{IEEEproof}{ Equation \eqref{eq:kappa_prep} follows directly by replacing $\boldsymbol{\chi}$ with $\boldsymbol{E}\boldsymbol{\chi}$ in the proof of Lemma \ref{lem:001}. Furthermore,
\begin{align}\label{eq:hengdengshi}
\kappa_\chi\psi_\tau+\kappa_\perp\psi_\nu
&= \frac{1}{r} \left( \frac{\psi_{\tau}}{\eta} + \psi_{\tau}' \right)\psi_\tau 
+ \frac{1}{r} \left( \frac{\psi_{\nu}}{\eta} + \psi_{\nu}' \right)\psi_\nu \notag \\
&= \frac{1}{r\eta}(\psi_{\tau}^2+\psi_{\nu}^2) 
+ \frac{1}{r}(\psi_{\tau}\psi_{\tau}' + \psi_{\nu}\psi_{\nu}')\overset{\eqref{eq:equal_eta}}{=} \frac{1}{r\eta}.\notag
\end{align}
}\end{IEEEproof}

\subsection{Proof of Proposition \ref{propos:001}} \label{app:002}
\begin{IEEEproof}{ The continuity of the curvature is guaranteed by the continuous differentiability of $\psi_\tau(\eta)$ in conjunction with \eqref{eq:lemma1}. The first derivative of $\psi_\tau(\eta)$ is given by
\begin{equation}\label{eq:psi_prime}
\psi_{\tau}'(\eta) = 
\begin{cases}
(1+\gamma)(1 - \eta)^{\gamma}, & \eta \in [0,1] \\ 
-(1+\gamma)\eta^{-2}(1 - \eta^{-1})^{\gamma}, & \eta \in (1,+\infty)
\end{cases}
\end{equation}
Substituting \eqref{eq:kt_def} and \eqref{eq:psi_prime} into \eqref{eq:lemma1} results in \eqref{eq:kappa_piecewise}. Evaluating the limit of \eqref{eq:kappa_piecewise} as $\eta \to 0^+$ or $\eta \to +\infty$ establishes the bounds presented in \eqref{eq:kappa_max_min}.

To establish the strict decrease of $\kappa_\chi(\eta)$, it is sufficient to show that $\kappa_\chi'(\eta) < 0$ for $\eta \in (0, +\infty)$. Differentiating \eqref{eq:lemma1} with respect to $\eta$  gives
\begin{equation}\label{eq:kappa_prime}
\kappa_\chi'(\eta) = \frac{1}{r} \left( \frac{\eta \psi_{\tau}'(\eta) - \psi_{\tau}(\eta)}{\eta^2} + \psi_{\tau}''(\eta) \right).
\end{equation}

We analyze this across two sub-intervals.

\textit{Case 1 ($\eta \in (0, 1)$):} Differentiating \eqref{eq:psi_prime} produces $\psi_\tau''(\eta) = -\gamma(1+\gamma)(1-\eta)^{\gamma-1}$. Since $\gamma > 0$, we have $\psi_\tau''(\eta) < 0$, indicating that $\psi_\tau'(\eta)$  strictly decreases on $(0, 1)$. Given $\psi_\tau(0) = 0$, applying the integral properties of strictly decreasing functions yields $\psi_\tau(\eta) = \int_0^\eta \psi_\tau'(s) ds > \int_0^\eta \psi_\tau'(\eta) ds = \eta \psi_\tau'(\eta)$. Consequently, $\eta \psi_\tau'(\eta) - \psi_\tau(\eta) < 0$. Because both terms inside the parenthesis of \eqref{eq:kappa_prime} are negative, $\kappa_\chi'(\eta) < 0$ holds for $\eta \in (0, 1)$.

\textit{Case 2 ($\eta \in (1, +\infty)$):} To simplify the analysis, let $\varepsilon = \eta^{-1} \in (0, 1)$. By defining a scaled auxiliary function $h(\varepsilon) = r \kappa_\chi(\varepsilon^{-1})$, proving that $\kappa_\chi(\eta)$ strictly decreases with respect to $\eta$ is equivalent to demonstrating that $h(\varepsilon)$ strictly decreases with respect to $\varepsilon$ (i.e., $h'(\varepsilon) > 0$). The first derivative of $h(\varepsilon)$ is computed as
\begin{equation*}
h'(\varepsilon) = 1 - (1-\varepsilon)^{1+\gamma} - \varepsilon(1+\gamma)(1-\varepsilon)^\gamma + \gamma(1+\gamma)\varepsilon^2(1-\varepsilon)^{\gamma-1}.\end{equation*}
Observe that $h'(0) = 0$. The second derivative can be factored into $h''(\varepsilon) = \gamma(1+\gamma)\varepsilon(1-\varepsilon)^{\gamma-2} \big[ 3 - (\gamma+2)\varepsilon \big].$ The sign of $h''(\varepsilon)$ for $\varepsilon \in (0,1)$ depends on the root $\varepsilon_0 = \frac{3}{\gamma+2}$. If $\gamma \le 1$, we have $\varepsilon_0 \ge 1$, ensuring $h''(\varepsilon) > 0$ for $\varepsilon \in (0, 1)$. Since $h'(0) = 0$, $h'(\varepsilon)$ remains positive on $(0, 1)$. Alternatively, if $\gamma > 1$, the root falls within $\varepsilon_0 \in (0, 1)$. Under this condition, $h''(\varepsilon) > 0$ on $(0, \varepsilon_0)$ and $h''(\varepsilon) < 0$ on $(\varepsilon_0, 1)$. This implies that $h'(\varepsilon)$ increases to a maximum and subsequently decreases. However, evaluating the limit as $\varepsilon \to 1^-$ reveals $\lim_{\varepsilon \to 1^-} h'(\varepsilon) = 1 > 0$. Since $h'(\varepsilon)$ starts at $0$, increases, and ends at a positive value of $1$ without dropping below zero, it follows that $h'(\varepsilon) > 0$ for all $\varepsilon \in (0, 1)$. Therefore, $\kappa_\chi'(\eta) = \frac{1}{r} h'(\varepsilon) \frac{d\varepsilon}{d\eta} = -\frac{1}{r\eta^2} h'(\varepsilon) < 0$ for $\eta \in (1, +\infty)$. 

Because $\kappa_\chi'(\eta) < 0$ across both sub-intervals and $\kappa_\chi(\eta)$ is continuous at $\eta = 1$, the curvature is strictly decreasing over $\eta \in (0, +\infty)$, as illustrated in \cref{fig:kappa}. 
}\end{IEEEproof}

\begin{figure}[!t]
\centering
\includegraphics[width=3in]{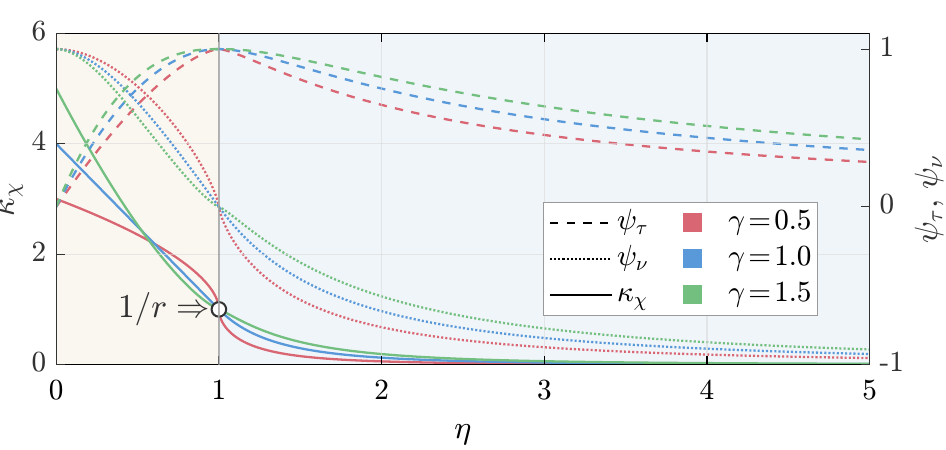}
\caption{Profiles of  $\kappa_\chi$ and $\psi_\tau, \psi_\nu$ for $\gamma \in \{0.5, 1, 1.5\}$.}
\label{fig:kappa}
\end{figure}

\subsection{Existence and Uniqueness of Forward Solutions to \eqref{eq:ODE}}
\label{app:003}
\begin{IEEEproof} First, we analyze the smoothness of the vector field and the global existence of solutions to \eqref{eq:ODE}. The smoothness of $\boldsymbol{\chi}$ depends on the nonlinear gain functions. We have $\eta(\boldsymbol{\xi})\neq 1$ and $\psi_{\tau}, \psi_{\nu} \in C^1$ for any point $\boldsymbol{\xi}\in \mathbb{R}^2 \setminus (\mathcal{W}\cup\mathcal{M}_r)$. By the chain rule for compositions, $\boldsymbol{\chi}$ is of class $C^1$ in this region. For any point $\boldsymbol{\xi}_0 \in \mathcal{M}_r$, it follows that $ \lim_{\boldsymbol{\xi}_1 \to \boldsymbol{\xi}_0,\|\boldsymbol{\xi}_1\| \in \mathcal{S}_1} \boldsymbol{\chi}(\boldsymbol{\xi}_1)=\lim_{\boldsymbol{\xi}_2 \to \boldsymbol{\xi}_0,\|\boldsymbol{\xi}_2\| \in \mathcal{S}_2} \boldsymbol{\chi}(\boldsymbol{\xi}_2)=2\boldsymbol{E}\boldsymbol{\xi}.$
    Thus, $\boldsymbol{\chi}$ is of class $C^0$ on $\mathbb{R}^2 \setminus\mathcal{W}$. By the Peano existence theorem \cite[Thm.~2.19]{teschl2012ordinary}, \eqref{eq:ODE} admits at least one forward solution for any initial point $\boldsymbol{\xi}_0 \in \mathbb{R}^2 \setminus \mathcal{W}$. Moreover, since $\|\boldsymbol{\chi}(\boldsymbol{\xi})\| = 2\|\boldsymbol{\xi}\|$ from \eqref{eq:daxiao}, the vector field satisfies the global linear growth condition, and thus the solution has no finite escape time \cite[Thm.~2.17]{teschl2012ordinary}. This guarantees the global existence of the forward solution for all $t \ge 0$. Next, we prove the uniqueness of the forward solution. The analysis is divided into the following two cases:
    
    \textit{Case 1 ($\boldsymbol{\xi}(t) \notin \mathcal{M}_r$):} The vector field $\boldsymbol{\chi}$ satisfies the local Lipschitz continuity condition in this region. By the Picard-Lindel\"of theorem \cite[Thm.~3.1]{khalil2002nonlinear}, the trajectory before reaching $\mathcal{M}_r$ is unique.

\textit{Case 2 ($\boldsymbol{\xi}(t) \in \mathcal{M}_r$):} 
From \eqref{eq:doteta}, the radial ratio along any solution satisfies
$\dot{\eta}=2\eta\psi_{\nu}(\eta):=g(\eta)$. By construction,
$g(\eta)>0$ for $\eta\in(0,1)$, $g(1)=0$, and $g(\eta)<0$ for
$\eta\in(1,+\infty)$. Suppose that $\eta(t_0)=1$. We first show that
the solution cannot leave $\eta=1$ in forward time. If there existed
$t_1>t_0$ such that $\eta(t_1)>1$, define $s=\inf\{t\in[t_0,t_1]:\eta(t)>1\}$.
By continuity, $\eta(s)=1$ and there exists $\varepsilon>0$ such that
$\eta(t)>1$ for $t\in(s,s+\varepsilon)$. Hence, for any
$t\in(s,s+\varepsilon)$, $\eta(t)-1 =\int_s^t g(\eta(\tau))\,d\tau\le 0$, because $g(\eta(\tau))<0$ whenever $\eta(\tau)>1$. This contradicts
$\eta(t)>1$. The case $\eta(t_1)<1$ can be proved analogously by using
$g(\eta)>0$ for $\eta\in(0,1)$. Therefore, any solution satisfying
$\eta(t_0)=1$ must satisfy $\eta(t)\equiv1$ for all $t\ge t_0$. Hence,
$\mathcal M_r$ is positively invariant. On $\mathcal M_r$, one has
$\psi_\tau(1)=1$ and $\psi_\nu(1)=0$, and thus \eqref{eq:ODE} reduces to
$\dot{\boldsymbol\xi}=2\boldsymbol E\boldsymbol\xi$, whose solution is
unique. Therefore, forward uniqueness holds after the trajectory reaches
$\mathcal M_r$.
\end{IEEEproof}


\subsection{Proof of Proposition \ref{propos:003}}\label{app:004}

    \begin{IEEEproof}{ Define a Lyapunov function candidate $\Phi(\eta) = (\eta - 1)^2$; then $\dot{\Phi}=-4\eta|\eta-1|\sqrt{1-\psi_\tau^2(\eta)}$ by combining \eqref{eq:equal_psi} and \eqref{eq:doteta}. We abbreviate $\eta(\boldsymbol{\xi}(0))$ as $\eta_0$ and consider the following two cases.

    \textit{Case 1:} For $\eta_0 \in (0,1]$, it follows that $\dot{\eta} \ge 0$, implying $\eta(t) \ge \eta_0$. Therefore,
    \begin{align}
        \dot{\Phi}(\eta)&= -4 \eta (1-\eta)\sqrt{(1-\eta)^{1+\gamma} \big[2 - (1-\eta)^{1+\gamma}\big]} \notag\\
        &\le -4 \eta (1-\eta)^{\frac{3+\gamma}{2}} \le -4\eta_0 (\Phi(\eta))^{\frac{3+\gamma}{4}}.\notag
    \end{align}
    
    \textit{Case 2:} For $\eta_0 \in (1,+\infty)$, it follows that $\dot{\eta} < 0$, implying $\eta(t) < \eta_0$. Then,
    \begin{align}
        \dot{\Phi}(\eta) &= -4 \eta (\eta-1)\sqrt{(1-\eta^{-1})^{1+\gamma} \big[2 - (1-\eta^{-1})^{1+\gamma}\big]} \notag\\
        &\le -4\eta (\eta-1)(1-\eta^{-1})^{\frac{1+\gamma}{2}} 
        \le -4 (\Phi(\eta))^{\frac{3+\gamma}{4}}.\notag
    \end{align}
    
    Summarizing the above two cases, $\Phi$ satisfies the differential inequality $\dot{\Phi} \le -c \Phi^\alpha$, where $\alpha = \frac{3+\gamma}{4} \in (\frac{3}{4}, 1)$ and $c>0$ is a constant. By the finite-time stability theorem~\cite[Thm.~4.2]{doi:10.1137/S0363012997321358},  $\Phi$ converges to zero in finite time. This implies that the trajectory of \eqref{eq:ODE} converges to the circle manifold $\mathcal{M}_r$ at time $T_\chi(\boldsymbol{\xi}_0)$. Moreover, the convergence time satisfies $T_\chi(\boldsymbol{\xi}_0) \le \frac{\Phi(\eta_0)^{1-\alpha}}{c(1-\alpha)}$. Taking $c = 4\eta_0$ and $c = 4$, one obtains \eqref{eq:T_chi}.
    }\end{IEEEproof}

\section{Proofs for Section \ref{sec:4}}
\subsection{Proof of Theorem \ref{thm:admissible_control}}\label{app:005}
    \begin{IEEEproof} First, we prove that $\boldsymbol{u}=[v,\omega]^\top \in \mathcal{U}$. For two constants $a,b>0$, the inequality $0 \le (a^{-1} + b^{-1})^{-1} \le \min(a, b)$ holds. Therefore, $v_r$ in \eqref{eq:cmv} satisfies $v_{-} \le v_r \le \min\big(v_{+}, \bar{\omega}/|\kappa_c|\big)$. From \eqref{eq:vlaw}, by defining $\varrho = 1 - e^{-k_v \Xi^\beta} \in [0, 1)$,  $v = (1-\varrho)v_{-} + \varrho v_r$ is a convex combination of $v_{-}$ and $v_r$, satisfying $v_{-} \le v < v_{r} \le v_+$. For \eqref{eq:omegalaw}, one can calculate that $|\omega| = v |\kappa_c| < v_r |\kappa_c| \le \left(\frac{\bar{\omega}}{|\kappa_c|}\right) |\kappa_c| = \bar{\omega}.$ The trajectory curvature of the robot is $|\kappa_c| \le \bar{\kappa}$ by \eqref{eq:kappa_c_max}. Therefore, the control input satisfies $\boldsymbol{u} \in \mathcal{U}$ and no external saturation operator is required.

    It remains to establish the almost-global $C^1$-smoothness of the proposed method. Let $\mathcal{Q}_{\mathcal{W}}=\{\boldsymbol{q}=[\boldsymbol{\xi}^\top,\theta]^\top\in\mathcal{C}:\boldsymbol{\xi} \in \mathcal{W}\}$ denote the singular set lifted to the configuration space. Since the finite-time design uses $\gamma\in(0,1)$, the functions $\psi_{\nu}$, $\angle\boldsymbol{\chi}$, and $\kappa_{\chi}$ are generally not $C^1$ on the desired manifold $\eta=1$. Consequently, define $\mathcal{Q}_{\mathcal{M}}=\{\boldsymbol{q}\in\mathcal{C}\setminus \mathcal{Q}_{\mathcal{W}}: \eta(\boldsymbol{\xi})=1\}$. Furthermore, define $\mathcal{Q}_1=\{\boldsymbol{q} \in\mathcal{C}\setminus(\mathcal{Q}_{\mathcal{W}}\cup \mathcal{Q}_{\mathcal{M}}):\sin\theta_e=0\}$, $\mathcal{Q}_2=\{\boldsymbol{q} \in\mathcal{C}:\Xi(\boldsymbol{q})=0\}=\{\boldsymbol{q}_d\}$, and $\mathcal{Q}_3=\{\boldsymbol{q}\in\mathcal{C}\setminus(\mathcal{Q}_\mathcal{W}\cup \mathcal{Q}_\mathcal{M}\cup \mathcal{Q}_1):\kappa_c=0\}$. Let $\mathcal{C}^\star=\mathcal{C}\setminus(\mathcal{Q}_\mathcal{W}\cup \mathcal{Q}_\mathcal{M}\cup \mathcal{Q}_1\cup \mathcal{Q}_2\cup \mathcal{Q}_3)$. On $\mathcal{C}^\star$, the vector field is nonsingular with $\eta\neq1$, $\sin\theta_e\neq0$, $\Xi\neq0$, and $\kappa_{c}\neq0$. As a result, variables $\theta_e$, $\kappa_{\chi}$, $\kappa_{c}$, $|\kappa_{c}|$, $v_r(|\kappa_c|)$, and $\Xi^{\beta}$ are $C^1$. Therefore, control inputs $v$ and $\omega=v\kappa_c$ are $C^1$-smooth on $\mathcal{C}^\star$.
    Finally, we demonstrate that the excluded set has zero measure. Specifically, $\mathcal{Q}_{\mathcal{W}}=\mathcal{W}\times\mathbb{S}^1$ is one-dimensional; $\mathcal{Q}_{\mathcal{M}}$ and $\mathcal{Q}_1$ are locally defined by single scalar equations in the three-dimensional configuration manifold; and $\mathcal{Q}_2=\{\boldsymbol{q}_d\}$. Thus, these sets have zero measure. For $\mathcal{Q}_3$, the equation $\kappa_c=0$ yields $\kappa_\chi=(\bar\kappa-\kappa_\chi)\lceil\sin\theta_e\rceil^\alpha$. Since $0<\kappa_\chi<\bar{\kappa}$, this implies $\sin\theta_{e}>0$ and $\sin\theta_e=\left(\frac{\kappa_\chi(\eta)}{\bar{\kappa}-\kappa_\chi(\eta)}\right)^{1/\alpha}$. The right-hand side depends solely on $\eta$. Thus, for each fixed $\boldsymbol{\xi}$, $\theta_e$ has finite admissible values, indicating $\mathcal{Q}_3$ has zero measure. Therefore, $\mathcal{Q}_\mathcal{W}\cup \mathcal{Q}_\mathcal{M}\cup \mathcal{Q}_1\cup \mathcal{Q}_2\cup \mathcal{Q}_3$ has zero measure in $\mathcal{C}$, making $\mathcal{C}^\star$ a full-measure subset. Consequently, the proposed control law is almost globally $C^1$-smooth.
     \end{IEEEproof}

\subsection{The Evolution of $\theta_e$ and $\eta$}\label{app:derivation}
\begin{IEEEproof}
To streamline the derivation, define the orthonormal basis matrices
$\boldsymbol{e}_\chi = [\hat{\boldsymbol{\chi}}, \boldsymbol{E}\hat{\boldsymbol{\chi}}] \in SO(2)$ and $\boldsymbol{e}_\xi = [\hat{\boldsymbol{\xi}}, \boldsymbol{E}\hat{\boldsymbol{\xi}}] \in SO(2)$, associated with the vector field $\boldsymbol{\chi}$ and the radial direction
$\hat{\boldsymbol{\xi}} = \boldsymbol{\xi}/\|\boldsymbol{\xi}\|$, respectively.
The heading vector is represented as $\boldsymbol{d}(\theta) = \boldsymbol{e}_\chi \boldsymbol{h}_e$, where $\boldsymbol{h}_e = [\cos\theta_e,\sin\theta_e]^\top$. Normalizing \eqref{eq:FT-C2VF} yields
$\boldsymbol{e}_\chi = \boldsymbol{e}_\xi \begin{bmatrix} \psi_\nu & -\psi_\tau \\ \psi_\tau & \psi_\nu \end{bmatrix},$
which further gives $\hat{\boldsymbol{\xi}} = \boldsymbol{e}_\chi \begin{bmatrix} \psi_\nu \\ -\psi_\tau \end{bmatrix}.$ Projecting the robot velocity $\dot{\boldsymbol{\xi}} = v\boldsymbol{d}(\theta)$
onto the radial direction $\hat{\boldsymbol{\xi}}$ gives
\begin{align*}
    \dot{\eta} &= \frac{1}{r} \dot{\boldsymbol{\xi}}^\top \hat{\boldsymbol{\xi}} = \frac{v}{r} ( \boldsymbol{e}_\chi \boldsymbol{h}_e )^\top \left( \boldsymbol{e}_\chi \begin{bmatrix} \psi_\nu \\ -\psi_\tau \end{bmatrix} \right) = \eqref{eq:eta_1}.
\end{align*}

For the direction of the vector field $\boldsymbol{\chi}$, its time derivative is given by $\dot{\angle\boldsymbol{\chi}}=\frac{
(\boldsymbol{E}\boldsymbol{\chi})^\top\boldsymbol{J}_{\boldsymbol{\chi}}\dot{\boldsymbol{\xi}}}{\|\boldsymbol{\chi}\|^2}.$ Since
$\dot{\boldsymbol{\xi}}=v\boldsymbol{d}(\theta)$, it follows that
\begin{align}
    \dot{\angle\boldsymbol{\chi}}
    &=
    v
    \underbrace{
    \frac{
    (\boldsymbol{E}\boldsymbol{\chi})^\top
    \boldsymbol{J}_{\boldsymbol{\chi}}\hat{\boldsymbol{\chi}}
    }{
    \|\boldsymbol{\chi}\|^2
    }}_{\kappa_\chi}
    \cos\theta_e
    +
    v
    \underbrace{
    \frac{
    (\boldsymbol{E}\boldsymbol{\chi})^\top
    \boldsymbol{J}_{\boldsymbol{\chi}}\boldsymbol{E}\hat{\boldsymbol{\chi}}
    }{
    \|\boldsymbol{\chi}\|^2
    }}_{\kappa_\perp}
    \sin\theta_e \notag \\
    &=
    v\left(
        \kappa_\chi\cos\theta_e
        +
        \kappa_\perp\sin\theta_e
    \right),
    \label{eq:angle_dot_curvature}
\end{align}
Therefore, combining $\dot{\theta}_e=\omega-\dot{\angle\boldsymbol{\chi}}$ with \eqref{eq:angle_dot_curvature} yields \eqref{eq:theta_e_1}.

\end{IEEEproof}

\subsection{Proof of Theorem \ref{thm:finit_time}}\label{app:007}

\begin{IEEEproof} We prove the result in four steps. 

\textit{1) Absence of additional equilibria on the positive branch:} For $\eta \in (0,1)$, $\psi_{\nu} > 0$ yields $\Upsilon_+ = (\kappa_\chi - \frac{1}{r\eta}) - k_\omega \psi_{\nu}^\alpha$ via \eqref{eq:trap_eq}. Defining $\varepsilon_1 = 1 - \eta \in (0,1)$ and substituting \eqref{eq:kappa_piecewise} provides the upper bound for the first term:
$$    
\kappa_\chi - \frac{1}{r\eta} = \frac{\varepsilon_1^\gamma}{r}\left(1+\gamma - \frac{\varepsilon_1}{1-\varepsilon_1}\right) < \frac{1+\gamma}{r}\varepsilon_1^\gamma.
$$
If $\varepsilon_1 \ge \frac{1+\gamma}{2+\gamma}$, then $\kappa_\chi - \frac{1}{r\eta} \le 0$, trivially ensuring $\Upsilon_+ < 0$. Conversely, if $\varepsilon_1 < \frac{1+\gamma}{2+\gamma}$, the gain $k_\omega$ satisfies
$$    
k_\omega = \bar{\kappa} - \kappa_\chi \overset{\eqref{eq:kappa_max_min}}{>} \bar{\kappa} - \frac{2+\gamma}{r} \overset{\eqref{eq:r_choose}}{\ge} \frac{1+\gamma}{r}.
$$
Furthermore, \eqref{eq:kn_def_a} and \eqref{eq:alpha_choose} imply
$$   
\psi_{\nu}^\alpha = \varepsilon_1^{\frac{\alpha(1+\gamma)}{2}}\left(2-\varepsilon_1^{1+\gamma}\right)^{\frac{\alpha}{2}} > \varepsilon_1^{\frac{\alpha(1+\gamma)}{2}} \ge \varepsilon_1^\gamma.
$$
Combining these bounds yields $k_\omega \psi_{\nu}^\alpha > \frac{1+\gamma}{r} \varepsilon_1^\gamma > \kappa_\chi - \frac{1}{r\eta}$, which confirms $\Upsilon_+ < 0$.

For $\eta \in (1,+\infty)$, $\psi_{\nu} \le 0$ dictates $\Upsilon_+ = (\kappa_\chi - \frac{1}{r\eta}) + k_\omega |\psi_{\nu}|^\alpha$. Letting $\varepsilon_2 = 1 - \eta^{-1} \in (0,1)$ and applying \eqref{eq:kappa_piecewise} gives
$$    
\left|\kappa_\chi - \frac{1}{r\eta}\right| = \left|-\frac{1-\varepsilon_2}{r}\varepsilon_2^\gamma(1+\gamma-\gamma \varepsilon_2)\right| < \frac{1+\gamma}{r}\varepsilon_2^\gamma.
$$
With \eqref{eq:r_choose}, it follows that
$$
k_\omega = \bar{\kappa} - \kappa_\chi \ge \bar{\kappa} - \frac{1}{r} \ge \frac{2(1+\gamma)}{r}.
$$
Additionally, \eqref{eq:kn_def_b} gives $|\psi_{\nu}|^\alpha = \varepsilon_2^{\frac{\alpha(1+\gamma)}{2}}\left(2-\varepsilon_2^{1+\gamma}\right)^{\frac{\alpha}{2}} > \varepsilon_2^{\frac{\alpha(1+\gamma)}{2}}$, which alongside \eqref{eq:alpha_choose} confirms
$$    
k_\omega |\psi_{\nu}|^\alpha > \frac{2(1+\gamma)}{r}\varepsilon_2^{\frac{\alpha(1+\gamma)}{2}} \ge \frac{2(1+\gamma)}{r}\varepsilon_2^\gamma > \left| \kappa_\chi - \frac{1}{r\eta} \right|.
$$
Thus, $\Upsilon_+ > 0$ holds for all $\eta \in (1,+\infty)$. 

\begin{figure}[!t]
\centering
\includegraphics[width=3in]{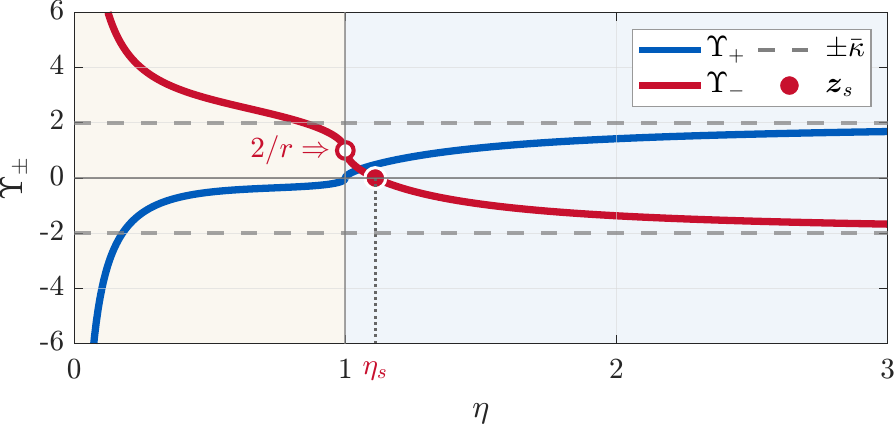}
\caption{Plot of the function $\Upsilon_{\pm}$.}
\label{fig:Upsilon}
\end{figure}

\begin{figure}[!t]
\centering
\includegraphics[width=3in]{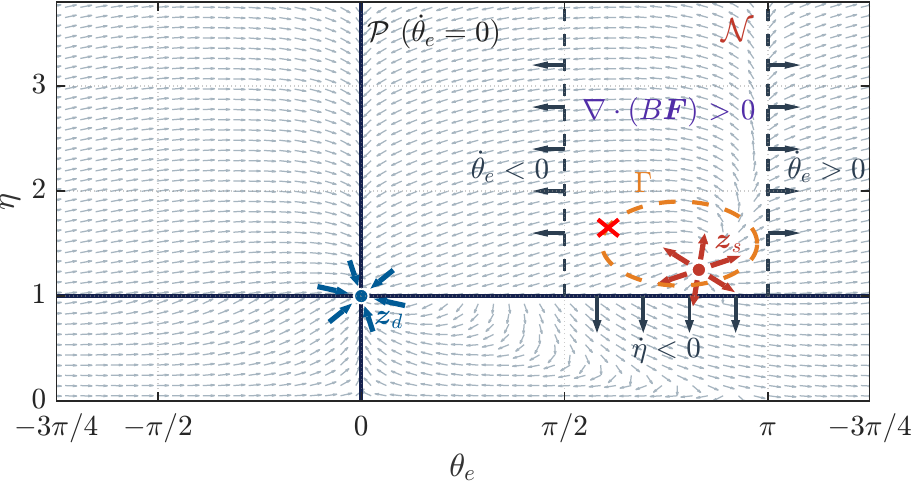}
\caption{Vector field orientation along the boundary of region $\mathcal{N}$, supporting the non-existence of nontrivial periodic orbits.}
\label{fig:limit_fig}
\end{figure}

\begin{figure*}[!ht] 
        \begin{align}
        \boldsymbol{J}(\boldsymbol{z}^*) &=
    \begin{bmatrix}
    \pm \dfrac{1}{\psi_{\nu}} \Big[ \kappa_\chi (1 \mp \alpha \psi_{\tau}) - \dfrac{\psi_{\tau}}{r\eta} (1 - \alpha) \Big] & 
    \kappa'_\chi (1 \mp \psi_{\tau}) - k_\omega' \lceil \pm\psi_{\nu}\rfloor^\alpha\mp \kappa_\perp' \psi_{\nu} \label{eq:J} \\
    \mp \dfrac{1}{r} & 
    \mp \dfrac{\psi_{\tau}'}{r \psi_{\nu}}
    \end{bmatrix},  \\
        \mathrm{Tr}(\boldsymbol{J}) &= \pm \frac{1}{\psi_{\nu}} \Big[ \kappa_\chi (1 \mp \alpha \psi_{\tau}) - \frac{\psi_{\tau}}{r\eta} (1 - \alpha) - \frac{\psi_{\tau}'}{r} \Big], \label{eq:Tr} \\
        \mathrm{Det}(\boldsymbol{J}) &= \pm \frac{1}{r} \Upsilon_\pm'= \pm \frac{1}{r} \Big[ \kappa_\chi' \left( 1 + \lceil\pm \psi_{\nu}\rfloor^\alpha \right)  \mp \alpha k_\omega |\psi_{\nu}|^{\alpha-1} \psi_{\nu}'\pm \frac{1}{r\eta^2} \Big]. \label{eq:Det}
    \end{align}
    \hrulefill
\end{figure*}

\begin{lemma}\label{lemma3}
For any equilibrium point $\boldsymbol{z}^* \in \mathcal{E} \setminus \{\boldsymbol{z}_{d}\}$ of system \eqref{eq:bihuan}, the Jacobian matrix, its trace, and its determinant are given by \eqref{eq:J}-\eqref{eq:Det}. The sign is chosen according to the branch in \eqref{eq:trap_eq} of the equilibrium.
\end{lemma}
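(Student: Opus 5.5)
The plan is direct differentiation of $\boldsymbol F$ in \eqref{eq:bihuan} at an equilibrium $\boldsymbol z^*=(\theta_e^*,\eta^*)\neq\boldsymbol z_d$. Then we simplify each entry using the equilibrium relations \eqref{eq:geometric_identity}, i.e. $(\sin\theta_e^*,\cos\theta_e^*)=\pm(\psi_\nu,\psi_\tau)$, together with the identity \eqref{eq:identity} and the derivative relation \eqref{eq:equal_eta}. First we note why $\boldsymbol z_d$ is excluded and why the computation is legitimate elsewhere. At any other equilibrium, step 1 of the proof shows $\eta^*\neq1$, hence $\psi_\nu(\eta^*)\neq0$ and $\sin\theta_e^*\neq0$. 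Consequently $\lceil\sin\theta_e\rfloor^\alpha$, $\kappa_\chi$, $\kappa_\perp$, $\psi_\tau$ and $\psi_\nu$ are all $C^1$ near $\boldsymbol z^*$, and we may divide by $\psi_\nu$.

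\textbf{Second row.} Write $\dot\eta=\frac1r(\psi_\nu\cos\theta_e-\psi_\tau\sin\theta_e)$. Its $\theta_e$-derivative is $-\frac1r(\psi_\nu\sin\theta_e+\psi_\tau\cos\theta_e)=\mp\frac1r(\psi_\nu^2+\psi_\tau^2)=\mp\frac1r$. Its $\eta$-derivative is $\frac1r(\psi_\nu'\cos\theta_e^*-\psi_\tau'\sin\theta_e^*)=\pm\frac1r(\psi_\nu'\psi_\tau-\psi_\tau'\psi_\nu)$. By \eqref{eq:equal_eta}, $\psi_\nu'=-\psi_\tau\psi_\tau'/\psi_\nu$, so this becomes $\mp\frac{\psi_\tau'}{r\psi_\nu}(\psi_\tau^2+\psi_\nu^2)=\mp\frac{\psi_\tau'}{r\psi_\nu}$.

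\textbf{First row.} Write $\dot\theta_e=\kappa_\chi(1-\cos\theta_e)-k_\omega\lceil\sin\theta_e\rfloor^\alpha-\kappa_\perp\sin\theta_e$. The $\eta$-derivative at $\boldsymbol z^*$ is immediately $\kappa_\chi'(1\mp\psi_\tau)-k_\omega'\lceil\pm\psi_\nu\rfloor^\alpha\mp\kappa_\perp'\psi_\nu$. The $\theta_e$-derivative is
\[\kappa_\chi\sin\theta_e^*-\alpha k_\omega|\sin\theta_e^*|^{\alpha-1}\cos\theta_e^*-\kappa_\perp\cos\theta_e^*.\]
Here we use the equilibrium condition $\Upsilon_\pm=0$ from \eqref{eq:trap_eq} to eliminate $k_\omega$: it gives $k_\omega|\psi_\nu|^{\alpha}\operatorname{sgn}(\pm\psi_\nu)=\kappa_\chi\mp\frac1{r\eta}$, so $\alpha k_\omega|\psi_\nu|^{\alpha-1}=\alpha(\kappa_\chi\mp\frac{1}{r\eta})/(\pm\psi_\nu)$ up to the sign bookkeeping. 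We then replace $\kappa_\perp$ via \eqref{eq:identity}, $\kappa_\perp=(\frac1{r\eta}-\kappa_\chi\psi_\tau)/\psi_\nu$. After collecting terms over $\psi_\nu$, using $\psi_\nu^2=1-\psi_\tau^2$, the entry should reduce to $\pm\frac1{\psi_\nu}[\kappa_\chi(1\mp\alpha\psi_\tau)-\frac{\psi_\tau}{r\eta}(1-\alpha)]$. The trace \eqref{eq:Tr} follows by adding the diagonal entries.

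\textbf{Determinant.} Rather than expanding the product of entries, we argue structurally. Along the curve $\Gamma_\pm=\{\dot\eta=0\}$ near $\boldsymbol z^*$, parametrized by $\eta$ through $(\sin\theta_e,\cos\theta_e)=\pm(\psi_\nu,\psi_\tau)$, the first component of $\boldsymbol F$ equals exactly $\Upsilon_\pm(\eta)$; this is how \eqref{eq:trap_eq} was derived, using \eqref{eq:identity}. Since $F_2\equiv0$ on $\Gamma_\pm$, the tangent $\boldsymbol t=(\mathrm d\theta_e/\mathrm d\eta,1)$ satisfies $\boldsymbol J\boldsymbol t=(\Upsilon_\pm',0)^\top$. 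A $2\times2$ linear-algebra identity then gives $\mathrm{Det}(\boldsymbol J)=\Upsilon_\pm'\cdot(-\partial_{\theta_e}F_2)=\pm\frac1r\Upsilon_\pm'$. Finally, we differentiate $\Upsilon_\pm$ term by term, using $k_\omega'=-\kappa_\chi'$ and $\frac{\mathrm d}{\mathrm d\eta}\lceil\pm\psi_\nu\rfloor^\alpha=\pm\alpha|\psi_\nu|^{\alpha-1}\psi_\nu'$, to obtain \eqref{eq:Det}.

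The main obstacle is the sign bookkeeping in the $(1,1)$ entry, where $\operatorname{sgn}(\pm\psi_\nu)$ interacts with the branch sign. I would first treat the $+$ branch with $\psi_\nu>0$ explicitly, then note that the remaining cases follow by the substitution $\theta_e\mapsto\theta_e+\pi$.
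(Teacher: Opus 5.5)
Your proposal is correct. For the four entries and the trace you follow the paper's route: differentiate $\boldsymbol F$, substitute $(\sin\theta_e^*,\cos\theta_e^*)=\pm(\psi_\nu,\psi_\tau)$, then eliminate $k_\omega$ using $\Upsilon_\pm(\eta^*)=0$ and $\kappa_\perp$ using \eqref{eq:identity}.

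The determinant is where you genuinely differ. The paper only says that \eqref{eq:Det} follows by expanding $J_{11}J_{22}-J_{12}J_{21}$ from \eqref{eq:J11_} and \eqref{eq:J12}--\eqref{eq:J22}. Carrying that out requires \eqref{eq:equal_eta} and, less visibly, the $\eta$-derivative of \eqref{eq:identity}, namely $\kappa_\chi'\psi_\tau+\kappa_\chi\psi_\tau'+\kappa_\perp'\psi_\nu+\kappa_\perp\psi_\nu'=-\frac{1}{r\eta^2}$. That identity is what collapses those four terms into the $\pm\frac{1}{r\eta^2}$ term of $\Upsilon_\pm'$.

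Your nullcline argument avoids all of this. Along the $C^1$ curve $\Gamma_\pm$ one has $F_2\equiv0$ and $F_1=\Upsilon_\pm$, so $\boldsymbol J\boldsymbol t=(\Upsilon_\pm',0)^\top$ with $\boldsymbol t=(s,1)^\top$. Hence $J_{22}=-J_{21}s$ and $\mathrm{Det}(\boldsymbol J)=-J_{21}(J_{11}s+J_{12})=\pm\frac1r\Upsilon_\pm'$. This never uses the explicit form of $J_{12}$ and does not even need $J_{21}\neq0$. It also explains why the determinant is proportional to $\Upsilon_\pm'$, which is exactly what step 2 exploits. The paper's route is a mechanical verification once the entries are known.

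Two small fixes.

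First, $\eta^*\neq1$ does not come from step 1, which concerns the positive branch away from $\eta=1$. It follows directly: at $\eta=1$ one has $\psi_\nu=0$, which forces $\sin\theta_e^*=0$, and $\theta_e^*=\pi$ is not an equilibrium because $\Upsilon_-(1)=2/r\neq0$.

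Second, drop the planned reduction to the $+$ branch via $\theta_e\mapsto\theta_e+\pi$.
\begin{itemize}
\item $\boldsymbol F$ has no such symmetry: $F_1(\theta_e+\pi,\eta)=2\kappa_\chi(\eta)-F_1(\theta_e,\eta)$, and the two branches satisfy different equilibrium equations. That is why the bracket in $J_{11}$ contains $1\mp\alpha\psi_\tau$ rather than being the $+$-branch bracket with its sign flipped.
\item The case you would treat explicitly ($+$ branch, $\psi_\nu>0$) contains no equilibria by step 1, while $\boldsymbol z_s$ lies on the $-$ branch with $\psi_\nu<0$.
\item No case split is needed at all. The identity $k_\omega|\psi_\nu|^{\alpha-1}(\pm\psi_\nu)=k_\omega\lceil\pm\psi_\nu\rfloor^\alpha=\kappa_\chi\mp\frac{1}{r\eta}$ holds exactly for both branches and both signs of $\psi_\nu$. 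So your elimination formula is already exact, not merely correct ``up to sign bookkeeping''.
\end{itemize}
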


\begin{IEEEproof}
For brevity, the independent variable $\eta$ is omitted in the subsequent derivations. Let the Jacobian matrix be denoted as $\boldsymbol{J}(\boldsymbol{z}^*) = [J_{ij}]_{2 \times 2}$. Using the relation $\frac{\partial}{\partial\theta_e}\lceil\sin\theta_e\rfloor^\alpha = \alpha|\sin\theta_e|^{\alpha-1}\cos\theta_e$, the partial derivative of $\dot{\theta}_e$ with respect to $\theta_e$ is computed. Moreover, noting that $\eta^* \neq 1$ and $\psi_{\nu} \neq 0$ at the undesired equilibrium point, the expression for $J_{11}$ can be simplified as
\begin{align}
&J_{11} = \kappa_\chi\sin\theta_e^* - \alpha k_\omega |\sin\theta_e^*|^{\alpha-1} \cos\theta_e^* - \kappa_\perp \cos\theta_e^* \notag \\
&\overset{\eqref{eq:geometric_identity}}{=} \pm \Big[ \kappa_\chi \psi_{\nu} - \alpha k_\omega |\psi_{\nu}|^{\alpha-1} \psi_{\tau} - \kappa_\perp \psi_{\tau} \Big] \label{eq:J11_} \\
&\overset{\eqref{eq:identity},\eqref{eq:trap_eq}}{=} \pm \Bigg[ \kappa_\chi \psi_{\nu} - \frac{\alpha \psi_{\tau}}{\psi_{\nu}} \left( \pm \kappa_\chi - \frac{1}{r\eta} \right) - \left( \frac{1}{r\eta \psi_{\nu}} - \kappa_\chi \frac{\psi_{\tau}}{\psi_{\nu}} \right) \psi_{\tau} \Bigg] \notag \\
&= \pm \frac{1}{\psi_{\nu}} \Bigg[ \kappa_\chi (1 \mp \alpha \psi_{\tau}) - \frac{\psi_{\tau}}{r\eta} (1 - \alpha) \Bigg]. \label{eq:J11}
\end{align}
Similarly, by taking the partial derivatives of the remaining terms in \eqref{eq:bihuan}, the following components are derived:
\begin{align}
J_{12} &= \kappa_\chi' (1 \mp \psi_{\tau}) - k_\omega' \lceil\pm \psi_{\nu}\rfloor^\alpha \mp \kappa_\perp' \psi_{\nu}, \label{eq:J12} \\
J_{21} &= \frac{1}{r} \big( \mp \psi_{\tau}^2  \mp \psi_{\nu}^2\big) = \mp \frac{1}{r},\label{eq:J21} \\
J_{22} &= \pm \frac{1}{r} (\psi_{\nu}' \psi_{\tau} - \psi_{\tau}' \psi_{\nu}) \overset{\eqref{eq:equal_eta}}{=} \mp \frac{\psi_{\tau}'}{r \psi_{\nu}}. \label{eq:J22}
\end{align}
Adding \eqref{eq:J11} and \eqref{eq:J22} yields \eqref{eq:Tr}. The determinant in \eqref{eq:Det} is computed using \eqref{eq:J11_},\eqref{eq:J12}-\eqref{eq:J22}.
\end{IEEEproof}

\textit{2) Uniqueness and instability of the undesired equilibrium $\boldsymbol{z}_s$:} From \eqref{eq:trap_eq}, if $\eta \in (0,1)$, we have $\Upsilon_- = \kappa_\chi + \frac{1}{r\eta} + k_\omega \psi_{\nu}^\alpha > 0$. If $\eta \in (1,+\infty)$, the function becomes $\Upsilon_- = \kappa_\chi + \frac{1}{r\eta} - k_\omega|\psi_{\nu}|^\alpha$. Evaluating the boundaries yields $\Upsilon_-(1) = \frac{2}{r}$ and $\lim_{\eta \to +\infty} \Upsilon_-(\eta) = -\bar{\kappa}$. By the intermediate value theorem, $\Upsilon_-$ has at least one root $\eta_s \in (1,+\infty)$.

Utilizing \eqref{eq:Det}, along with $\kappa_\chi'<0$ in Lemma \ref{lem:001} and $\psi_{\nu}'<0$, we obtain
\begin{equation}\label{eq:eqeqeqeqeq}
    \Upsilon_-' = \kappa'_\chi(1 + |\psi_{\nu}|^\alpha) - \frac{1}{r\eta^2} + \alpha k_\omega |\psi_{\nu}|^{\alpha-1} \psi_{\nu}' < 0.
\end{equation}
Since $\Upsilon_-$ is strictly decreasing, the root $\eta_s$ is unique (see Fig.~\ref{fig:Upsilon}). Substituting $\boldsymbol{z}_s$ into \eqref{eq:Tr}, the trace of the Jacobian evaluates to
\begin{align*}
\mathrm{Tr}(\boldsymbol{J}(\boldsymbol{z}_s)) 
&= -\frac{1}{\psi_{\nu}}  \underbrace{\left( \kappa_\chi - \frac{\psi_{\tau}}{r\eta_s} - \frac{\psi_{\tau}'}{r} \right)}_{\overset{\eqref{eq:lemma1}}{=}0} -\alpha \frac{\psi_{\tau}}{\psi_{\nu}} \left( \kappa_\chi + \frac{1}{r\eta_s} \right)  \notag \\
&= -\frac{\alpha \psi_{\tau}}{\psi_{\nu}} \left( \kappa_\chi + \frac{1}{r\eta_s} \right) > 0,
\end{align*}
noting that $\psi_{\nu}<0$ while all other parameters are positive. From \eqref{eq:Det} and \eqref{eq:eqeqeqeqeq}, the Jacobian determinant is $\mathrm{Det}(\boldsymbol{J}(\boldsymbol{z}_s)) = -\frac{1}{r}\Upsilon_-'(\eta_s) > 0$. Based on standard equilibrium classification criteria \cite[Sect.~2.2]{khalil2002nonlinear}, the eigenvalues of $\boldsymbol{J}(\boldsymbol{z}_s)$ have strictly positive real parts, proving that $\boldsymbol{z}_s$ is an unstable node or focus (i.e., a repeller).

\textit{3) Non-existence of nontrivial periodic orbits:} Consider the one-dimensional $\mathcal{P}=\{z\in\mathcal{H}:\theta_e=0\}$. From \eqref{eq:bihuan}, $\dot{\theta}_{e}|_{\theta_{e}=0}\equiv0,$ so $\mathcal{P}$ cannot be crossed by any trajectory. Hence a nontrivial periodic orbit, if it exists, cannot intersect $\mathcal{P}$; otherwise it would coincide with a trajectory contained in $\mathcal{P}$, which is not a nontrivial closed orbit. Therefore it is contained in $\mathcal{H}\setminus\mathcal{P}$. Here a connected component means a maximal connected subset. On the cylinder, $\mathcal{H}\setminus\mathcal{P}=(\mathbb{S}^1\setminus\{0\})\times\mathbb{R}_{>0}$ is diffeomorphic to the planar strip $(0,2\pi)\times\mathbb{R}_{>0}$ and is in fact connected. Thus any nontrivial
periodic orbit can be represented as a Jordan curve in this planar strip, and the Poincaré index theorem applies \cite[Cor.~2.1]{khalil2002nonlinear}. Since $\boldsymbol{z}_d\in\mathcal{P}$, such a periodic orbit cannot enclose $\boldsymbol{z}_d$ and must enclose $\boldsymbol{z}_s$. At $\boldsymbol{z}_s$, one has $\eta_s > 1$, $\sin\theta_{es} = -\psi_{\nu} > 0$, and $\cos\theta_{es} = -\psi_{\tau} < 0$. Thus, $\boldsymbol{z}_s$ resides in the open region $\mathcal{N} = \{ (\theta_e, \eta) :\theta_e \in (\pi/2, \pi), \eta > 1 \}$. We will establish that $\mathcal{N}$ is negatively invariant; trajectories may exit $\mathcal{N}$ but can never enter it. The boundary of $\mathcal{N}$ comprises three segments:

 B.1: $\eta = 1$ and $\theta_e \in (\pi/2, \pi)$. Here, $\psi_{\nu}(1)=0$, $\psi_{\tau}(1)=1$, and $\sin\theta_e > 0$. Consequently, $\dot{\eta}|_{\eta=1} = -\frac{1}{r}\sin\theta_e < 0$, meaning trajectories cross the boundary downwards.
 
B.2: $\theta_e = \pi$ and $\eta > 1$. Evaluating the vector field yields $\dot{\theta}_e|_{\theta_e=\pi} = 2\kappa_\chi > 0$, meaning trajectories exit to the right.
 
 B.3: $\theta_e = \pi/2$ and $\eta > 1$. Utilizing $\kappa < \frac{1}{r}$ from \eqref{eq:kappa_piecewise}, $\psi_{\nu} < 0$, $\psi_{\nu}' > 0$, and \eqref{eq:kappa_prep}, we deduce $\kappa_\perp = \frac{\psi_{\nu}}{r\eta} + \frac{\psi_{\nu}'}{r} > -\frac{1}{r}$. Therefore,
    \begin{align}\dot{\theta}_e\Big|_{\pi/2} 
    &= 2\kappa(\eta) - \bar{\kappa} - \kappa_\perp(\eta)  \notag\\
    &\overset{\eqref{eq:r_choose}}{<} \frac{2}{r} - \frac{3+2\gamma}{r} - \left(-\frac{1}{r}\right)  = -\frac{2\gamma}{r} < 0,
    \end{align}
causing trajectories to exit to the left.

Therefore, any hypothetical periodic orbit enclosing $\boldsymbol{z}_s$ must lie entirely inside $\mathcal{N}$. With the Dulac function $B(\eta) = \eta > 0$, the divergence of the weighted vector field $B\boldsymbol{F}$ is
\begin{equation*}\nabla \cdot (B\boldsymbol{F}) =-\alpha \eta k_\omega |\sin\theta_e|^{\alpha-1} \cos\theta_e >0
\end{equation*}
in $\mathcal{N}$, as shown in Fig.~\ref{fig:limit_fig}. The Bendixson-Dulac criterion \cite[Prop.~1.207]{2006Ordinary} excludes all periodic orbits in  $\mathcal{N}$. Therefore \eqref{eq:bihuan} admits no nontrivial periodic orbits, and hence no limit cycles.

\begin{lemma}\label{lemma:sbas}
    For any initial condition $\boldsymbol{z}_0\in \mathcal{H}$, let $\boldsymbol{z}(\cdot;\boldsymbol{z}_0)$ denote the maximal forward solution, defined on $[0,T_{\max}(\boldsymbol{z}_0))$. Define
    $\mathcal{W}_\eta:=\left\{\boldsymbol{z}_0\in\mathcal{H}:\inf_{0\leq t<T_{\max}(\boldsymbol{z}_0)}\eta(t;\boldsymbol{z}_0)=0\right\}.$
    Then $\mathcal{W}_\eta$ has measure zero in $\mathcal{H}$.
\end{lemma}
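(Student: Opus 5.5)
The plan is to lift the problem back to the configuration space, where the position dynamics are regular, and exploit the fact that reaching $\boldsymbol{\xi}=\boldsymbol{0}$ is a codimension-two event in $\mathcal{C}$. Under the reparameterization $\mathrm{d}\tau=v\,\mathrm{d}t$, the closed loop in configuration coordinates reads $\boldsymbol{\xi}'=\boldsymbol{d}(\theta)$, $\theta'=\kappa_c(\eta,\theta_e)$, with $|\kappa_c|\le\bar{\kappa}$ by \eqref{eq:kappa_c_max}. The map $\pi:(\boldsymbol{\xi},\theta)\mapsto(\theta-\angle\boldsymbol{\chi}(\boldsymbol{\xi}),\|\boldsymbol{\xi}\|/r)$ is a smooth submersion on $\{\boldsymbol{\xi}\neq\boldsymbol{0}\}$, and its fibres are the rotations about the origin, by the rotational equivariance established in the proof of Lemma~\ref{lem:001}. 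Hence a set $\mathcal{S}\subset\mathcal{H}$ has measure zero if and only if $\pi^{-1}(\mathcal{S})$ has measure zero in $\mathcal{C}$. It therefore suffices to show that the set $\widetilde{\mathcal{W}}$ of initial configurations whose trajectories come arbitrarily close to $\boldsymbol{\xi}=\boldsymbol{0}$ is null in $\mathcal{C}$.

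\textbf{Step 1 (reduce $\inf\eta=0$ to actually hitting the origin).} On any interval where $\|\boldsymbol{\xi}\|\le\delta$, the path is unit-speed with curvature at most $\bar{\kappa}$. Near the origin, \eqref{eq:bihuan} gives $\eta'\approx\cos\theta_e/r$ together with $\kappa_\perp\sim 1/(r\eta)$, so in $(\eta,\theta_e)$ the motion looks like straight-line motion in polar coordinates. I would then show that one of two things happens: either $\eta$ reaches $0$ at a finite $\tau$, and correspondingly $T_{\max}<\infty$ because the solution leaves $\mathcal{H}$; or $\eta$ has a strictly positive local minimum on each near pass. To rule out infinitely many passes with minima tending to $0$, I would use the Step~3 asymptotics, namely that almost every trajectory converges to $\boldsymbol{z}_d$ or to $\boldsymbol{z}_s$, both of which have $\eta\ge1$. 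Since $\eta\to0$ cannot occur along a trajectory that has an $\omega$-limit inside $\{\eta\ge1\}$, the null-set question reduces to the set of configurations that reach $\{\boldsymbol{\xi}=\boldsymbol{0}\}\times\mathbb{S}^1$ in finite $\tau$.

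\textbf{Step 2 (backward-flow measure argument).} Let $\mathcal{Z}=\{\boldsymbol{\xi}=\boldsymbol{0}\}\times\mathbb{S}^1$. Since $\kappa_c$ is bounded, the field $(\boldsymbol{d}(\theta),\kappa_c)$ can be extended arbitrarily but boundedly to a neighbourhood of $\mathcal{Z}$. Consider the hitting set $\bigcup_{s\ge0}\Phi_{-s}(\mathcal{Z})$, where $\Phi$ is the backward flow. This set is the image of the two-dimensional set $\mathcal{Z}\times[0,\infty)$ under a map that is locally Lipschitz wherever the field is locally Lipschitz, so it has measure zero in the three-dimensional space $\mathcal{C}$. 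The field fails to be locally Lipschitz only on $\{\eta=1\}$, on $\{\sin\theta_e=0\}$ (because of the fractional power $\lceil\cdot\rfloor^\alpha$), and at $\boldsymbol{\xi}=\boldsymbol{0}$ itself. I would handle these as follows.
\begin{itemize}
\item $\mathcal{P}=\{\theta_e=0\}$ is invariant, and on it $\eta$ evolves toward $1$ by \eqref{eq:doteta}, so no trajectory in $\mathcal{P}$ approaches the origin.
\item $\{\theta_e=\pi\}$ is crossed transversally, since $\dot\theta_e=2\kappa_\chi>0$ there.
\item $\{\eta=1\}$ is crossed transversally whenever $\sin\theta_e\neq0$.
\end{itemize}
Transversal crossings are isolated in time. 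Splitting each backward trajectory into countably many Lipschitz pieces and taking a countable union of null images therefore still yields a null set.

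\textbf{Step 3 and main obstacle.} The hardest part is Step~1: excluding trajectories with $\inf\eta=0$ that are neither captured in finite time nor convergent. To control them I would invoke a Poincar\'e--Bendixson argument on $\mathcal{H}$, using the facts already established in Theorem~\ref{thm:finit_time}. There are exactly two equilibria, no nontrivial periodic orbits exist, and I would add the absence of homoclinic or heteroclinic cycles, which follows from $\boldsymbol{z}_s$ being a repeller. With these facts, every bounded forward orbit accumulating away from $\eta=0$ converges to an equilibrium. Orbits that are unbounded in $\eta$ are excluded because $\dot\eta<0$ for large $\eta$ near the field direction. The remaining delicate point is to show that an orbit cannot oscillate with $\eta$ repeatedly dipping toward $0$. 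Here I would try a near-origin estimate: in the blown-up coordinate $\rho=\ln\eta$, the quantity $\eta\sin\theta_e$, which is the lateral offset of the robot's line of motion from the origin, is approximately conserved. A dip to small $\eta$ then forces that offset to be small, and I would aim to show that this is incompatible with returning after the robot leaves the region $\eta<1$.
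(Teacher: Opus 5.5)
Your lift to $\mathcal{C}$ and the transfer of null sets through $(\boldsymbol{\xi},\theta)\mapsto(\theta_e,\eta)$ are fine. The map is not smooth across $\eta=1$, but for fixed $\boldsymbol{\xi}$ it is a translation in $\theta$, so Fubini in polar coordinates suffices.

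The genuine gap is Step~2, and that is where the content of the lemma lies. Writing the capture set as a Lipschitz image of $\mathcal{Z}\times[0,\infty)$ requires backward solutions issuing from $\mathcal{Z}$ to be unique and Lipschitz in their endpoint. Nothing you establish gives that, for three reasons:
\begin{itemize}
\item In $(\boldsymbol{\xi},\theta)$ the closed loop is discontinuous at $\boldsymbol{\xi}=\boldsymbol{0}$, since $\theta_e=\theta-\angle\boldsymbol{\chi}(\boldsymbol{\xi})$ depends on the direction of approach.
\item Near the incoming direction its Lipschitz constant grows at least like $\|\boldsymbol{\xi}\|^{-1}$, through $\partial_{\boldsymbol{\xi}}\theta_e$. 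This is not integrable along an orbit arriving at unit speed, so Gronwall fails on the final stretch.
\item Any orbit that hits the origin must arrive heading straight in, i.e.\ with $\theta_e\to\pi$. It therefore runs into the non-Lipschitz locus of $\lceil\sin\theta_e\rfloor^\alpha$ exactly at $\mathcal{Z}$, where your ``transversal crossings are isolated'' bookkeeping says nothing.
\end{itemize}
Extending the field ``arbitrarily but boundedly'' onto $\mathcal{Z}$ does not constrain which true orbits terminate there. Nor is codimension-two capture automatic under bounded curvature: a saturated finite-time pure-pursuit law, which nulls the bearing error to the origin in finite time, captures an open set of configurations exactly at $\boldsymbol{\xi}=\boldsymbol{0}$.

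What must be proved is that the points of $\mathcal{H}$ whose orbits run into $\eta=0$ form a one-dimensional set. That requires the local analysis the paper performs:
\begin{itemize}
\item Work in $(\eta,\varphi)$ with $\varphi=\theta_e+\varphi_\chi(\eta)$ and rescale time by $\eta$. This absorbs the $1/\eta$ singularity, and the system extends continuously to $\eta=0$ with boundary equilibria $\boldsymbol{p}_0,\boldsymbol{p}_\pi$.
\item $\boldsymbol{p}_0$ is excluded because $\eta'>0$ nearby.
\item Approaches to $\boldsymbol{p}_\pi$ are confined to a cone $|\varpi|\le M\eta$.
\item The blow-up $x_1=\eta^\alpha$, $x_2=\varpi/\eta$ gives $\sin\theta_e=\eta\Lambda$ with $\Lambda>0$, which renders the fractional term $C^1$. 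It yields a saddle with eigenvalues $-\alpha/r$ and $2/r$, whose one-dimensional stable manifold carries every captured orbit.
\end{itemize}

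Steps~1 and~3 do not close either. Reducing $\inf\eta=0$ to finite-$\tau$ capture by appealing to ``almost every trajectory converges to $\boldsymbol{z}_d$ or $\boldsymbol{z}_s$'' is circular. In step~4 of the proof of Theorem~\ref{thm:finit_time}, that convergence is derived only for $\boldsymbol{z}_0\notin\mathcal{W}_\eta$, precisely because Poincar\'e--Bendixson needs an annulus bounded away from $\eta=0$, which is what this lemma supplies.

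The remaining case is orbits whose $\eta$ dips toward $0$ repeatedly without capture, and you leave it as an aim (approximate conservation of $\eta\sin\theta_e$). That question is also local at $\eta=0$. It is most naturally handled in the same compactified $(\eta,\varphi)$ picture, where the boundary $\eta=0$ is invariant, carries the explicit flow $\varphi'=-\frac{1}{r}\sin\varphi$ from $\boldsymbol{p}_\pi$ to $\boldsymbol{p}_0$, and has $\eta'>0$ near $\boldsymbol{p}_0$. Without this local analysis at the origin, your proposal does not establish the lemma.
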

\begin{figure}[!t]
\centering
\includegraphics[width=1in]{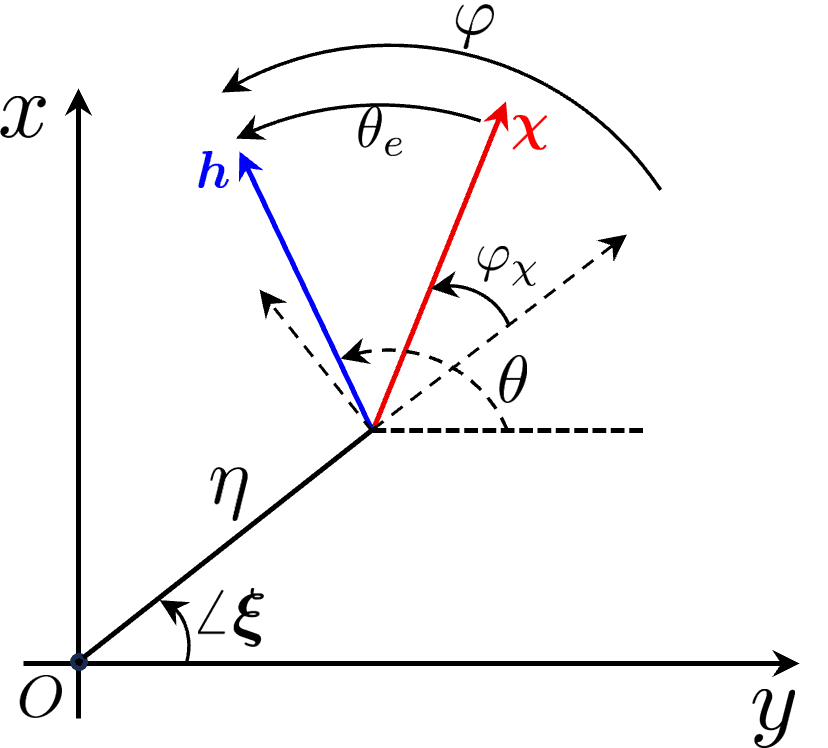}
\caption{Angular relationships in the $(\eta,\varphi)$ coordinates.}
\label{fig:shiyi}
\end{figure}
\begin{IEEEproof}
Let $\varphi_\chi(\eta)=\operatorname{atan2}(\psi_\tau(\eta),\psi_\nu(\eta))$ and $\varphi=\theta_e+\varphi_\chi(\eta)$, as shown in \cref{fig:shiyi}. Since $\cos\varphi_\chi=\psi_\nu$ and $\sin\varphi_\chi=\psi_\tau$, \eqref{eq:bihuan} gives $\dot\eta=\frac{1}{r}\cos\varphi$. For $\eta>0$, multiplying the vector field by the positive factor $\eta$ only reparameterizes time and preserves the orbits. Hence, by orbital equivalence \cite[Def.~2.4]{Kuznetsov2023ElementsOA}, the phase trajectories are identical to those of
\begin{equation}\label{eq:polar_sys}
\eta'=\frac{\eta}{r}\cos\varphi,\quad\varphi'=\eta\kappa_c(\eta,\varphi-\varphi_\chi(\eta))-\frac{1}{r}\sin\varphi.
\end{equation}
It admits a continuous extension to $\eta=0$. On this boundary, $\eta'=0$ and $\varphi'=-\frac{1}{r}\sin\varphi$, so the only boundary equilibria are $\boldsymbol{p}_0=(0,0)$ and $\boldsymbol{p}_{\pi}=(0,\pi)$. Near $\boldsymbol{p}_0$, we have $\cos\varphi>0$, and hence $\eta'=\frac{\eta}{r}\cos\varphi>0$ for $\eta>0$. Thus, no interior trajectory can approach $\boldsymbol{p}_0$ in forward time. Therefore, any interior trajectory satisfying $\inf_{0\le t<T_{\max}}\eta(t)=0$ can approach the boundary only through $\boldsymbol{p}_{\pi}$. To analyze $\boldsymbol{p}_{\pi}$, let $\varpi=\varphi-\pi$ and define $K_0(\eta,\varpi)=\kappa_c(\eta,\pi+\varpi-\varphi_\chi(\eta))$. Then, near $\boldsymbol{p}_{\pi}$, system \eqref{eq:polar_sys} becomes
\begin{equation}\label{eq:polar_sys1}
\eta'=-\frac{\eta}{r}\cos\varpi,\quad\varpi'=\frac{1}{r}\sin\varpi+\eta K_0(\eta,\varpi).
\end{equation}
Since $\kappa_c$ is bounded near $\eta=0$, $K_0$ is also bounded. Choose a sufficiently small $\delta>0$ such that $|\sin\varpi|\geq|\varpi|/2$ for $0<|\varpi|<\delta$, and select $M>2r|K_0|$. For $\varpi>M\eta$, $\varpi^{\prime}\geq\frac{\varpi}{2r}-|K_0|\eta>\left(\frac{M}{2r}-|K_0|\right)\eta>0$. For $\varpi<-M\eta$, $\varpi^{\prime}\leq\frac{\varpi}{2r}+|K_0|\eta<-\left(\frac{M}{2r}-|K_0|\right)\eta<0$. Hence, in a sufficiently small neighborhood outside the conical region $\mathcal S_{M,\delta}=\{(\eta,\varpi):0<\eta<\delta,\ |\varpi|\le M\eta\}$, $\varpi\varpi^{\prime}>0$, indicating that $|\varpi|$ strictly increases. Thus, trajectories cannot converge to $(0,0)$. Any internal trajectory approaching $\boldsymbol{p}_{\pi}$ must eventually enter $\mathcal S_{M,\delta}$. Within $\mathcal S_{M,\delta}$, the blow-up transformation $x_1=\eta^\alpha$ and $x_2=\frac{\varpi}{\eta}$ yields $\eta=x_1^{\frac{1}{\alpha}}$ and $\varpi=x_1^{\frac{1}{\alpha}}x_2$ \cite[Sect.~7.1]{Kuehn2015Multiple}. Applying this to \eqref{eq:polar_sys1} generates the blown-up system:
\begin{align*} 
x_1'&=-\frac{\alpha}{r}x_1\cos(x_1^{\frac{1}{\alpha}}x_2),\\ x_2'&=K(x_1,x_2)+\frac{x_2}{r}\left[\operatorname{sinc}(x_1^{\frac{1}{\alpha}}x_2)+\cos(x_1^{\frac{1}{\alpha}}x_2)\right],\end{align*}
where $K(x_1,x_2)=\kappa_c\left(x_1^{\frac{1}{\alpha}},\pi+x_1^{\frac{1}{\alpha}}x_2-\varphi_\chi(x_1^{\frac{1}{\alpha}})\right)$ and $\operatorname{sinc}\varpi=\frac{\sin\varpi}{\varpi}$. Next, $K$ is verified to be $C^1$ near the considered equilibrium point. The Taylor expansions of \eqref{eq:kt_def} and \eqref{eq:kn_def} yield $\psi_\tau(\eta)=(1+\gamma)\eta+O(\eta^2)$ and $\psi_\nu(\eta)=1+O(\eta^2)$. These subsequently give $\varphi_\chi(\eta)=(1+\gamma)\eta+O(\eta^2)$, ensuring the ratio $\frac{\varphi_\chi(\eta)}{\eta}$ admits a $C^1$ extension at $\eta=0$ with $\lim_{\eta \to 0^+}\frac{\varphi_\chi(\eta)}{\eta} = 1+\gamma$. Furthermore, \eqref{eq:lemma1} and \eqref{eq:kappa_max_min} imply $\kappa_\chi(\eta)=\frac{2(1+\gamma)}{r}+O(\eta)$. The fractional power term constitutes the only potential source of non-smoothness in $\kappa_c$. Define $\Lambda(\eta,x_2)=\left(\frac{\varphi_\chi(\eta)}{\eta}-x_2\right)\operatorname{sinc}\left(\varphi_\chi(\eta)-\eta x_2\right)$, and extend it to $\eta=0$ by setting $\Lambda(0,x_2)=1+\gamma-x_2$. Because both $\frac{\varphi_\chi(\eta)}{\eta}$ and the $\operatorname{sinc}$ function are $C^1$, $\Lambda$ remains $C^1$ within the analyzed neighborhood. Applying the relation $\sin(\cdot)=(\cdot)\operatorname{sinc}(\cdot)$ yields $\sin\left(\pi+\eta x_2-\varphi_\chi(\eta)\right)=\eta\Lambda(\eta,x_2)$. At $x_1=0$, $K(0,x_2)=\kappa_\chi(0)$, which simplifies the dynamics to $x_2'=\kappa_\chi(0)+\frac{2}{r}x_2$. Consequently, the blown-up system possesses an equilibrium point at $\bar{\boldsymbol{p}}_{\pi}=(0,x_{2\pi})$, where $x_{2\pi}=-\frac{r\kappa_\chi(0)}{2}=-(1+\gamma)$. Substituting this value yields $\Lambda(0,x_{2\pi})=1+\gamma-x_{2\pi}=2(1+\gamma)>0$. By continuity, the condition $\Lambda(\eta,x_2)>0$ holds within a sufficiently small neighborhood of $(0,x_{2\pi})$. Therefore, within this local region, $\left\lceil\sin\left(\pi+\eta x_2-\varphi_\chi(\eta)\right)\right\rceil^\alpha=x_1\Lambda(x_1^{\frac{1}{\alpha}},x_2)^\alpha.$ Because $\Lambda>0$ and all constituent terms are $C^1$, the blown-up system is $C^1$ around this equilibrium point.

Evaluating the Jacobian matrix at the equilibrium $\bar{\boldsymbol{p}}_{\pi}$ yields $\boldsymbol{J}(\bar{\boldsymbol{p}}_{\pi})=\left[\begin{smallmatrix}-\frac{\alpha}{r} & 0 \\ * & \frac{2}{r}\end{smallmatrix}\right].$ The corresponding eigenvalues are $-\frac{\alpha}{r}<0$ and $\frac{2}{r}>0$, which confirms that $\bar{\boldsymbol{p}}_{\pi}$ is a saddle point. Based on the local stable manifold theorem \cite[Thm.~9.4]{teschl2012ordinary}, the local stable set of $\bar{\boldsymbol{p}}_{\pi}$ constitutes a one-dimensional submanifold. Under the blow-down mapping $\Pi(x_1,x_2)\mapsto(\eta,\varphi)=(x_1^{\frac{1}{\alpha}},\pi+x_1^{\frac{1}{\alpha}}x_2)$, the image of this stable manifold in the original $(\eta,\varphi)$ plane remains a one-dimensional submanifold. Because $\mathcal H$ is a two-dimensional manifold, any one-dimensional submanifold inherently has measure zero. Consequently, the set of initial states converging to $\boldsymbol{p}_\pi$ has measure zero. All initial conditions that satisfy $\inf_{0\le t<T_{\max}}\eta(t)=0$ are contained within the global backward-time trajectories of this one-dimensional stable manifold. The uniqueness of solutions ensures that the global stable set preserves its one-dimensional structure, thereby remaining a measure-zero subset of $\mathcal H$. Finally, the coordinate transformation $(\theta_e,\eta)\mapsto(\varphi,\eta)$ is a local $C^1$ diffeomorphism, and the orbital equivalence transformation preserves the trajectory sets. Therefore, the set $\mathcal{W}_\eta$ has measure zero.
\end{IEEEproof}

\textit{4) Almost-global finite-time stability of $\boldsymbol{z}_d$:}  For any initial condition $\boldsymbol{z}_0 \in\mathcal{H}\setminus(\mathcal{W}_\eta\cup\{\boldsymbol{z}_s\}),$ the definition of $\mathcal{W}_\eta$ gives a constant $\varepsilon>0$ such that $\eta(t)\geq\varepsilon$ for all $0\leq t<T_{\max}$. It remains to exclude escape to infinity. As $\eta \to +\infty$, $\psi_\nu(\eta)\to-1,\psi_\tau(\eta)\to0,\kappa_\chi(\eta)\to0,\kappa_\perp(\eta)\to0.$ Therefore, for sufficiently large $\eta$, the heading-error dynamics in \eqref{eq:bihuan} is dominated by the term $-\bar{\kappa}\left\lceil\sin\theta_e\right\rfloor^\alpha$. In finite time the heading error enters an arbitrarily small neighborhood of $\theta_e = 0$; in that neighborhood, the radial dynamics satisfies $\dot{\eta}<0$ because $\psi_\nu < 0$. Consequently, no positive semi-orbit can escape to $\eta=+\infty$. Hence every solution starting outside $\mathcal{W}_\eta$  is ultimately contained in a compact
annulus $\mathcal{A}_{\varepsilon,R} = [\varepsilon,R]\times \mathbb{S}^1,0<\varepsilon<R<+\infty$. Its $\omega$-limit set is thus nonempty, compact, connected, and invariant \cite[Sect.~6.3]{teschl2012ordinary}. The Poincaré–Bendixson theorem \cite[Lemma~2.1]{khalil2002nonlinear}, applied on this annulus, implies that the $\omega$-limit set must contain an equilibrium or a periodic orbit. Nontrivial periodic orbits have been excluded, and the only equilibria are $z_{d}$ and the repeller $z_{s}$. Therefore every trajectory starting from $\mathcal{H}\setminus(\mathcal{W}_\eta\cup\{\boldsymbol{z}_s\})$ converges asymptotically to $z_{d}$. For any compact neighborhood $\mathcal{B}_\delta = \{\boldsymbol{z}:\|\boldsymbol{z} - \boldsymbol{z}_0\|\leq \delta\}$, there exists a finite time $T_1 > 0$ such that trajectories enter and remain in $\mathcal{B}_\delta$ for all $t \ge T_1$. Let $\Delta_1 = \eta - 1$ and $\Delta_2 = \sin\theta_e$. Inside a sufficiently small neighborhood $\mathcal{B}_{\delta^\prime}$, $\Delta_1\to 0$ and $\Delta_2\to 0$ as $\boldsymbol{z}\to \boldsymbol{z}_0$.

Consider the Lyapunov candidate $V_\theta = 1 - \cos\theta_e \ge 0$, whose time derivative is
\begin{equation}\label{eq:dotv}
\dot{V}_\theta = \kappa_\chi\sin\theta_e (1-\cos\theta_e) - \kappa_\perp\sin^2\theta_e - k_\omega|\sin\theta_e|^{1+\alpha}.
\end{equation}
To analyze the term $-\kappa_\perp\sin^2\theta_e$, we define the ratio $\Theta(\eta, \theta_e) = \frac{|\kappa_\perp\sin^2\theta_e|}{|\sin\theta_e|^{1+\alpha}} = |\kappa_\perp| |\sin\theta_e|^{1-\alpha}.$
The following lemma is presented to estimate this ratio.

\begin{lemma}\label{eq:lemma_delta_relation}
     There exists a constant $C>0$ and a finite time $T_f<+\infty$ such that for all $t > T_f$,
    $$|\Delta_2(t)| \leq C|\Delta_1(t)|^{\frac{1+\gamma}{2}}.$$
\end{lemma}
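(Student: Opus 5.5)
The plan is a comparison (barrier) argument in a small neighbourhood $\mathcal{B}_{\delta'}$ of $\boldsymbol{z}_d$. Every trajectory under consideration enters this neighbourhood and stays there after the time $T_1$ obtained in Step 4. The right scale for $\Delta_2=\sin\theta_e$ is the size of $|\psi_\nu|$. From \eqref{eq:kt_def} and \eqref{eq:kn_def}, on both sides of $\eta=1$,
\[
|\psi_\nu(\eta)| = \sqrt{2}\,|\Delta_1|^{\frac{1+\gamma}{2}}\bigl(1+O(|\Delta_1|)\bigr),
\qquad
|\psi_\nu'(\eta)| \asymp |\Delta_1|^{\frac{\gamma-1}{2}}.
\]
Hence \eqref{eq:kappa_prep} gives $|\kappa_\perp|\le c_\perp |\Delta_1|^{\frac{\gamma-1}{2}}$ near $\eta=1$. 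Inside $\mathcal{B}_{\delta'}$ we also have $k_\omega\ge \underline{k}:=\bar\kappa-\kappa_\chi(1-\delta')>0$ by Proposition \ref{propos:001}, and $|\kappa_\chi|\le 2(1+\gamma)/r$. I will work with the barrier function
\[
\rho(\boldsymbol{z}) := |\Delta_2| - C|\Delta_1|^{\frac{1+\gamma}{2}}
\]
and prove two things: the sublevel set $\{\rho\le 0\}\cap\mathcal{B}_{\delta'}$ is forward invariant, and every trajectory enters it in finite time. $T_f$ is then the entry time.

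\textbf{Invariance.} On the boundary $\rho=0$ with $\Delta_2\neq0$, I would differentiate $|\Delta_2|$ using the first row of \eqref{eq:bihuan}, i.e. $\dot\theta_e=\kappa_\chi(1-\cos\theta_e)-\kappa_\perp\sin\theta_e-k_\omega\lceil\sin\theta_e\rfloor^\alpha$. Multiplying by $\cos\theta_e\approx1$ gives
\[
\tfrac{d}{dt}|\Delta_2|\le -\tfrac{\underline{k}}{2}|\Delta_2|^\alpha + c_\perp|\Delta_1|^{\frac{\gamma-1}{2}}|\Delta_2| + O(|\Delta_2|^2).
\]
On the boundary, $|\Delta_2|=C|\Delta_1|^{\frac{1+\gamma}{2}}$. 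The leading negative term is then of order $C^\alpha|\Delta_1|^{\frac{\alpha(1+\gamma)}{2}}$, while the $\kappa_\perp$ term is of order $C|\Delta_1|^{\gamma}$. For the other half of the barrier, the second row of \eqref{eq:bihuan} gives $|\dot\eta|\le (|\psi_\nu|+|\Delta_2|)/r\le (\sqrt2+C)|\Delta_1|^{\frac{1+\gamma}{2}}/r$. Therefore
\[
\tfrac{d}{dt}\Bigl(C|\Delta_1|^{\frac{1+\gamma}{2}}\Bigr)=O\bigl(C(1+C)|\Delta_1|^{\gamma}\bigr).
\]
Condition \eqref{eq:alpha_choose} says exactly that $\frac{\alpha(1+\gamma)}{2}<\gamma$. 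So once $\delta'$ is small enough (depending on $C$), the $|\Delta_1|^{\frac{\alpha(1+\gamma)}{2}}$ term dominates all $|\Delta_1|^\gamma$ terms, and $\dot\rho<0$ on $\rho=0$. The degenerate point $\Delta_1=\Delta_2=0$ is $\boldsymbol{z}_d$ itself, and $\mathcal{P}$ is invariant, so neither needs separate treatment. This is the step where the parameter choice \eqref{eq:alpha_choose} is consumed.

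\textbf{Finite-time entry.} This is the step I expect to be the main obstacle, because outside the region $|\kappa_\perp|$ is not controlled by $|\Delta_2|$ (the exponent $\frac{\gamma-1}{2}$ is negative). I would first pick $C$ large, say $C\ge 4\sqrt2$, so that $|\Delta_2|>C|\Delta_1|^{\frac{1+\gamma}{2}}$ implies $|\psi_\nu|\le|\Delta_2|/2$. Then the second row of \eqref{eq:bihuan} gives $\dot\eta\approx-\Delta_2/r$ with $|\dot\eta|\ge|\Delta_2|/(4r)$, and $\operatorname{sgn}\dot\eta=-\operatorname{sgn}\Delta_2$ stays fixed while $\rho>0$, since $\Delta_2$ cannot change sign without passing through $\mathcal{P}$. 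So $\Delta_1$ moves monotonically at speed at least $|\Delta_2|/(4r)$.

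I would then argue by contradiction, assuming $\rho>0$ on $[T_1,\infty)$. Two cases arise.
\begin{itemize}
\item[(a)] $\Delta_1$ stays on one side of $0$. Then it is monotone and bounded, so $\int|\Delta_2|\,dt<\infty$. But $\boldsymbol{z}(t)\to\boldsymbol{z}_d$ together with the dissipative term $-k_\omega|\Delta_2|^\alpha$ suggests $|\Delta_2|$ reaches $0$ in finite time, at which point $\rho\le0$. The gap here is that finite-time decay of $|\Delta_2|$ is not immediate, precisely because the $\kappa_\perp$ term is not yet controlled.
\item[(b)] $\Delta_1$ crosses $0$. At the crossing instant $\Delta_2\neq0$ and $\rho>0$ trivially. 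Afterwards $|\Delta_1|$ grows at rate $\gtrsim|\Delta_2|$. Moreover, on the side reached after crossing, $\psi_\nu$ has the sign that makes $-\kappa_\perp\sin\theta_e$ dissipative. Indeed, $\psi_\nu'\!<0$ gives $\kappa_\perp\approx\psi_\nu'/r<0$ near $\eta=1$, so $-\kappa_\perp\sin^2\theta_e\le0$ in \eqref{eq:dotv}. Then \eqref{eq:dotv} yields $\dot V_\theta\le -\tfrac{\underline k}{2}|\Delta_2|^{1+\alpha}$, so $|\Delta_2|$ decays in finite time while $|\Delta_1|^{\frac{1+\gamma}{2}}$ increases. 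Hence $\rho$ hits $0$ in finite time.
\end{itemize}
I would try to reduce case (a) to the same sign observation, $\kappa_\perp<0$ near $\eta=1$. If it holds on both sides, \eqref{eq:dotv} gives finite-time decay of $V_\theta$ directly and the two cases merge. Verifying that sign claim (or bounding the $\kappa_\chi(1-\cos\theta_e)$ remainder by $O(|\Delta_2|^2)$ when it fails) is what I would check first.
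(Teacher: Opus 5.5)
Your invariance step is sound, and it is slightly cleaner than the paper's. The paper first confines the trajectory to the quadrant $\Delta_1<0<\Delta_2$ (or its mirror image) and then uses the sign of $\dot\Delta_1$ on the boundary $g=0$. Your two-sided estimate $\bigl|\tfrac{d}{dt}|\Delta_1|^{m}\bigr|=O(|\Delta_1|^{\gamma})$ on $\{\rho=0\}$, with $m=\frac{1+\gamma}{2}$, handles all quadrants at once. Both arguments use \eqref{eq:alpha_choose} in the same way, in the form $m\alpha<\gamma$.

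The genuine gap is the finite-time entry step. Case (a) is left open, and case (b) rests on a sign error. Near $\eta=1$ one has $\psi_\nu'\to-\infty$ on \emph{both} sides, so $\kappa_\perp\approx\psi_\nu'/r<0$ on both sides. Hence the term $-\kappa_\perp\sin^2\theta_e=|\kappa_\perp|\sin^2\theta_e$ in \eqref{eq:dotv} is nonnegative and unbounded, i.e.\ destabilizing. There is no ``good side'' after the crossing, and your proposed fallback for case (a) fails for the same reason. In fact, if that term were dissipative, \eqref{eq:dotv} would give finite-time decay of $V_\theta$ directly and the lemma would be superfluous; the lemma is needed precisely because the term is not dissipative.

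The missing idea is that entry needs no information about $\dot\Delta_2$. It follows from the radial equation together with the already established convergence $\boldsymbol{z}(t)\to\boldsymbol{z}_d$. Under your contradiction hypothesis $\rho>0$ on $[T_1,\infty)$ with $C\ge4\sqrt2$, you already know that $\operatorname{sgn}\dot\eta=-\operatorname{sgn}\Delta_2$ is constant and $|\dot\eta|\ge|\Delta_2|/(4r)$. Inserting $|\Delta_2|>C|\Delta_1|^{m}$ gives $|\dot\Delta_1|\ge\frac{C}{4r}|\Delta_1|^{m}$, with $\Delta_1$ strictly monotone. Both cases then close quickly:
\begin{itemize}
\item In case (b), $|\Delta_1|$ is strictly increasing after the crossing, which contradicts $\Delta_1\to0$.
\item In case (a), $\Delta_1$ must approach $0$ monotonically. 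Then $\frac{d}{dt}|\Delta_1|\le-\frac{C}{4r}|\Delta_1|^{m}$ with $m<1$ forces $\Delta_1=0$ at a finite time at which $\Delta_2\neq0$. There $\dot\Delta_1=-\Delta_2/r\neq0$, so $\Delta_1$ crosses and you are back in case (b).
\end{itemize}
This is essentially the paper's argument. It shows that $\Delta_1$ leaves the quadrant $\Delta_1\Delta_2>0$ in finite time via $\dot\Delta_1\le-c_0\Delta_1^{m}$, driven by the $\psi_\nu$ term. Inside the remaining quadrant it then derives $-\dot\Delta_1\ge c_3(-\Delta_1)^{m}>0$ on $\{g\ge0\}$, which is incompatible with $-\Delta_1\to0$.
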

\begin{IEEEproof} Let $m = \frac{1+\gamma}{2}$. Since $0<\gamma<1$, $m\in(\frac{1}{2},1)$. As $(\Delta_1,\Delta_2)\to(0,0)$, \eqref{eq:kt_def}, \eqref{eq:kn_def}, and \eqref{eq:kappa_prep} yield $\psi_\tau-1=O(|\Delta_1|^{2m})$, $\psi_\nu=O(|\Delta_1|^m)$, and $\kappa_\perp=O(|\Delta_1|^{m-1})$. From \eqref{eq:kappa_max_min} and \eqref{eq:r_choose}, there exists a constant $\underline{k}>0$ such that the gain $k_\omega = \bar{\kappa} - \kappa_\chi \ge \underline{k} >0$. If $\Delta_2$ does not reach zero in finite time, its continuity and $\Delta_2(t)\to 0$ ensure its sign eventually becomes constant. Assume $\Delta_2>0$ eventually. The trajectory then resides in the $\Delta_1<0$ region. To verify this, assume $\Delta_1>0$ is sufficiently small. Equation \eqref{eq:bihuan} yields $\dot{\Delta}_1=\frac1r\left(\psi_\nu\cos\theta_e-\psi_\tau\Delta_2\right)$. Since $\psi_\nu<0$, $\psi_\tau>0$, and $\cos\theta_e>0$, $\dot{\Delta}_1\leq -c_0\Delta_1^m$ for some constant $c_0>0$. Thus, $\Delta_1$ cannot remain positive while converging to zero. Moreover, at $\Delta_1=0$ and $\Delta_2>0$, the radial equation yields $\dot{\Delta}_1=-\frac{\Delta_2}{r}<0$. Hence, there exists a finite time $T_1$ such that $\Delta_1(t)<0$ and $\Delta_2(t)>0$ for all $t>T_1$. For $\Delta_1<0$ and $\Delta_2>0$, choose a sufficiently large constant $M>0$ and define $g=\Delta_2-M(-\Delta_1)^m$. We demonstrate the trajectory enters $g<0$ in finite time and remains there. On the boundary $g=0$, $\Delta_2=M(-\Delta_1)^m$. Combining this with the radial equation for a sufficiently large $M$ yields the following inequality for some constant $c_1>0$:
\begin{equation}\label{eq:delta_1}
-\dot{\Delta}_1\geq c_1(-\Delta_1)^m\end{equation}
Since $\Delta_2=\sin\theta_e$, \eqref{eq:bihuan} gives
$$\dot{\Delta}_2=\cos\theta_e\,\dot{\theta}_e=
\cos\theta_e\left[
\kappa_\chi(1-\cos\theta_e)-\kappa_\perp\Delta_2-k_\omega\Delta_2^\alpha
\right].$$ 
Applying the previous order estimates on $g=0$ yields $\kappa_\chi(1-\cos\theta_e)=O(\Delta_2^2)=O((-\Delta_1)^{2m})$, $-\kappa_\perp\Delta_2=O(|\Delta_1|^{m-1})\,O((-\Delta_1)^m)=O((-\Delta_1)^{2m-1})$, and $-k_\omega\Delta_2^\alpha\leq -\underline{k}M^\alpha(-\Delta_1)^{m\alpha}$. From \eqref{eq:alpha_choose}, $m\alpha<2m-1<2m$. As $\Delta_1\to0^-$, the negative term $-\underline{k}M^\alpha(-\Delta_1)^{m\alpha}$ dominates. In a sufficiently small neighborhood on $g=0$, the following holds for some constant $c_2>0$:
\begin{equation}
\label{eq:delta_2}
\dot{\Delta}_2\leq -c_2(-\Delta_1)^{m\alpha}\end{equation}
\eqref{eq:delta_1} and \eqref{eq:delta_2} together imply
\begin{align*}
\dot g &=\dot{\Delta}_2-Mm(-\Delta_1)^{m-1}(-\dot{\Delta}_1)\\& \leq -c_2(-\Delta_1)^{m\alpha}-Mmc_1(-\Delta_1)^{2m-1}\leq0.
\end{align*}
The boundary $g=0$ is thus unidirectional, preventing the trajectory from escaping $g<0$ to $g>0$. If the trajectory never enters $g<0$ in finite time, eventually $g\geq0$, meaning $\Delta_2\geq M(-\Delta_1)^m$. Applying the radial equation and order estimates with a sufficiently large $M$ yields $-\dot{\Delta}_1 \geq c_3(-\Delta_1)^m>0$ for some constant $c_3>0$. This contradicts $-\Delta_1(t)\to0$. The trajectory must enter $g<0$ in finite time and remain there due to this unidirectionality. Thus, there exists a finite time $T_f$ such that for all $t>T_f$, $\Delta_2(t)\leq M(-\Delta_1(t))^m=M|\Delta_1(t)|^m.$ The $\Delta_2<0$ case follows by similar reasoning and is proven by defining $\tilde g=-\Delta_2-M\Delta_1^m$.
\end{IEEEproof} 

Based on Lemma \ref{eq:lemma_delta_relation}, the limit of $\Theta(\eta, \theta_e)$ can be estimated. Near $z_d$, $\kappa_\perp(\Delta_1)=O(|\Delta_1|^{\frac{\gamma-1}{2}})$ implies that $|\kappa_\perp|\le C_\perp|\Delta_1|^{m-1}$ holds for some constant $C_\perp>0$. Therefore, in conjunction with Lemma \ref{eq:lemma_delta_relation},
\begin{align*}
   \lim_{\boldsymbol{z} \to \boldsymbol{z}_{d}} \Theta(\eta, \theta_e)
    &=
    \lim_{\Delta_1 \to 0} |\kappa_\perp||\sin\theta_e|^{1-\alpha} \\
    &\le
    \lim_{\Delta_1 \to 0} C_\perp C^{1-\alpha}
    |\Delta_1|^{\gamma -\frac{(1+\gamma)\alpha}{2}} 
\end{align*}
Given the parameter constraint \eqref{eq:alpha_choose}, we guarantee that
$\lim_{\boldsymbol{z} \to \boldsymbol{z}_{d}} \Theta(\eta, \theta_e) = 0. $ Thus, there exists a finite time $T_2 \ge T_1$ ensuring $\Theta(\eta, \theta_e) \le \frac{1}{4}\underline{k}$ for all $t \ge T_2$.

Similarly, using \eqref{eq:lemma1} and \eqref{eq:kt_def}, we see that the term $\kappa_\chi\sin\theta_e(1-\cos\theta_e)= O(|\Delta_2|^3)$ is strictly dominated by the stabilizing term $k_\omega|\sin\theta_e|^{1+\alpha} = O(|\Delta_2|^{1+\alpha})$ within a smaller neighborhood $\mathcal{B}_\delta \subseteq \mathcal{B}_{\delta^\prime}$. Hence, a finite time $T_3 \ge T_2$ exists such that $\frac{\kappa_\chi\sin\theta_e(1-\cos\theta_e)}{k_\omega|\sin\theta_e|^{1+\alpha}} \le \frac{1}{4}\underline{k}$ for all $t \ge T_3$.

Consequently, for all $t \ge T_3$, the combined disturbance is bounded above by
$\big| \kappa_\chi\sin\theta_e (1-\cos\theta_e) - \kappa_\perp\sin^2\theta_e \big| \le  \frac{1}{2} \underline{k}|\sin\theta_e|^{1+\alpha}$. Substituting this bound into \eqref{eq:dotv}, we obtain
\begin{equation}\label{eq:yxsj}
\dot{V}_{\theta}\leq \left(\frac{1}{2} \underline{k} - k_\omega \right)|\sin\theta_{e}|^{1+\alpha} \leq -\frac{1}{2} \underline{k} |\sin\theta_{e}|^{1+\alpha}.
\end{equation}
Inside $\mathcal{B}_\delta$, the heading error obeys $|\theta_e| \le \delta < \pi$, so $|\sin\theta_e| = 2\left|\sin\left(\frac{\theta_e}{2}\right)\right|\cos\left(\frac{\theta_e}{2}\right) \ge \sqrt{2}\cos\left(\frac{\delta}{2}\right) V_\theta^{\frac{1}{2}}$. Inserting this into \eqref{eq:yxsj} results in $\dot{V}_{\theta} \leq -c_\theta V_{\theta}^{\frac{1+\alpha}{2}}$, where $c_\theta = \frac{1}{2}\underline{k}\left(\sqrt{2}\cos\left(\frac{\delta}{2}\right)\right)^{1+\alpha} > 0$ and $\frac{1+\alpha}{2} \in (0.5, 1)$. By the finite-time stability theorem, $V_\theta$ reaches zero in finite time, indicating $\theta_e = 0$ is achieved in finite time, after which the trajectory remains trapped on the invariant manifold $\mathcal{P}$. Subsequently, applying Proposition \ref{propos:003}, the trajectory converges to the circle manifold $\mathcal{M}_r$ in finite time, concluding the proof of almost-global finite-time stability for $\boldsymbol{z}_{d}$. \end{IEEEproof}

\subsection{Proof of Theorem \ref{thm:velocity_convergence}}\label{app:008}
\begin{IEEEproof}
 By Theorem \ref{thm:finit_time}, trajectories starting outside $\mathcal{Q}_c$ reach $\mathcal{M}_r$ in finite time, achieving $\eta = 1$ and $\theta_e = 0$. The heading thus aligns with the FT-C2VF, simplifying \eqref{eq:Xi} to $\Xi = \|\boldsymbol{\xi} - \boldsymbol{\xi}_d\|^2$. To analyze the subsequent motion along $\mathcal{M}_r$, let $s(t) \in [0, 2\pi r)$ denote the remaining arc length to $\boldsymbol{\xi}_d$, governed by $\dot{s} = -v$. Geometrically, the metric evaluates to $\Xi(s) = \|\boldsymbol{\xi} - \boldsymbol{\xi}_d\|^2 = 4r^2 \sin^2\left(\frac{s}{2r}\right)$. For $s \in (0, 2\pi r)$, $\Xi(s) > 0$ guarantees $v > v_- \ge 0$, ensuring $s(t)$ strictly decreases. We consider two cases for $v_-$.

\textit{Case 1 ($v_- > 0$):} Since \eqref{eq:vlaw} ensures $v \ge v_-$, we have $\dot{s} \le -v_-$. Thus, $s(t)$ reaches zero in finite time bounded by $\Delta T = 2\pi r / v_-$. Thereafter, the robot traverses $\mathcal{M}_r$ periodically, revisiting $\boldsymbol{q}_d$ every $\Delta T$.

\textit{Case 2 ($v_- = 0$):} Here, the velocity reduces to $v = v_r \big(1 - e^{-k_v \Xi^\beta(s)}\big)$ with $v_r > 0$. As $s(t)$ strictly decreases, it enters $s \in [0, \pi r]$ in finite time. Applying Jordan's inequality ($\sin x \ge \frac{2}{\pi}x$ for $x \in [0, \pi/2]$) yields $\Xi(s) \ge 4r^2 \left( \frac{2}{\pi} \frac{s}{2r} \right)^2 = \frac{4}{\pi^2} s^2$. Furthermore, for a sufficiently small $x \ge 0$, there exists a constant $c_1 > 0$ such that $1 - e^{-x} \ge c_1 x$. Since $\Xi(s) \to 0$ as $s \to 0$, the velocity is ultimately lower-bounded by
$$
v \ge v_r c_1 k_v \Xi^\beta(s) \ge v_r c_1 k_v \left(\frac{4}{\pi^2}\right)^\beta s^{2\beta} = c_2 s^{2\beta},
$$
where $c_2 = v_r c_1 k_v \big(\frac{4}{\pi^2}\big)^\beta > 0$. This yields $\dot{s} \le -c_2 s^{2\beta}$. Because $\beta \in (0, 0.5)$ implies $2\beta \in (0, 1)$, $s(t)$ reaches zero in finite time. Upon reaching $s=0$, $\Xi(0)=0$ yields $v=0$, and the trajectory remains stationary at $\boldsymbol{q}_d$ afterwards.
\end{IEEEproof}

\bibliographystyle{IEEEtran}
\bibliography{IEEEabrv.bib}

\end{document}